\documentclass{article}

\usepackage[margin=1in]{geometry}
\usepackage{tikzit}

\tikzstyle{new style 0}=[fill=white, draw=black, shape=circle]

\tikzstyle{new edge style 0}=[->]

\usepackage[utf8]{inputenc} 
\usepackage[T1]{fontenc}    
\usepackage{hyperref}      
\usepackage{url}            

\usepackage{booktabs}       
\usepackage{amsfonts}       
\usepackage{nicefrac}       
\usepackage{microtype}      
\usepackage{xcolor}         
\usepackage{enumerate}

\usepackage{bbm}
\usepackage{amsmath}
\usepackage{amssymb}
\usepackage{pifont}
\usepackage{amsthm}
\usepackage{graphicx}
\usepackage{subcaption}

\usepackage{algorithm}
\usepackage{algpseudocode}

\usepackage{multicol}
\usepackage{diagbox}
\usepackage[capitalise]{cleveref}

\newtheorem{theorem}{Theorem}

\newtheorem{proposition}{Proposition}
\newtheorem{lemma}{Lemma}
\newtheorem{assumption}{Assumption}
\newtheorem{remark}{Remark}
\newtheorem{corollary}{Corollary}
\usepackage{natbib}

\usepackage{color}
\usepackage{hyperref}
\usepackage{url}
\hypersetup{
    colorlinks,
    linkcolor={red!80!black},
    citecolor={blue!90!black},
    urlcolor={blue!90!black}
}

\title{Achieving Sample and Computational Efficient Reinforcement Learning by Action Space Reduction via Grouping}

%

\author{%
Yining Li \thanks{Department of ECE, The Ohio State University. Email: \{li.12312, ju.171\}@osu.edu}
\and
Peizhong Ju \footnotemark[1]
\and
Ness Shroff\thanks{Department of ECE and CSE, The Ohio State University. Email: shroff.11@osu.edu}
}
\date{}

\begin{document}

\newcommand{\bcal}[1]{\boldsymbol{\mathcal{#1}}}
\newcommand{\bs}{\boldsymbol{s}}
\newcommand{\ba}{\boldsymbol{a}}
\newcommand{\bz}{\boldsymbol{z}}
\newcommand{\bh}{\boldsymbol{h}}
\newcommand{\bR}{R}
\newcommand{\Br}{r}
\newcommand{\BbR}{R}
\newcommand{\BR}{R}

\newcommand{\bbetap}{\boldsymbol{\beta}_P} 
\newcommand{\bbetar}{\boldsymbol{\beta}_R} 
\newcommand{\betap}{\beta_P}
\newcommand{\betarn}{\beta_{R,n}}
\newcommand{\betar}{\beta_R}
\newcommand{\betapopt}{\beta_P^{*}}
\newcommand{\betaropt}{\beta_R^{*}}
\newcommand{\hbetapopt}{\hat{\beta}_P^{*}}
\newcommand{\hbetaropt}{\hat{\beta}_R^{*}}

\newcommand{\mbE}{\mathbb{E}}
\newcommand{\mT}{\mathcal{T}}
\newcommand{\mG}{\mathcal{G}}
\newcommand{\mGG}{\mG}
\newcommand{\gG}{g}
\newcommand{\gnG}[1]{g_{#1}}
\newcommand{\gnum}{|g|}
\newcommand{\gnnum}[1]{|g_{#1}|}
\newcommand{\DG}{\mathcal{D}_G}
\newcommand{\D}{\mathcal{D}}

\newcommand{\MDP}{\mathcal{M}}
\newcommand{\mbP}{\mathbb{P}}
\newcommand{\mbPpi}{\mathbb{P}^{\pi}}
\newcommand{\mbPpiprime}{\mathbb{P}^{'\pi}}
\newcommand{\mbPMDP}{\mathbb{P}_{\MDP}}
\newcommand{\mbPMDPpi}{\mathbb{P}_{\MDP}^{\pi}}
\newcommand{\tauMDP}{\tau_{\MDP}}
\newcommand{\QMDP}{Q_{\MDP}}
\newcommand{\VMDP}{V_{\MDP}}
\newcommand{\piMDP}{\pi_{\MDP}}
\newcommand{\mTMDP}{\mT_{\MDP}}

\newcommand{\MDPh}{\hat{\mathcal{M}}}
\newcommand{\VMDPh}{V_{\MDPh}}

\newcommand{\MDPG}{\mathcal{M}_G}
\newcommand{\mbPG}{\mathbb{P}_G}
\newcommand{\BRG}{R_G}
\newcommand{\QMDPG}{Q_{\MDPG}}
\newcommand{\VMDPG}{V_{\MDPG}}
\newcommand{\tauMDPG}{\tau_{\MDPG}}
\newcommand{\piG}{\pi_G}
\newcommand{\PiG}{\Pi_{G}}

\newcommand{\MDPGh}{\hat{\mathcal{M}}_G}
\newcommand{\mbPGh}{\hat{\mathbb{P}}_G}
\newcommand{\BRGh}{\hat{{R}}_G}
\newcommand{\QMDPGh}{Q_{\MDPGh}}
\newcommand{\VMDPGh}{V_{\MDPGh}}
\newcommand{\tauMDPGh}{\tau_{\MDPGh}}
\newcommand{\mTMDPGh}{\mT_{\MDPGh}}

\newcommand{\MDPone}{\mathcal{M}_1}
\newcommand{\QMDPone}{Q_{\MDPone}}
\newcommand{\VMDPone}{V_{\MDPone}}
\newcommand{\piMDPone}{\pi_{\MDPone}}
\newcommand{\tauMDPone}{\tau_{\MDPone}}

\newcommand{\Qh}{\hat{Q}}
\newcommand{\Vh}{\hat{V}}
\newcommand{\outpiT}[1]{\pi^{\mathsf{o}}_{#1}}
\newcommand{\piGT}[1]{\pi_{G,#1}}

\newcommand{\MDPGab}[2]{\hat{\mathcal{M}}_{G,#1,#2}}
\newcommand{\mbPGab}[2]{\hat{\mathbb{P}}_{G,#1,#2}}
\newcommand{\BRGab}[2]{\hat{{R}}_{G,#1,#2}}
\newcommand{\QMDPGab}[2]{Q_{\hat{\mathcal{M}}_{G,#1,#2}}}
\newcommand{\VMDPGab}[2]{V_{\hat{\mathcal{M}}_{G,#1,#2}}}
\newcommand{\outpi}{\pi^{\mathsf{o}}}
\newcommand{\outpiopt}{\pi^{\mathsf{o}*}}
\newcommand{\inpi}{\pi^{\mathsf{i}}}
\newcommand{\outPi}{\Pi^{\mathsf{o}}}
\newcommand{\inPi}{\Pi^{\mathsf{i}}}

\newcommand{\taupik}[3]{\tau^{\pi}_{\mathcal{K}_{#1,#2,#3}}}

\newcommand{\epsopt}{\epsilon_{opt}}
\newcommand{\Var}{\mathrm{Var}}

\newcommand{\mO}{\Tilde{O}}
\newcommand{\TO}{\Tilde{O}}
\newcommand{\epsg}{\epsilon_{\text{group}}}
\newcommand{\epsp}{\epsilon_{\text{perf}}}
\newcommand{\epsapprox}{\epsilon_{\text{approx}}}
\newcommand{\epsest}{\epsilon_{\text{est}}}
\newcommand{\epssamp}{\epsilon_{\text{samp}}}
\newcommand{\epsalg}{\epsilon_{\text{alg}}}
\newcommand{\hepsp}{\hat{\epsilon}_{\text{perf}}}
\newcommand{\Csamp}{\mathcal{C}_{\text{samp}}}
\newcommand{\Ccomp}{\mathcal{C}_{\text{comp}}}

\newcommand{\etap}{\eta_P}
\newcommand{\etar}{\eta_R}

\newcommand{\f}{f}
\newcommand{\hf}{\hat{f}}
\newcommand{\BAh}{\Bar{\mathcal{A}}_h}
\newcommand{\Ah}{{\mathcal{A}}_h}
\newcommand{\BA}{\Bar{\mathcal{A}}}

\newcommand{\Kopt}{K^*(\gnum)}
\newcommand{\Topt}{T^*(\gnum)}

\newcommand{\pihatG}{\hat{\pi}_G}

\newcommand{\goptnum}{|g^*|}
\newcommand{\ghoptnum}{|\hat{g}^*|}

\maketitle
\bibliographystyle{apalike}

\begin{abstract}
Reinforcement learning often needs to deal with the exponential growth of states and actions when exploring optimal control in high-dimensional spaces (often known as the curse of dimensionality). In this work, we address this issue by learning the inherent structure of action-wise similar MDP to appropriately balance the performance degradation versus sample/computational complexity.  In particular, we partition the action spaces into multiple groups based on the similarity in transition distribution and reward function, and build a linear decomposition model to capture the difference between the intra-group transition kernel and the intra-group rewards. Both our theoretical analysis and experiments reveal a \emph{surprising and counter-intuitive result}: while a more refined grouping strategy can reduce the approximation error caused by treating actions in the same group as identical, it also leads to increased estimation error when the size of samples or the computation resources is limited. This finding highlights the grouping strategy as a new degree of freedom that can be optimized to minimize the overall performance loss. To address this issue, we formulate a general optimization problem for determining the optimal grouping strategy, which strikes a balance between performance loss and sample/computational complexity. We further propose a computationally efficient method for selecting a nearly-optimal grouping strategy, which maintains its computational complexity independent of the size of the action space.

\end{abstract}

\section{Introduction}


Reinforcement learning (RL), a field dedicated to finding the optimal policy that maximizes the long-term return through interactions with the environment, suffers from "the curse of dimensionality"~\citep{barto2003recent}. 
In other words, in high-dimensional scenarios, the state-action space of RL grows exponentially with the number of degrees of freedom.
For instance, in a control system, there could be millions of potential actions available at each step. Similarly, within a large system, a recommender system might have to consider millions of items~\citep{dulac2015deep}. 
This exponential growth poses a significant complexity barrier to discovering optimal policies, especially in large-scale systems~\citep{azar2012sample, agarwal2020model}.


To overcome the challenges associated with the explosion of the state-action space, a common approach is to use a low-rank representation of the Markov Decision Process (MDP). 
In low-rank MDP settings that allow for polynomial sample complexity relative to the horizon length and feature dimension, some works investigate simultaneous learning of representations and the optimal policy~\citep{agarwal2020flambe,modi2020sample}. 
However, the existing literature often assumes the attainability of an exact low-rank representation of the state-action space, wherein the representation accurately reflects the MDP's characteristics. 
In practical scenarios, low-rank structures are often corrupted by noise, but the literature does not consider the errors resulting from the mismatch between the low-rank representation and the MDP itself.

In order to address the aforementioned limitations, researchers have explored the use of abstractions, which involves learning the low-dimensional latent state/action space that throws away some irrelevant state/action features. Previous literature has investigated various similarity metrics to identify suitable abstractions aligned with the original MDP, such as model similarity~\citep{jiang2015abstraction, gelada2019deepmdp} and the similarity of the optimal value function ~\citep{abel2016near,abel2020value}. 
Some studies have investigated the performance degradation resulting from inaccurate abstractions~\citep{abel2020value}. 
To our knowledge, existing literature does not address the both sample complexity and computational complexity associated with estimating abstractions.



Our work falls within the realm of learning abstractions to preserve the performance of the optimal policy. 
We specifically focus on MDPs with large action spaces that exhibit group-wise similarity, where only approximate grouping strategies of the action space are available.
Leveraging abstractions of the underlying MDP offers the advantage of reduced complexity, albeit at the cost of worse performance due to the approximation.
This raises a fundamental question: \emph{How does the trade-off between complexity reduction benefits and performance loss manifest in the context of an MDP and its corresponding grouping strategy?}

Interestingly, our analysis yields a counter-intuitive result: \textit{while a finer grouping strategy minimizes model approximation errors, the utilization of sample-based estimation also contributes to the performance shortfall,} especially when faced with limited samples and computational resources. 
This further prompts the question of how to select a grouping strategy that strikes a balance between approximation error and estimation error to minimize the performance loss.

Our main contributions are as follows.
\begin{itemize}
    \item 
    We propose an action-grouping method that clusters actions based on the similarity between intra-group transition kernels and rewards. This grouping approach allows us to effectively reduce the size of the action space. Furthermore, we ensure that the performance degradation remains within acceptable bounds by carefully selecting the grouping strategy.
    
    \item 
    We analyze the performance loss, taking into account both approximation errors caused by information loss of grouping and estimation errors caused by limited sampling and computational resources. We compare our result with a known lower bound on the estimation error to show that it is tight. We then provide an example for which the approximation error is also relatively tight. We further give some insights into understanding the relationship between the grouping function and performance loss.
    
    \item 
    We build a general optimization problem over the grouping function, sample size, and iteration number, enabling us to achieve a balance between performance degradation and sample/computational complexity. The complexity of finding the optimal grouping is proportional to the number of feasible groups.
\end{itemize}

\section{Related Work}
To avoid the curse of dimensionality in tabular MDP, there have been several works on learning and exploring the inherent structure of large MDP.

\paragraph{Representation Learning in Low-rank MDP}
One line of work is to consider reducing sample efficiency by exploring MDP structures. 
Several studies have explored the sufficient and necessary conditions for learning nearly-optimal policies with polynomial sample efficiency, relative to the horizon length and feature dimension~\citep{jiang2017contextual,sun2019model,du2021bilinear,weisz2021exponential}.  
In MDP settings that allow for polynomial sample complexity, such as low Bellman rank~\citep{jiang2017contextual,wang2021sample,ayoub2020model,zhou2021nearly,du2019provably,agarwal2020flambe}, low witness rank
~\citep{sun2019model}, and bilinear structure~\citep{du2021bilinear}, some works assume that the agent possesses knowledge of a low-rank representation and focus on exploration algorithms~\citep{jiang2017contextual,wang2021sample,li2023q}. A more practical approach is to learn a good latent representation of specially-structured MDPs through rich observations~\citep{du2019provably,agarwal2020flambe,modi2020sample,uehara2021representation,zhang2022efficient}. 
However, the above-mentioned feature selection algorithms are based on the realizability assumption, which assumes there exists an exact mapping function from the latent state space to observations. 
In contrast, our setting does not assume that this exact mapping exists, and we allow optimizing the grouping function which belongs to the given feasible set. 

\paragraph{State/action Abstractions}
Another line of work learns abstractions, which do not hold assumptions on the specific structures of latent state/action space. 
The abstractions can be categorized into state, action, and joint state/action abstractions.
For the state abstractions, \citet{li2006towards} build a uniform model of state abstraction to preserve enough information to find good policies. It generalizes different types of state abstractions, such as bisimulation~\citep{givan2003equivalence}, and homomorphisms~\citep{ravindran2002model}. \citet{li2006towards} also provide the convergence performance of the resulting abstract policy in the ground MDP.
\citet{abel2016near} further investigate the performance guarantee of approximate state abstractions which treats nearly-identical states as equivalent.
Several algorithms have been proposed to select a good state abstraction from a given set of feasible abstractions~\citep{jiang2015abstraction,ortner2019regret}. 

A well-researched action abstraction type is options which are temporally related actions from an initial state and terminal state~\citep{sutton1999between}. 
A trend of joint state-action abstraction is the hierarchical abstraction design, in which higher-level policies communicate between goals (subspace of state spaces) and lower-level policies aim to achieve goals from initial states~\citep{nachum2018nearoptimal,abel2020value,jothimurugan2021abstract}.
Specifically, \citet{abel2020value} learns the performance loss associated by pairing the options with state abstractions and presents the sufficient and necessary conditions for options to preserve information for nearly-optimal policy. 
There are also works on learning latent state space models end-to-end using neural networks~\citep{hafner2019learning,ha2018recurrent,gelada2019deepmdp}.

However, previous literature on abstraction learning primarily focuses on the performance loss resulting from approximate abstractions, while ignoring the complexity involved in estimating these abstractions, including both sample complexity and computational complexity. In our work, we address this gap by considering both the performance loss and the sample/computational complexity when determining the optimal grouping function.


\section{System Model}
\subsection{MDP Preliminaries}
This paper focuses on infinite-horizon discounted Markov Decision Processes (MDP) $\MDP:=(\bcal{S},\bcal{A}, \mbP, \bR,\gamma)$.
Here, $\bcal{S}$ and $\bcal{A}$ represent the state and action space, with sizes denoted as $S$ and $A$, respectively.
$\mbP:\bcal{S}\times\bcal{A}\to\Omega(\bcal{S})$ is the transition kernel, where $\Omega(\bcal{S})$ is the collection
of probability distributions over state space $\bcal{S}$. $\mbP(\bs'|\bs,\ba)$ represents the probability of transiting to state $\bs'$ when the agent plays action $\ba$ at state $\bs$.  $\bR(\bs,\ba)$ is the instant reward with state-action pair being $(\bs,\ba)$. 
We have the following assumption on the rewards, which is commonly used in RL~\citep{antos2008learning,wang2021sample}.
\begin{assumption}(Bounded rewards)
\label{assumption:bounded reward}
    Assume that the reward satisfies $0 \leq R(\bs,\ba)\leq 1$ for any state-action pair $(\bs,\ba)$.
\end{assumption}

The policy on $\MDP$ is defined as a mapping from $\bcal{S}$ to the probability distributions over the action space, i.e., $\pi:\bcal{S}\to \Omega(\bcal{A})$. Let $\QMDP(\bs, \ba)$ and $\VMDP(\bs)$ denote the value function based on the policy $\pi$ from the initial state-action pair $(\bs, \ba)$ and $\bs$, respectively.
There exists an optimal policy $\piMDP^*$ that maximizes the value function simultaneously for each state, and the state-action value function based on policy $\piMDP^*$ is the fixed point of the Bellman optimality operator~\citep{puterman2014markov}. 
For notational simplicity, we write the value function based on the optimal policy as $\QMDP^*$ and $\VMDP^*$ in the following.

\subsection{Action Grouping}
To capture the similarity characteristics, we assume actions can be classified into multiple groups based on prior knowledge of $\MDP$. By grouping the actions, we are able to find the nearly-optimal policy over a reduced-dimensional state-action space, which significantly reduces the complexity. Define the surjective mapping function $\gG:\boldsymbol{\mathcal{A}}\to \mG$, where $\gG(\ba)=\gG(\ba')$ for any actions $\ba$ and $\ba'$ in the same group. The set of actions that belong to group $h$ is denoted as $\Ah$. Define $\gnum:=|\mG|$ as the number of groups mapped by the grouping function $\gG$, and $\mathcal{D}$ as the set of all feasible grouping functions.

We consider a "hierarchical" policy, where the higher-level policy $\outpi:\bcal{S}\to \Omega(\mG)$ selects the group and the lower-level policy $\inpi(\cdot|h):h\to\Omega(\Ah),h\in\mG$ select an action belonging to that group. 
The joint policy is composed of the higher- and lower-level policies, denoted as $\piG=\outpi\circ\inpi$\footnote{Assume $f:\bcal{S}\to\bcal{A}$, $\phi:\bcal{S}\to\mG$, $\psi_h:h\to \Ah$, and $\psi=\{\psi_h\}_{h\in\mG}$. We define  $f=\phi\circ \psi$ iff $f(\ba|\bs)=\phi(h|\bs)\psi_h(\ba|h)$.}. 
The lower-level policy can be obtained using domain knowledge. In the case where actions within the same group exhibit similar transition kernels and reward functions, we can also employ the uniform distribution as the lower-level policy.

To assess the effectiveness of the grouping operation, we introduce a linear decomposition model that quantifies the similarity between actions within the same group based on their transition kernels and rewards. This model allows us to evaluate the extent of the performance degradation, which will be thoroughly discussed in the following sections.

\paragraph{Grouped transition probability} 
Define the linear decomposition  of $\mbP$ by the tuple $\left(\bbetap, \mbP_1,\mbP_2\right)$ as
\begin{equation}
\label{eq:P_decomp}
    \mbP(\bs'|\bs,\ba)=(1-\bbetap(\bs,\ba)) \mbP_1(\bs'|\bs,\gG(\ba))+\bbetap(\bs,\ba) \mbP_2(\bs'|\bs,\ba),
\end{equation}
where $\bbetap:\bcal{S}\times\bcal{A}\to[0,1]$, $\mbP_1$ and $\mbP_2$ are transition probability distributions.
Any $\mbP$ has at least one linear decomposition solution $(\bbetap,\mbP_1,\mbP_2)$ of \cref{eq:P_decomp} since there exist a naive linear decomposition solution $(\bbetap(\bs,\ba)=1,\mbP_1=\mathbf{0},\mbP_2=\mbP)$. 
Define the probability deviation factor $\betap:=\max_{\bs,\ba}\bbetap(\bs,\ba)$, thus $0\leq\betap\leq 1$. 
 
\paragraph{Grouped rewards} Similar to the transition probability distribution, we write actual rewards $\BR(\bs,\ba)$ as the linear combination of $0\leq\BR_{1}(\bs,\ba)\leq 1$ and $0\leq \BR_{2}(\bs,\ba)\leq 1$ with factor $\bbetar(\bs,\ba)$, which is shown as
\begin{equation}
    \label{eq:R_decomp}
    \BR(\bs,\ba)=(1-\bbetar(\bs,\ba)) \BR_{1}(\bs,\gG(\ba))+\bbetar(\bs,\ba) \BR_{2}(\bs,\ba).
\end{equation}
Define the rewards deviation parameter as $\betar:=\max_{\bs,\ba}\bbetar(\bs,\ba)$ and $0\leq\betar\leq 1$. 

\begin{remark}\textbf{(Obtaining $\mathcal{D}$)} 
Intuitively, actions that have similar transition probability distributions and reward functions can be clustered into the same group. We can use expert knowledge of specific applications to obtain the feasible grouping function set $\mathcal{D}$ before the learning process. 
Note that we do not make any assumptions on $\mathcal{D}$, e.g., we do not need the finer grouping function to be the refinement of coarser grouping functions as in [Assumption 1, \citep{jiang2015abstraction}].
\end{remark}

\begin{remark} \textbf{(Calculation of $\mbP_1$ and $\mbP_2$)}
    $\mbP_1$ is the common transition kernel for all actions in the same group, and $\mbP_2$ reflects each individual action's transition characteristics. We can get $(\mbP_1,\mbP_2)$ by either directly solving Eq.~\eqref{eq:P_decomp} or utilizing the domain knowledge. 
    We provide an example of getting $(\mbP_1,\mbP_2)$ in the wireless access scenario in \cref{exp:wireless_access}. 
\end{remark}

\begin{remark} \textbf{(Meaning of $\betap$ and $\betar$)}
The deviation factors $\betap$ and $\betar$ reflect how well the common transition probability distribution and reward can represent each action's actual transition kernel and reward. If $\betap$ and $\betar$ are small, then the transition kernel and rewards of joint actions in the same group are almost identical. 
Specifically, when $\betap=\betar=0$, $\mbP(\cdot|\bs,\ba)=\mbP_1(\cdot|\bs,\gG(\ba))$ and $\bR(\bs,\ba)=\bR(\bs,\gG(\ba))$. When $\betap=1$, $\mbP(\cdot|\bs,\ba)=\mbP_2(\cdot|\bs,\ba)$ and there is no common pattern for all actions in the same group. 
\end{remark}

\subsection{Model-based RL with Generative Model}
The model-based dynamic programming algorithm with the generative model is shown in \cref{alg:DP}.
Assume we can access a generative model that generates independent quadruples $(\bs,h,r,\bs')$ following $\MDPG$. We generate $K'$ quadruples for each state-group pair, where $r_k=\BRG(\bs,h)$, $\bs'_{k}\sim \mbPG(\cdot|\bs,h)$. The total sample complexity is $K=S\gnum K'$. We can have an empirical estimation of $\mathcal{M}$ as
\begin{equation}
    \label{eq:MDP_est}
   \mbPGh(\bs'|\bs,h)=\frac{\sum_{k=1}^{K'} \text{count}(\bs,h,\bs')}{K'}, \;
   \BRGh(\bs,h)=\frac{1}{K'}\sum_{k=1}^{K'} r_k(\bs,h).
\end{equation}
We can construct an empirical MDP as $\MDPGh=(\bcal{S},\bcal{A},\mbPGh,\BRGh,\gamma)$ by sampling over each state-group pair and use the oracle dynamic programming algorithms such as value iteration and policy iteration to get a nearly-optimal policy under the estimated MDP. 

Here we consider the sample and computational complexity induced by \cref{alg:DP}.
Sample complexity, denoted as $\Csamp$, represents the number of samples needed to obtain the $\epsp$-optimal policy. On the other hand, computational complexity denoted as $\Ccomp$, refers to the computational operations required to achieve the same goal. 
\begin{algorithm}[t]
\caption{Model-based RL with generative model}
\begin{algorithmic}[1]
\State \textbf{Input:} state value function initialization $\hat{Q}_G^0(\bs,h)=0$ for all $\bs\in \bcal{S}$, $h\in\mG$.
\State \textbf{Output:} policy $\outpiT{T}$.
\For{$(\bs,h)\in\bcal{S}\times\mG$}
    \State Draw sample $(\bs,h,r_k,\bs'_k)_{k=1}^{K'}$, where $ r_{k}=\BRG(\bs,h)$, $\bs'_{k}\sim \mbPG(\cdot|\bs,h)$.
\EndFor
\State Estimate $\mbPGh$ and $\BRGh$ by Eq.~\eqref{eq:MDP_est}. 
\State Execute the dynamic programming algorithm for $T$ iterations and generate the policy $\outpiT{T}$.
\end{algorithmic}
\label{alg:DP}
\end{algorithm}
\section{Main Results on Performance Evaluation} 

We now present the main theorem that establishes the upper bound on the performance gap between the optimal policy and the output policy of \cref{alg:VI}. Let $\piGT{T}=\outpiT{T}\circ\inpi$, where $\outpiT{T}$ is the output policy of \cref{alg:VI} after $T$ iterations. 

\begin{theorem}
\label{thm:1}
Assume the reward is deterministic\footnote{This assumption implies $\BRGh$ in \cref{eq:MDP_est} is accurate.}. Given $\MDP$ and grouping function $\gG$, when the value function difference between the optimal policy and the output policy 
$\outpiT{T}$ under the estimated MDP $\MDPGh$ denoted as $\left\|\QMDPGh^*-\QMDPGh^{\outpiT{T}}\right\|_{\infty}\leq\epsopt$, and the sample size $K\geq\frac{648S\gnum\log{\frac{8S\gnum}{\delta(1-\gamma)}}}{(1-\gamma)^2}$, with probability exceeding $1-\delta$, one has 
\[\|\VMDP^*-\VMDP^{\piGT{T}}\|_{\infty}\leq \epsp,
\]
where
\begin{align}
&\epsp = \underbrace{2\left(\frac{\gamma\betapopt}{(1-\gamma)^2}+\frac{\betaropt}{1-\gamma}\right)}_{\text{approximation error }\epsapprox}
+\underbrace{20\gamma\sqrt{\frac{S\gnum\log{\left(\frac{8S\gnum}{\delta(1-\gamma)}\right)}}{K(1-\gamma)^3}}+\frac{4\epsopt}{1-\gamma}}_{\text{estimation error }\epsest},\label{eq:epsp}\\
&\betapopt=1-\min_{\bs\in\bcal{S},h\in\mGG}\sum_{\bs'}\min_{\ba\in \Ah}\mbP(\bs'|\bs,\ba),\label{eq:betapopt} \\
&\betaropt=\max_{\bs\in\bcal{S},\gG(\ba_1)=\gG(\ba_2)}{\left(R(\bs,\ba_1)-R(\bs,\ba_2)\right)}.\label{eq:betaropt}
\end{align}
\end{theorem}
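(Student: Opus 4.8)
The plan is to introduce an intermediate object, the \emph{grouped MDP} $\MDPG=(\bcal{S},\mG,\mbPG,\BRG,\gamma)$ obtained by collapsing each group with the fixed lower-level policy $\inpi$, namely $\mbPG(\bs'|\bs,h)=\sum_{\ba\in\mathcal{A}_h}\inpi(\ba|h)\mbP(\bs'|\bs,\ba)$ and $\BRG(\bs,h)=\sum_{\ba\in\mathcal{A}_h}\inpi(\ba|h)\BR(\bs,\ba)$, and to split the suboptimality there. The starting point is the exact identity $\VMDP^{\outpi\circ\inpi}=\VMDPG^{\outpi}$ for every higher-level policy $\outpi$: conditioned on the current state, the group drawn from $\outpi$ and the ensuing reward and next state have exactly the law prescribed by $(\BRG,\mbPG)$, so the $(\bs_t,h_t)$-process agrees under the two models. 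In particular $\VMDP^{\piGT{T}}=\VMDPG^{\outpiT{T}}$, and because every hierarchical policy is a policy on $\MDP$ we also have $\VMDPG^*\le\VMDP^*$; hence
\[
0\ \le\ \VMDP^*-\VMDP^{\piGT{T}}\ =\ \underbrace{\big(\VMDP^*-\VMDPG^*\big)}_{\text{approximation}}+\underbrace{\big(\VMDPG^*-\VMDPG^{\outpiT{T}}\big)}_{\text{estimation}},
\]
and I would bound the two pieces by $\epsapprox$ and $\epsest$ respectively.

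For the approximation piece I would use the decompositions in \cref{eq:P_decomp}--\cref{eq:R_decomp} realizing the minimal deviations $\betap=\betapopt$ and $\betar=\betaropt$ (the closed forms \cref{eq:betapopt}--\cref{eq:betaropt} are exactly these minima, attained for instance by $\mbP_1(\cdot|\bs,h)\propto\min_{\ba\in\mathcal{A}_h}\mbP(\cdot|\bs,\ba)$ and $\BR_1(\bs,h)=\min_{\ba\in\mathcal{A}_h}\BR(\bs,\ba)/(1-\betaropt)$, which is feasible since $\max_{\ba\in\mathcal{A}_h}\BR(\bs,\ba)\le 1$). Let $\pi^\star=\piMDP^*$ and let $\tilde\pi\in\PiG$ be its group projection, $\tilde\pi(\ba|\bs)=\big(\sum_{\ba'\in\mathcal{A}_{\gG(\ba)}}\pi^\star(\ba'|\bs)\big)\inpi(\ba|\gG(\ba))$. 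Introduce the auxiliary operator $\mT_{1}^{\pi}$ that acts as the policy-Bellman operator of $\MDP$ but with $(\mbP_1,\BR_1)$ in place of $(\mbP,\BR)$; since $\mbP_1$ and $\BR_1$ see $\ba$ only through $\gG(\ba)$, $\mT_{1}^{\pi}$ depends on $\pi$ only through its group marginals, so $\mT_{1}^{\pi^\star}=\mT_{1}^{\tilde\pi}$. Inserting $\pm\mT_{1}^{\pi^\star}\VMDP^{\pi^\star}$ and $\pm\mT_{1}^{\tilde\pi}\VMDP^{\tilde\pi}$ into the fixed-point equations $\VMDP^{\pi^\star}=\mT^{\pi^\star}\VMDP^{\pi^\star}$, $\VMDP^{\tilde\pi}=\mT^{\tilde\pi}\VMDP^{\tilde\pi}$ and using this cancellation gives
\[
(I-\gamma P_{1}^{\tilde\pi})\big(\VMDP^{\pi^\star}-\VMDP^{\tilde\pi}\big)=\big(\mT^{\pi^\star}-\mT_{1}^{\pi^\star}\big)\VMDP^{\pi^\star}-\big(\mT^{\tilde\pi}-\mT_{1}^{\tilde\pi}\big)\VMDP^{\tilde\pi},
\]
with $P_{1}^{\tilde\pi}$ a stochastic matrix. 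Each right-hand term has $\infty$-norm at most $\betaropt+\tfrac{\gamma\betapopt}{1-\gamma}$, using $|\BR(\bs,\ba)-\BR_1(\bs,\gG(\ba))|\le\betar$ pointwise and $\big|\sum_{\bs'}(\mbP-\mbP_1)(\bs'|\bs,\ba)V(\bs')\big|\le\betap\cdot\tfrac12\|\mbP_2-\mbP_1\|_1\cdot\mathrm{span}(V)\le\tfrac{\betap}{1-\gamma}$ whenever $0\le V\le\tfrac1{1-\gamma}$; multiplying by $\|(I-\gamma P_{1}^{\tilde\pi})^{-1}\|_\infty=\tfrac1{1-\gamma}$ yields $\|\VMDP^{\pi^\star}-\VMDP^{\tilde\pi}\|_\infty\le\tfrac{2\betaropt}{1-\gamma}+\tfrac{2\gamma\betapopt}{(1-\gamma)^2}=\epsapprox$. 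Since $\VMDP^{\tilde\pi}=\VMDPG^{\outpi}\le\VMDPG^*$ for the $\outpi$ induced by $\tilde\pi$, and $\VMDPG^*\le\VMDP^*$, this gives $\|\VMDP^*-\VMDPG^*\|_\infty\le\epsapprox$.

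For the estimation piece I would insert the empirical model $\MDPGh$ from \cref{alg:DP} and bound
\[
\VMDPG^*-\VMDPG^{\outpiT{T}}\le\|\VMDPG^*-\VMDPGh^*\|_\infty+\|\VMDPGh^*-\VMDPGh^{\outpiT{T}}\|_\infty+\|\VMDPGh^{\outpiT{T}}-\VMDPG^{\outpiT{T}}\|_\infty.
\]
The middle term is controlled by the hypothesis $\|\QMDPGh^*-\QMDPGh^{\outpiT{T}}\|_\infty\le\epsopt$ through the standard greedy-policy and $\gamma$-contraction argument, which produces the $\tfrac{4\epsopt}{1-\gamma}$ contribution. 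For the two model-mismatch terms, the deterministic-reward assumption makes $\BRGh=\BRG$ exact, so $\MDPGh$ and $\MDPG$ differ only through $\mbPGh$, an empirical transition over $S$ states and $\gnum$ actions formed from $K'=K/(S\gnum)$ samples per state--group pair; I would then apply a Bernstein-type concentration of $(\mbPGh-\mbPG)(\cdot|\bs,h)^\top V$ for the relevant value functions (using the absorbing-MDP device to decouple an estimated value from the samples defining it), a union bound over the $S\gnum$ pairs, and the total-variance sharpening of the simulation lemma standard in the generative-model analysis of model-based RL. Under the stated burn-in $K\ge\tfrac{648S\gnum\log(8S\gnum/(\delta(1-\gamma)))}{(1-\gamma)^2}$ this yields, with probability at least $1-\delta$, a bound of order $\gamma\sqrt{S\gnum\log(8S\gnum/(\delta(1-\gamma)))/(K(1-\gamma)^3)}$ per term, which together account for the $20\gamma\sqrt{\cdot}$ piece of $\epsest$. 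Summing the approximation and estimation bounds gives $\|\VMDP^*-\VMDP^{\piGT{T}}\|_\infty\le\epsapprox+\epsest=\epsp$.

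The step I expect to be the crux is the approximation bound. The structural fact that makes it work is that $\mT_{1}^{\pi}$ depends on $\pi$ only through its group marginals, which is precisely what lets the optimal policy be compared to its arbitrary-within-group projection with no intra-group penalty; one must then carefully track that the transition-level deviation $\betapopt$ is amplified by an extra $1/(1-\gamma)$ (yielding the $1/(1-\gamma)^2$ factor) and verify that the explicit $\betapopt,\betaropt$ in \cref{eq:betapopt}--\cref{eq:betaropt} are genuinely achievable under the normalization constraints of \cref{eq:P_decomp}--\cref{eq:R_decomp}. The estimation bound is comparatively routine, being essentially the known generative-model analysis re-run on the reduced action space $\mG$.
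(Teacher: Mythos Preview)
Your proposal is correct. The overall split via the grouped MDP $\MDPG$, the identity $\VMDP^{\outpi\circ\inpi}=\VMDPG^{\outpi}$, and the estimation analysis (Bernstein concentration on $(\mbPGh-\mbPG)^\top V$, the absorbing-state/leave-one-out decoupling, and the total-variance lemma) are exactly what the paper does; your bookkeeping of the $\epsopt$ contribution is slightly off (the middle term contributes only $\epsopt$ directly, with the remaining $\tfrac{3\gamma\epsopt}{1-\gamma}$ arising inside the model-mismatch term via the variance decomposition), but the end result is the same.

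Where you genuinely diverge is the approximation bound. The paper introduces the auxiliary MDP $\MDPone=(\bcal{S},\bcal{A},\mbP'_1,\BR'_1,\gamma)$ and proves, via a trajectory-level expansion (defining $G_T^\pi(\bz_0)$ and unrolling one step at a time), that $\|\QMDP^\pi-\QMDPone^\pi\|_\infty\le\tfrac{\betar}{1-\gamma}+\tfrac{\gamma\betap}{(1-\gamma)^2}$ for \emph{every} policy $\pi$; it then uses $\piMDPone^*\in\PiG$ and triangulates through $\VMDPone^{\piMDP^*}$ and $\VMDPone^*$ to pick up the factor $2$. Your route is more direct: you project $\pi^*$ to $\tilde\pi\in\PiG$ and use the structural fact $\mT_1^{\pi^*}=\mT_1^{\tilde\pi}$ (the $(\mbP_1,\BR_1)$-operator sees only group marginals) to obtain a single fixed-point identity, then invert $(I-\gamma P_1^{\tilde\pi})$. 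This avoids the $G_T^\pi$ machinery entirely and reaches the same constant. The paper's route buys a reusable, policy-uniform simulation lemma between $\MDP$ and $\MDPone$; yours buys brevity.

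One small inaccuracy to fix: your illustrative choice $\BR_1(\bs,h)=\min_{\ba\in\mathcal{A}_h}\BR(\bs,\ba)/(1-\betaropt)$ need not lie in $[0,1]$ (take a group with rewards close to $1$ while a different group sets $\betaropt$ large). This does not harm your argument, which holds for any valid decomposition; the achievability of $\betapopt,\betaropt$ in \cref{eq:betapopt}--\cref{eq:betaropt} is established separately (and your $\mbP_1$ construction is correct).
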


As shown in \cref{eq:epsp}, the performance gap between $\pi^*$ and $\piGT{T}$ contains two part: approximation error and estimation error. We now explain the two error terms as follows.

\textit{Approximation error} arises when the dynamic programming algorithm operates at the group level and ignores the disparities in transition probability distributions and rewards among actions within the same group.
Specifically, $\betapopt$ and $\betaropt$ are the minimal $\betap$ and $\betar$, where $(\betap,\mbP_1,\mbP_2)$ and $(\betar,\BR_1,\BR_2)$ are the solutions to \cref{eq:P_decomp,eq:R_decomp}, respectively.
It is important to note that $\betapopt$ and $\betaropt$ are solely determined by the grouping function and do not depend on any other factors.

As the number of groups increases, the approximation error generally decreases. 
The underlying intuition is that a finer grouping function has the potential to improve performance by minimizing grouping errors and capturing subtle distinctions within groups. 
We illustrate this concept by providing an example, where a finer grouping function in the feasible grouping function set is a refinement of a coarser grouping function. 
This structure has also been considered in prior work such as \cite{jiang2015abstraction}. As the grouping function becomes coarser, the differences in probability transition distribution and rewards within each group become larger, resulting in a monotonic increase in the approximation error.

\textit{Estimation error} can be further decomposed into two terms: $\epssamp$ (the first term of $\epsest$) and $\epsalg$ (the second term of $\epsest$). 
Specifically, $\epssamp$ and $\epsalg$ reflect the performance loss caused by transition kernel estimation with finite samples and limited iteration numbers, respectively. 
When using policy iteration~\citep{antos2008learning} or value iteration~\citep{munos2005error,munos2008finite} as the planning algorithm in \cref{alg:DP}, $\epsalg$ decreases at a linear rate in the number of iterations $T$. 
To provide a comprehensive understanding, we present the detailed derivations of $\epsalg$ specifically for the case of value iteration in \cref{sec:value_iteration_derivation}.

The detailed analysis of the tightness of \cref{thm:1} in special cases and the proof sketch are presented in \cref{sec:tightness_proof_sketch}.

\textbf{Comparison with \citet{wang2021sample}}
We compare our results with [\citep{wang2021sample}, Theorem 3]. 
\citet{wang2021sample} also considers the model-based method utilizing the generative model and investigates the performance loss caused by inaccurate feature extraction in [\citep{wang2021sample}, Theorem 3]. 
We summarize our main differences as follows. 
Compared with \cref{thm:1}, our result characterizes the performance gap caused by the rewards model deviating from the grouping model, which was not considered in previous work. Furthermore, our result improves the approximation error of transition kernel difference by a factor of $\frac{22}{\gamma}$. We demonstrate the tightness of our approximation error with a constant difference when deviation factors are small enough. Notably, our approach optimizes over $(\mbP_1,\mbP_2)$ and $(\BR_1,\BR_2)$ to find the minimum $\betapopt$ and $\betaropt$.

\cref{thm:1} reveals that the performance loss $\|\VMDP^*-\VMDP^{\piGT{T}}\|_{\infty}$ can be mitigated through tuning the sample size, iteration number, and the grouping function. 
Increasing the sample size $K$ or the number of computational operations $T$ can reduce the performance loss since $K$ and $T$ are inversely related to $\epsopt$. This coincides with the intuition that a larger sample dataset and larger available computational complexity leads to a better-performed policy. 
However, \textbf{the relationship between the performance loss and the grouping function is surprising.}

\textbf{A grouping function with a larger number of groups can sometimes achieve better performance.} 
When the sample size and the number of computation operations are extremely large, the estimation error becomes significantly smaller compared to the approximation error, making the performance loss predominantly determined by the approximation error.
In such cases, we can choose a grouping function with a larger number of groups to achieve better performance.

As demonstrated in \cref{fig:sim_perf_diff}(a), comparing the grouping and non-grouping settings with $K'=500$ (red and blue lines with marker $\bigcirc$), the non-grouping setting has a smaller estimation error. 
The number of samples for each action is sufficient for accurate estimation, leading to better performance in the non-grouping setting than in the grouping setting. 
This observation aligns with our analysis, which suggests the performance loss decrease as the number of groups increases.

\textbf{Adjusting grouping function can sometimes enable a trade-off between approximation and estimation errors.} 
When the sampling size and the computation operations are limited, the estimation error cannot be disregarded. According to \cref{thm:1}, $\epssamp$ decreases sublinearly as the number of groups decreases. This can be intuitively understood: when the sample size is fixed, having fewer groups allows for more samples to estimate the transition probability of each sate-group pair, which consequently leads to a smaller estimation error. Similarly, when the computational complexity is limited, a smaller number of groups indicates a larger iteration number $T$, therefore reducing $\epsalg$. 

As demonstrated in \cref{fig:sim_perf_diff}(a), comparing the grouping and non-grouping settings with $K'=10$ (red and blue lines with marker $\square$), the non-grouping setting has a higher estimation error than the grouping setting. 
However, even though the approximation error is smaller in the non-grouping case, setting small deviation factors ensures that the grouping error remains small. 
Therefore, the overall performance loss of the non-grouping case is still significantly larger than the grouping setting. 
\cref{fig:sim_perf_diff}(b) implies the grouping can decrease the computational complexity drastically.

This surprising result can also be verified by \cref{fig:sim_sample_size}. When we only have limited sampling resources, coarser grouping functions are preferred. For example, when the total sample size is $K=10^4$, the grouping function with group number $G=20$ (yellow line) is preferred over other settings. However, in scenarios where the sample resources are unlimited (e.g., sample size $K=10^7$), the non-grouping function becomes the preferred option.


\begin{figure}
\begin{minipage}{.6\textwidth}
  \includegraphics[height=2.88cm]{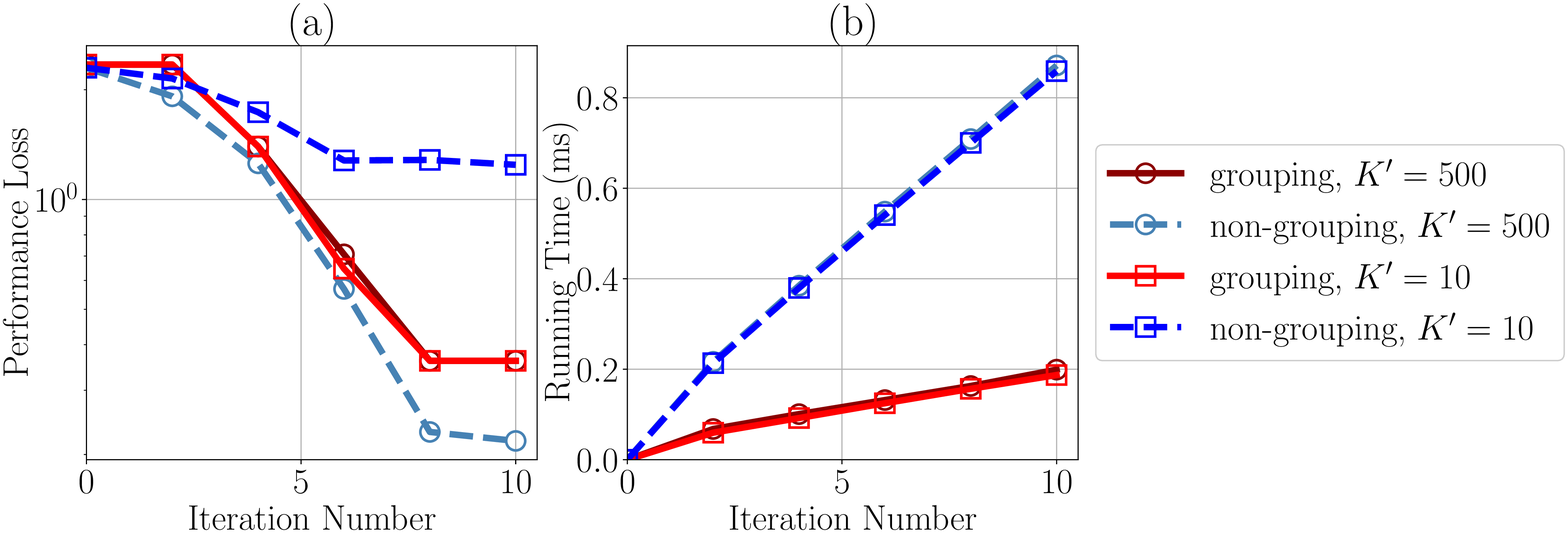}
  \caption{Performance loss with grouping and non-grouping structure under the downlink transmission scenario (details are in \cref{exp: downlink}). Each point is averaged over $20$ rounds. $S=5$, $A=1000$, and $G=10$.}
  \label{fig:sim_perf_diff}
\end{minipage}%
\hspace{0.01\textwidth}
\begin{minipage}{.38\textwidth}
  \includegraphics[height=2.88cm]{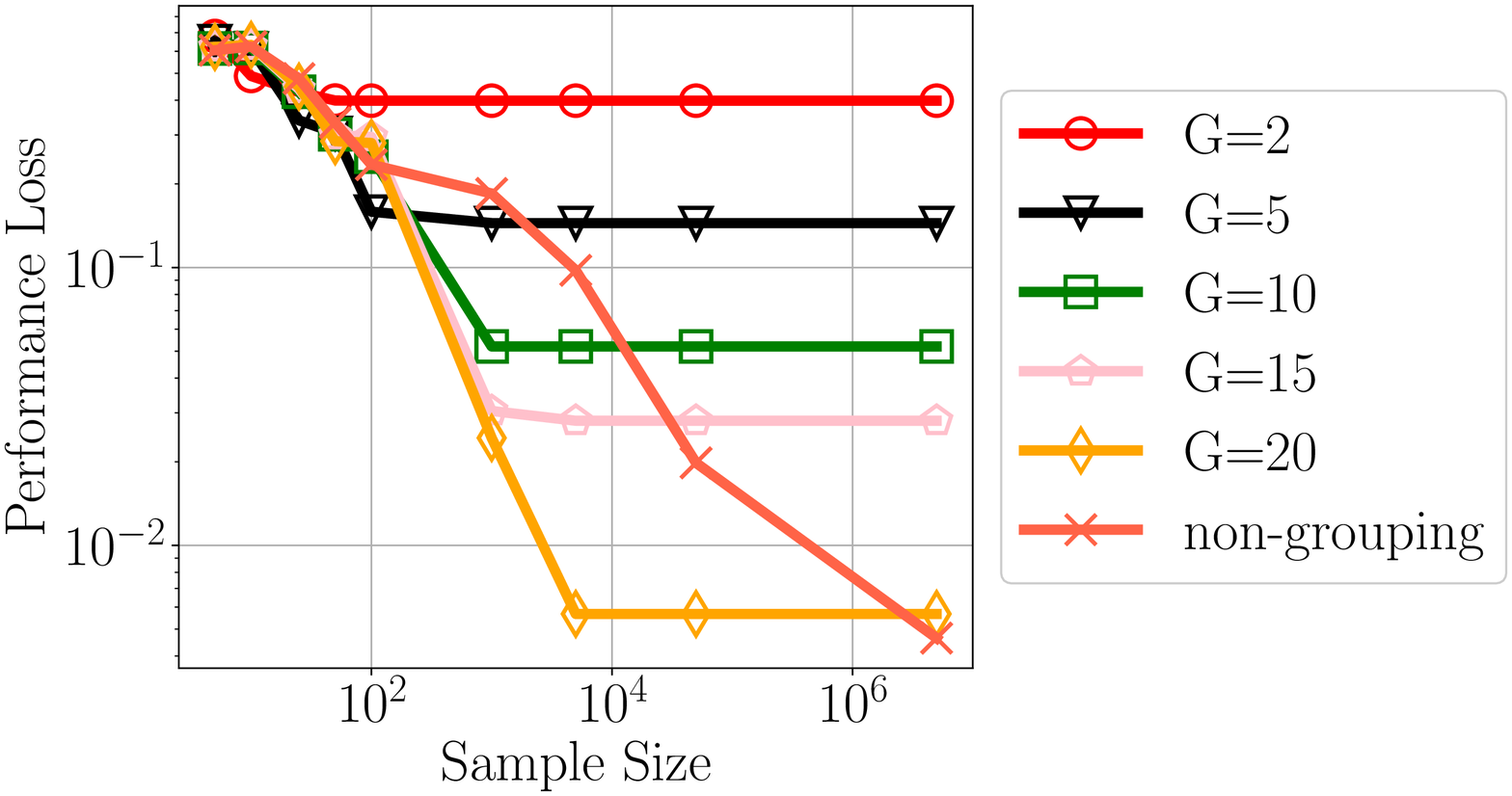}
  \caption{Performance loss versus sample size. Each point is averaged over $1000$ rounds.  $S=5$, $A=1000$, and $G=20$.}
  \label{fig:sim_sample_size}
\end{minipage}
\end{figure}



The above analysis based on \cref{thm:1} reveals that optimizing the grouping function is a key factor in reducing performance loss.
In \cref{sec:perf-comp trade-off}, we propose a general optimization problem considering the trade-off between performance loss and complexity.


\section{Performance-complexity trade-off}
\label{sec:perf-comp trade-off}

\cref{thm:1} highlights the possibility of optimizing the grouping function to reduce performance loss when all available sample and computational resources are utilized.
However, practical applications often involve additional costs for acquiring samples and computational operations, which may not scale proportionally with the size of these resources~\citep{luo2021cost}.
To address this, we aim to find the optimal sample size, iteration number, and grouping function that strike a balance between complexity and performance loss. 
This trade-off allows us to achieve an acceptable level of performance degradation while keeping the sample and computational complexity manageable.

To capture this trade-off, we introduce a utility function $f:\mathbb{R}^3\to\mathbb{R}$ that characterizes the preferences for performance loss and complexity.  
We make the following assumption.
\begin{assumption}
\label{assumption:f_decrease}
$f$ is monotone decreasing w.r.t each variable, i.e., $f(x_1,y,z)\leq f(x_2,y,z)$ iff $x_1>x_2$, and same applies to $y$ and $z$.
\end{assumption}
\cref{assumption:f_decrease} guarantees that the utility function decreases when either the performance loss or the sample/computational complexity increases while holding other terms constant. According to \cref{assumption:f_decrease}, the minimization of $f(\epsp, \Csamp, \Ccomp)$ reflects a preference for lower performance loss and complexity.
An example of $f$ is a weighted-sum function: $f(x,y,z)=\alpha_1x+\alpha_2y+\alpha_3z$, where $\boldsymbol{\alpha}$ is a vector of weighting coefficients and $\alpha_i<0,i=1,2,3$.

The utility-maximization problem can be written as
\begin{equation}
\label{eq:P1}
\max_{\gG\in\mathcal{D},K,T} f(\epsp(\gG,K,T),\Csamp(K),\Ccomp(\gnum,T)).
\tag{P1}
\end{equation}
Specifically, if we set $f(\epsp(\gG,K,T))=\epsp(\gG,K,T)$, and $(\Csamp,\Ccomp)$ are infinite when exceeding some threshold, the grouping function reduces to the performance loss minimization with fixed sampling complexity and computational complexity.
For a given feasible grouping function $\gG$, the optimization over $K$ and $T$ can be performed independently. Specifically, if $f$ is a convex function of $K$ and $T$, we can directly use convex optimization methods to get the optimal $\Kopt$ and $\Topt$. 
The optimization problem can be rewritten as
\[\max_{\gG\in\mathcal{D}} f(\epsp(\gG,\Kopt,\Topt),\Csamp(\Kopt),\Ccomp(\gnum,\Topt)).\]
By solving the above optimization problem, we can achieve a balance between minimizing performance loss and managing complexity.
However, solving the above optimization problem involves several challenges.
Firstly, the feasible grouping function set $\mathcal{D}$ is discrete, and $\epsp$ is implicitly related to the grouping function, therefore the optimization objective is not  easily solvable. 
Additionally, \cref{eq:betapopt} shows the exact calculation of $\epsp(\gG, \Kopt, \Topt)$, which requires traversing over all coefficients of $\mbP$ the $R$, and results in a computational complexity that grows with the size of the action space. 
Moreover, the exact probability transition kernel is usually not known by the agent. Hence, we next devise a practical approach to handle this problem and analyze its performance. 

\subsection{Practical Method}
Instead of iterating through all the actions within the same group and applying \cref{eq:betapopt,eq:betaropt} to calculate accurate values of $\betapopt$ and $\betaropt$, we propose an approximation of $\epsp(\gG, \Kopt, \Topt)$ based on randomly selected actions.
Specifically, we utilize samples obtained from the generative model to estimate the transition probabilities of selected state-action pairs, and then substitute these estimated probabilities into \cref{eq:epsp} to obtain the estimation $\hepsp$.
This approach significantly reduces the computational complexity, as the complexity of the calculation of approximate deviation factors is only related to the group size, which is much smaller than the entire action space.
\eqref{eq:P1} can be approximated as
\begin{equation}
\label{eq:P3}
\tag{P2}
\max\limits_{\gG\in\mathcal{D}} f(\hepsp(\gG,\Kopt,\Topt),\Csamp(\Kopt),\Ccomp(\gnum,\Topt)), 
\end{equation}
where the calculation of $\hepsp(\gG, \Kopt, \Topt)$ only uses actions belonging to $\BA=\cup_{h\in\mGG}\BAh$, and $\BAh$ is the randomly selected actions of the group $h\in\mG$.
Subsequently, we can iterate through $\mathcal{D}$ to determine the optimal grouping function.


\subsection{Performance Analysis of the Practical Method}
We now analytically show that the above approximation is reasonable under certain MDPs. 
Intuitively, our proposed method operates effectively in the MDP whose action space is characterized by group-wise similarity. In other words, actions can be clustered into multiple groups, and actions in the same group exhibit similar transition kernels and rewards.  
Additionally, we impose the condition that the rate of change of the utility function remains bounded in response to variations in the performance loss. 
The following assumptions are presented to capture these requirements.
\begin{assumption}
\label{assumption:f_Lipschitz}
($f(x,y,z)$ is Lipschitz continuous w.r.t $x$.) There exists $L>0$ such that $|f(x_1,y,z)-f(x_2,y,z)|\leq L|x_1-x_2|$.
\end{assumption}

\begin{assumption}
\label{assumption:PR_deviation}
There exist $\etap>0$ and $\etar>0$ such that
\begin{equation}
    \begin{aligned}
        &\max\limits_{\bs,h,\ba_1,\ba_2\in \Ah}\left\|\mbP(\cdot|\bs,\ba_1)-\mbP(\cdot|\bs,\ba_2)\right\|_{\infty}\leq \etap,\\
        &\max\limits_{\bs,h,\ba_1,\ba_2\in \Ah}(R(\bs,\ba_1)-R(\bs,\ba_2))\leq \etar.
    \end{aligned}
\end{equation}
\end{assumption}
In particular, \cref{assumption:PR_deviation} imposes a condition for the proximity of the transition probability distributions and the reward functions in the same group. This assumption implies that $\betap$ and $\betar$ need to be sufficiently small, specifically, $\betapopt\leq  S\etap$ and $\betaropt\leq\etar$.

Denote the optimal grouping function of \eqref{eq:P1} as $g^*$ and of \eqref{eq:P3} as $\hat{g}^*$, respectively. Correspondingly, the utility functions under $g^*$ and $\hat{g}^*$ are denoted as $f^*$ and $\hat{f}^*$, respectively.
Let $K_1$ be the number of total samples required by the estimation of $\epsp$. 
We have the following lemma to quantify the gap between $f^*$ and $\hat{f}^*$. 
\begin{proposition}
\label{prop:approx_err}
Assume the reward is deterministic. With probability exceeding $1-\delta$, we have
\[f^*-\hat{f}^*\leq \underbrace{\frac{4L\etar}{1-\gamma}+\frac{4L\gamma S\etap}{(1-\gamma)^2}}_{\text{action sampling error}}+\underbrace{\frac{4L\gamma S}{(1-\gamma)^2}\sqrt{\frac{S|\BA|\log\frac{2S|\BA|}{\delta}}{2K_1}}}_{\text{probability estimation error}}.\]
\end{proposition}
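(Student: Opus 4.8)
The plan is to bound $f^*-\hat f^*$ by first relating it to the pointwise gap between the true objective $\epsp$ and its estimate $\hepsp$, evaluated at a common grouping function, and then controlling that gap in two pieces: the error from replacing the full group by a random subset $\BAh$ (the ``action sampling error''), and the error from replacing the exact transition probabilities by their empirical estimates from the generative model (the ``probability estimation error''). The deterministic-reward assumption means $\BRGh$ is exact, so the reward contribution to $\hepsp$ only suffers from the action-subsampling effect, not from statistical noise.

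First I would establish the reduction. By \cref{assumption:f_decrease}, the optimization over $K$ and $T$ for a fixed $\gG$ is decoupled, and the only place $\gG$ enters the objective through a quantity we approximate is via $\epsp$ in the first argument of $f$. Since $\hat g^*$ is optimal for \eqref{eq:P3}, we have $\hat f^* = f(\hepsp(\hat g^*,\cdot),\cdot,\cdot)\ge f(\hepsp(g^*,\cdot),\cdot,\cdot)$, and therefore $f^*-\hat f^*\le f(\epsp(g^*,\cdot),\cdot,\cdot)-f(\hepsp(g^*,\cdot),\cdot,\cdot)$. By the Lipschitz property \cref{assumption:f_Lipschitz}, this is at most $L\,|\epsp(g^*,\cdot)-\hepsp(g^*,\cdot)|$, and since both expressions share the same $\epssamp+\epsalg$ part (the sample size $K_1$ here only affects the estimation of the deviation factors, not the deployed $K,T$), the difference reduces to $L$ times the error in estimating $\frac{2\gamma\betapopt}{(1-\gamma)^2}+\frac{2\betaropt}{1-\gamma}$, i.e. $|\epsp-\hepsp|\le \frac{2\gamma}{(1-\gamma)^2}|\betapopt-\hat\beta_P^*| + \frac{2}{1-\gamma}|\betaropt-\hat\beta_R^*|$. (The constants $4$ rather than $2$ in the statement presumably absorb a one-sided-to-two-sided or a union-bound slack; I would track them carefully but not belabor it here.)

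Next I would bound the two deviation-factor errors. For the reward term, $\betaropt = \max_{\bs,\,\gG(\ba_1)=\gG(\ba_2)}(R(\bs,\ba_1)-R(\bs,\ba_2))$ whereas $\hat\beta_R^*$ takes the same max restricted to $\ba_1,\ba_2\in\BAh$; since rewards are deterministic and exactly observed, the only error is that the restricted max may undershoot. By \cref{assumption:PR_deviation} the true max is already $\le\etar$, so $|\betaropt-\hat\beta_R^*|\le\etar$, giving the $\frac{4L\etar}{1-\gamma}$ term (after the constant bookkeeping). For the transition term, I would split $|\betapopt-\hat\beta_P^*|$ as (i) the gap between $\betapopt$ computed over the full $\Ah$ versus over $\BAh$ — using the identity $\betapopt=1-\min_{\bs,h}\sum_{\bs'}\min_{\ba\in\Ah}\mbP(\bs'|\bs,\ba)$ and \cref{assumption:PR_deviation}, since any two actions in a group have transition kernels within $\etap$ in $\ell_\infty$, restricting the inner $\min_\ba$ to a subset changes $\sum_{\bs'}\min_\ba\mbP(\bs'|\bs,\ba)$ by at most $S\etap$, contributing $\frac{4L\gamma S\etap}{(1-\gamma)^2}$; and (ii) the gap from using empirical transition probabilities $\hat{\mbP}$ on the selected state-action pairs in place of $\mbP$. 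For (ii), a Hoeffding/$\ell_1$-concentration bound on each of the $\le S|\BA|$ estimated distributions (each entry within $\sqrt{\frac{\log(2S|\BA|/\delta)}{2K_1'}}$ of the truth, union-bounded, with $K_1 = S|\BA|K_1'$), propagated through the $\min$-of-$\min$-of-sum expression (which is $1$-Lipschitz in each coordinate, and summing over $S$ successor states multiplies by $S$), yields $\frac{4L\gamma S}{(1-\gamma)^2}\sqrt{\frac{S|\BA|\log(2S|\BA|/\delta)}{2K_1}}$.

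The main obstacle I anticipate is the reduction step, specifically arguing cleanly that $f^*-\hat f^*$ is controlled by the \emph{single} pointwise bound $|\epsp(g^*,\cdot)-\hepsp(g^*,\cdot)|$ rather than requiring a uniform bound over all $\gG\in\mathcal D$. This works as written because we only need the estimate to be an upper envelope at $g^*$ and we compare $\hat f^*$ against the \emph{suboptimal} choice $g^*$ inside \eqref{eq:P3}; but one must be careful that the high-probability event is the one on which the concentration at $g^*$'s selected actions holds, and that $\Kopt,\Topt$ are the same functions of $\gnum$ in both problems so the $\epssamp,\epsalg$ terms cancel exactly. The secondary technical point is verifying that the $\min_{\ba}$-of-sum functional is indeed $1$-Lipschitz with respect to perturbations of the individual $\mbP(\bs'|\bs,\ba)$ and that the successor-state summation is what turns a per-coordinate $\sqrt{\cdot}$ into the $S\sqrt{\cdot}$ appearing in the bound — both are elementary but need to be stated to get the constants right.
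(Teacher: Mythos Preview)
Your reduction step misreads $\hat f^*$. In the paper both $f^*$ and $\hat f^*$ are the \emph{true} utility (built from the true $\epsp$, not $\hepsp$), evaluated at $g^*$ and at $\hat g^*$ respectively --- the proposition measures the real-world utility lost by deploying the grouping chosen via the surrogate. Your equation $\hat f^* = f(\hepsp(\hat g^*,\cdot),\cdot,\cdot)$ instead treats $\hat f^*$ as the surrogate objective value, and your one-step reduction $f^*-\hat f^*\le f(\epsp(g^*),\cdot)-f(\hepsp(g^*),\cdot)$ depends entirely on that reading. With the correct definition one needs the standard three-term telescope
\[
f^*-\hat f^* = \bigl(f(g^*)-\hat f(g^*)\bigr)+\bigl(\hat f(g^*)-\hat f(\hat g^*)\bigr)+\bigl(\hat f(\hat g^*)-f(\hat g^*)\bigr),
\]
where $\hat f(\cdot):=f(\hepsp(\cdot),\cdot,\cdot)$; the middle term is $\le 0$ by optimality of $\hat g^*$ in \eqref{eq:P3}, and Lipschitzness on the two outer terms yields $2L\max_{g\in\mathcal{D}}|\epsp(g)-\hepsp(g)|$. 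Combined with the factor $2$ already sitting in front of the deviation factors inside $\epsp$, this produces the $4L$ you could not account for. In particular, your claim that a single pointwise bound at $g^*$ suffices is wrong: the (data-dependent) point $\hat g^*$ must also be covered, which is why the paper passes to a uniform-over-$\mathcal{D}$ bound.

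The remainder of your plan --- $|\betaropt-\hbetaropt|\le\etar$ from \cref{assumption:PR_deviation} with deterministic rewards, and $|\betapopt-\hbetapopt|$ controlled by $S\etap$ (action subsampling) plus $S\|\mbP-\hat{\mbP}\|_\infty$ via Hoeffding and a union bound over the $S|\BA|$ sampled pairs --- matches the paper's proof.
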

The performance gap between the optimal utility function obtained by solving \eqref{eq:P1} and \eqref{eq:P3} can be decomposed into two components: the action sampling error and the probability estimation error.
In certain MDPs where the actions in the same group are close enough (small $\etap$ and $\etar$) and the utility function shows limited variation with respect to changes in $\epsp$ (small $L$), the action sampling error is low. 
Note that the probability estimation error is associated with the accuracy of estimating the transition probability distribution. The good news is that the required number of samples is only proportional to the number of groups, which is significantly smaller than the size of the entire action space.
Therefore, these findings suggest that solving the approximate optimization problem \eqref{eq:P3} allows us to obtain the optimal grouping function $\gG$ with little performance degradation across a wide range of MDPs, while maintaining the sample costs at a reasonable level.


\section{Proof Sketch of Main Theorem and Tightness Analysis}

\label{sec:tightness_proof_sketch}
\paragraph{Proof Sketch}
The presented \cref{thm:1} establishes the upper bound for $\left\|\VMDP^*-\VMDP^{\piGT{T}}\right\|_{\infty}$, the performance loss between the optimal policy and the policy obtained by our algorithm. Let $\piG^*$ be the group-wise optimal policy. We can decompose this difference into two parts: the approximation error $\|\VMDP^*-\VMDP^{\piG^*}\|_{\infty}$ and the estimation error $\|\VMDP^{\piG^*}-\VMDP^{\piGT{T}}\|_{\infty}$. 

To establish an upper bound for the approximation error, we introduce an auxiliary MDP $\MDPone = \{\bcal{S},\bcal{A},\mbP_1,\BR_1,\gamma\}$ which shares the same state and action spaces as the original MDP but differs in terms of transition distribution and rewards.  The extent of this dissimilarity can be quantified by parameters $\betap$ and $\betar$. By comparing the value functions of executing the same policy on $\MDPone$ and $\MDP$, we can derive value function difference in terms of $\betap$, $\betar$, and the horizon ${1}/{(1-\gamma)}$.
We can further get the approximation error is upper bounded by the twice of the value function difference under $\MDPone$ and $\MDP$.

To derive the upper bound of the estimation error, we employ the leave-one-out analysis~\citep{agarwal2020model}, which constructs auxiliary MDPs where one state is set as absorbing while the others remain unchanged. 
This helps us to disentangle the connection between probability kernel estimation $\mbPGh$ and the optimal group-wise policy $\piG^*$~\citep{agarwal2020model}. 
Compared with~\citep{agarwal2020model,wang2021sample}, we extend the estimation error analysis from tabular MDPs to grouped MDPs, obtaining a minimax optimal upper bound for the estimation error.

\paragraph{Tightness Analysis}
We compare our result with a known lower bound on the estimation error to show that it is tight. Further, we provide an example in \cref{sec:approx_err} for 
which the approximation error is also relatively tight. 

\emph{Tightness of Estimation Error:} Recall that the estimation error contains sampling error $\epssamp$ which is related to $K$ and algorithmic error which is related to $T$. 
While sampling error decreases sublinearly with respect to $K$, the algorithmic error diminishes at a faster, linear rate. Consequently, the limited sample size predominantly limits the estimation performance. The required sample size to achieve $\epsilon$-optimal approximation error is $\TO(\frac{S\gnum}{(1-\gamma)^3\epsilon^2})$. Due to the lower bound of sample complexity in the generative model~\citep{azar2012sample}, we achieve the minimax optimal sample complexity. This implies the estimation error is tight only differing by a constant coefficient.

\emph{Example showing tightness of Approximation Error:} The example is designed such that the group-wise optimal policy $\piG^*$ has a high probability of selecting a state-action pair with zero long-term return while letting the optimal policy $\pi^*$ always remain in the state-action pair that has the highest long-term return. 
We show that for any $\varepsilon >0$, the difference between the derived performance loss of \cref{thm:1} and ${\epsapprox}/{2}$ is smaller than $\varepsilon$ when $\betap$ and $\betar$ are small enough.
This implies that the derived upper bound of the approximation error only differs by a constant factor of $2$. 

\section{Conclusion and Discussion}
This paper addresses the curse of dimensionality by exploring the inherent structure of group-wise similar action space. We introduced a linear decomposition model for representing the similarity of actions within the same group. 
Our analysis led to counter-intuitive findings: while a finer representation reduces the approximation error resulting from grouping, it also introduces higher estimation error when estimating the model, particularly in scenarios with limited samples and computational resources.
We formulated a general optimization problem to select the grouping function, aiming to strike a balance between the performance loss and sample/computational complexity.
Our work provides insights into the trade-off between complexity and performance loss when applying reinforcement learning algorithms to practical applications.


\newpage

\clearpage
\appendix
\section{Additional Notations}

Define the the set of policies of MDP $\MDP:=(\bcal{S},\bcal{A}, \mbP, \bR,\gamma)$ as $\Pi:=\{\pi|\pi:\bcal{S}\to \Omega(\bcal{A})\}$. Given $\MDP$ and policy $\pi\in\Pi:\bcal{S}\to \Omega(\bcal{A})$, we can have a collection of  trajectories $\tauMDP^{\pi}=(\bs_t,\ba_t,\Br_{t})_{t=0}^{\infty}$ starting from state-action pair $(\bs_0,\ba_0)$. Here, $\bs_{t+1}\sim \mbP(\cdot|\bs_t,\ba_t)$, $a_{t+1}\sim\pi(\cdot|\bs_{t+1})$, and $\Br_{t}=\BR(\bs_t,\ba_t)$. We present the definition of value functions $\QMDP$ and $\VMDP$ as for any $\bs\in\bcal{S}$ and $\ba\in\bcal{A}$,
\begin{equation} 
\label{eq:Q}
    \QMDP^{\pi}(\bs, \ba):=\mbE_{\tauMDP^{\pi}}\left[ \sum_{t=0}^{\infty} \gamma^{t} \Br_{t} \mid \bs_{0}=\bs, \ba_{0}=\ba\right],
\end{equation}
\begin{equation}
\label{eq:V}
    \VMDP^{\pi}(\bs):=\mbE_{\ba\sim\pi(\cdot|\bs)}\left[Q^{\pi}(\bs, \ba)\right],
\end{equation}
where $\mathbb{E}_{\tauMDP^{\pi}}[\cdot]$ is taking expectation with respect to the randomness of $\tauMDP^{\pi}$.
The optimal policy under $\MDP$ is defined as $\piMDP^*$, and its corresponding value functions are denoted as $\QMDP^*$ and $\VMDP^*$.
Define Bellman optimal operator $\mTMDP f(\bs,\ba):=\BR(\bs,\ba)+\gamma	\left \langle\mbP(\cdot|\bs,\ba),\max_{\ba'} f(\cdot,\ba')\right\rangle$. Bellman optimality equation indicates that $\QMDP^{*}$ is the fixed point of the Bellman optimality operator and satisfies $\QMDP^{*}=\mTMDP \QMDP^{*}$.

 In addition to the set of policies $\Pi$ defined on $\MDP$, we also define the set of grouped policies based on a "lower-level" policy $\inpi$ as follows:
\begin{equation}
\label{eq:PiG}
    \PiG:=\{\piG|\piG=\outpi\circ\inpi, \outpi:\bcal{S}\to\Omega(\mG)\}.
\end{equation}
Let $\piG^*$ denote the optimal policy belonging to $\PiG$ that maximizes $\VMDP^{\pi}$.

We also construct some special MDPs based on the grouping function $\gG$ and "lower-level" policy $\inpi$. 
We define the \textit{grouped MDP} as
$\MDPG:=\{\bcal{S},\mG,\mbPG,\BRG,\gamma\}$, where $\mbP_G(\bs'|\bs,h):=\mbE_{\ba\sim\inpi(\cdot|h)}\left[\mbP(\bs'|\bs,\ba)\right]$ and $\BR_G(\bs,h):=\mbE_{\ba\sim\inpi(\cdot|h)}\left[\BR(\bs,\ba)\right]$.

When $\mbP$ and $\BR$ of $\MDP$ have the linear decomposition form as \cref{eq:P_decomp,eq:R_decomp}, we define $\MDPone:=(\bcal{S},\bcal{A},\mbP'_1, \BR'_1,\gamma)$, where $\mbP'_1(\bs'|\bs,\ba):=\mbP_1(\bs'|\bs,\gG(\ba))$ and $\BR'_1(\bs,\ba):=\BR_1(\bs,\gG(\ba))$for any $(\bs,\ba)$.


For notational simplicity, we write $\bz_t=(\bs_t,\ba_t)$, $\mbP^{\pi}(\bz_{t+1}|\bz_t)=\mbP(\bs_{t+1}|\bz_t)\pi(\ba_{t+1}|\bs_{t+1})$, $\mbP_1^{'\pi}(\bz_{t+1}|\bz_t)=\mbP'_1(\bs_{t+1}|\bz_t)\pi(\ba_{t+1}|\bs_{t+1})$, and $\mbP_2^{\pi}(\bz_{t+1}|\bz_t)=\mbP_2(\bs_{t+1}|\bz_t)\pi(\ba_{t+1}|\bs_{t+1})$.
\section{Examples of grouped deviated model}

\subsection{Wireless access example}
\label{exp:wireless_access}
Consider a wireless access system where multiple users send packets to an access point. Assume we have an access point and $N$ users, where all users generate the same packet arrival rate and possess the same finite queue buffer. The local state of each user is defined as the number of packets in their buffer. When a user's local state is not less than $1$, they have two available actions: either remain idle or transmit a packet to the access point. A collision occurs if multiple users attempt to send packets simultaneously, resulting in transmission failure.

The success of a transition in the wireless access system depends not only on collision-free scheduling but also on the system's operational mode since the system may experience failures due to system errors. We denote the probability of the transmitter being in good condition and the corresponding transition kernel under state-action pair $(\bs,\ba)$ as $\alpha_{\text{good}}(\bs,\ba)$ and $\mbP_{\text{good}}(\cdot|\bs)$, respectively. Note that when a collision does not happen, $\mbP_{\text{good}}$ is irrelevant to which user is sending the packets, that is, $\mbP_{\text{good}}$ is irrelevant to $\ba$. We also denote the probability and transition kernel of the system failing due to system error $k$ under the state-action pair $(\bs,\ba)$ as $\alpha_{\text{err}_k}(\bs,\ba)$ and $\mbP_{\text{err}_k}(\cdot|\bs,\ba)$, respectively.

We can divide all actions into two groups $\mathcal{A}_1$ and $\mathcal{A}_2$ by the grouping function shown as
\[
g(\ba)=\begin{cases}
        1, & \text{if } \|\ba\|_1 = 1;\\
        2, & \text{otherwise}.
        \end{cases}
\]

Let $\betap(\bs,\ba)=1-\alpha_{\text{good}}(\bs,\ba)$, $\mbP_1(\cdot|\bs,1)=\mbP_{\text{good}}(\cdot|\bs)$, and $\mbP_2(\cdot|\bs,\ba)=\frac{\sum_{k=1}^K\alpha_{\text{err}_k}(\bs,\ba)\mbP_{\text{err}_k}(\cdot|\bs,\ba)}{1-\alpha_{\text{good}}(\bs,\ba)}$. For $\ba\in \mathcal{A}_1$, we can write the transition probability distribution as
\[
\begin{aligned}
\mbP(\cdot|\bs,\ba)&=\alpha_{\text{good}}(\bs,\ba)\mbP_{\text{good}}(\cdot|\bs,\ba)+\sum_{k=1}^K\alpha_{\text{err}_k}(\bs,\ba)\mbP_{\text{err}_k}(\cdot|\bs,\ba)\\
&=(1-\betap(\bs,\ba))\mbP_1(\cdot|\bs,1)+\betap(\bs,\ba)\mbP_2(\cdot|\bs,\ba).
\end{aligned}\]


For other actions belonging to $\mathcal{A}_2$, the transmission certainly fails, therefore they have the same transition distribution, which we denote as $\mbP_{\text{collision}}$. Therefore, for $\ba\in\mathcal{A}_2$, we have  $\mbP_1(\cdot|\bs,2)=\mbP_{\text{collision}}(\cdot|\bs,\ba)$, and $\betap(\bs,\ba)=0$. 

To account for the varying importance levels of users, we assign a reward of $w_{1,n}$ to the user $n$ whose packet is transmitted without collision. Additionally, to discourage long queues, we can introduce a penalty term $\boldsymbol{w}_2^T\bs$. The reward function is defined as
\[\BR(\bs,\ba)=\begin{cases}
\begin{aligned}
    &\max\left(\boldsymbol{w}_1^T\ba-\boldsymbol{w}_2^T\bs,0\right), \;\ba\in \mathcal{A}_1,\\
    &0, \;\text{otherwise}.
\end{aligned}
\end{cases}\]

\subsection{Downlink wireless transmission}
\label{exp: downlink}
Consider a downlink transmission system where a base station transmits packets to multiple users. In this system, adjacent users typically exhibit similar channel characteristics, leading to similar successful transmission rates. Furthermore, adjacent users often belong to the same category (e.g., machines in a factory) and have similar packet requirements.

Define the state $s\in\{0,1,\cdots, S-1\}$ as the length of the packet queue at the base station, with a buffer size of $S-1$. The action $a\in\{1,\cdots, A\}$ represents the user with whom the base station establishes a connection. Assume users can be classified into $G$ groups by the grouping function $g:\bcal{A}\to\{1,\cdots, G\}$, where the set of users belonging to group $h$ is $\Ah$. 
The arrival rate at the base station is $\lambda$. 

We assume the successful transmission rate and reward of user $a\in\Ah$ are $\mu(s, a)=(1-\Tilde{\beta}_P(s, a))\mu_1(s, h)+\Tilde{\beta}_P(s, a)w_P(s, a)$ and $R(s, a)=(1-\Tilde{\beta}_R(s, a))R_1(s,h)+\Tilde{\beta}_R(s, a)w_R(s, a)-c(s)$, respectively. $\mu_1$ and $R_1$ represent the common transmission rate and common reward function, respectively. $w_P(s, a)$ and $w_R(s, a)$ are independently sampled from the normal distribution. Note that we put penalty $c(s)$ proportional to the queue length to discourage congestion at the base station.

The setting of \cref{fig:sim_perf_diff} and \cref{fig:sim_sample_size} is as follows.
The common successful transmission rate and rewards of all groups are evenly spaced between $[0,1]$, given by $\mu_1(s,h)=\frac{2h-1}{G}$ and $R_1(s,h)=1-\frac{2h-1}{G}$. The penalty for queue-length $s$ is defined as $c(s)=\frac{s-1}{S}$. 
We set $\Tilde{\beta}_P(s,a)=\Tilde{\beta}_R(s,a)=0.01$ in \cref{fig:sim_perf_diff} and $\Tilde{\beta}_P(s,a)=\Tilde{\beta}_R(s,a)=0.001$ in \cref{fig:sim_sample_size}. The packet arrival rate at the base station is $\lambda=0.5$. 
It is worth noting that the transmission rate and reward values are clipped within the range of $[0, 1]$.
The probability transition matrix can be obtained based on the arrival rate at the base station and the successful transmission rate for each user.  

\section{Proof of Theorem \ref{thm:1}}

\subsection{Tightness Example}
We now give an example to show that the approximation error bound of \cref{thm:1} is tight with only a constant difference. Consider an MDP $\mathcal{M}=\{\bcal{S},\bcal{A},\mbP,\BR,\gamma,\rho\}$, where $\bcal{S}=\{s_0,s_1\}$ and $\bcal{A}=\{a_0,a_1\}$.
\begin{table}[t]
  \centering
  \begin{subtable}{0.4\textwidth}
    \centering
    \begin{tabular}{|c|c|c|c|c|}
        \hline
         \diagbox[width=5em]{$s'$}{$s,a$} & $s_0,a_0$ & $s_0,a_1$ & $s_1,a_0$ & $s_1,a_1$ \\
        \hline
        $s_0$ & $1$ & $1-\betap$ & $\betap$ & $0$  \\
        $s_1$ & $0$ & $\betap$ & $1-\betap$ & $1$ \\
        \hline
      \end{tabular}
    \caption{Transition kernel $\mbP(s'|s,a)$}
  \end{subtable}
  \hfill
  \begin{subtable}{0.4\textwidth}
    \centering
    \begin{tabular}{|c|c|c|}
      \hline
        \diagbox[width=5em]{$s$}{$a$}& $a_0$ & $a_1$ \\
      \hline
      $s_0$ & $0$ & $\betar$ \\
      $s_1$ & $1-\betar$ & $1$ \\
      \hline
    \end{tabular}
    \caption{Reward function $R(s,a)$}
  \end{subtable}
  \caption{Transition distribution and rewards setting of the example MDP}
  \label{table:example_MDP}
\end{table}

\begin{table}[t]
  \begin{subtable}{0.2\textwidth}
    \centering
    \begin{tabular}{|c|c|c|}
        \hline
         \diagbox[width=3em]{$s'$}{$s$} & $s_0$ & $s_1$\\
        \hline
        $s_0$ & $1$ & $0$   \\
        $s_1$ & $0$ & $1$  \\
        \hline
      \end{tabular}
    \caption{$\mbP_1(s'|s)$}
  \end{subtable}
    \begin{subtable}{0.35\textwidth}
    \centering
    \begin{tabular}{|c|c|c|}
        \hline
         \diagbox[width=3em]{$s'$}{$a$} & $s_0/s_1,a_0$ & $s_0/s_1,a_1$\\
        \hline
        $s_0$ & $1$ & $0$   \\
        $s_1$ & $0$ & $1$  \\
        \hline
      \end{tabular}
    \caption{$\mbP_2(s'|s,a)$}
  \end{subtable}
  \begin{subtable}{0.2\textwidth}
    \centering
    \begin{tabular}{|c|c|}
      \hline
        \diagbox[width=3em]{$s$}{$a$}& $a_0/a_1$ \\
      \hline
      $s_0$ & $0$  \\
      $s_1$ & $1$  \\
      \hline
    \end{tabular}
    \caption{$R_1(s)$}
  \end{subtable}
  \begin{subtable}{0.2\textwidth}
    \centering
    \begin{tabular}{|c|c|c|}
      \hline
        \diagbox[width=3em]{$s$}{$a$}& $a_0$ & $a_1$ \\
      \hline
      $s_0$ & $0$ &$0$  \\
      $s_1$ & $0$ & $1$  \\
      \hline
    \end{tabular}
    \caption{$R_2(s,a)$}
  \end{subtable}
  \caption{Linear decomposition of the example MDP}
  \label{table:example_MDP_P1P2R1R2}
\end{table}

The corresponding transition probability and reward function are shown in~\cref{table:example_MDP}.
Assume all actions belong to the same group, then it follows that there exist $(\mbP_1,\mbP_2)$ and $(\BR_1,\BR_2)$ such that $\mbP$ and $R$ can be decomposed as 
\[\mbP(\cdot|s,a)=(1-\betap)\mbP_1(\cdot|s)+\betap\mbP_2(\cdot|s,a),\]
\[\BR(s,a)=(1-\betar)\BR_1(s)+\betar\BR_2(s,a),\]
where $\mbP_1$, $\mbP_2$, $\BR_1$, and $\BR_2$ are shown in \cref{table:example_MDP_P1P2R1R2}.
Set the action-selection policy $\inpi$ as choosing $a_0$ with probability $1$.
Therefore, $\piG^*=\outpiopt\circ\inpi$ is 
\[\piG^*(a|s)=\begin{cases}
    1,&a=a_0,s\in\mathcal{S};\\
    0,&a=a_1,s\in\mathcal{S}.
\end{cases}\]
Applying $\piG^*$ into the example MDP, we can have the long-term return as 
\[
\begin{aligned}
   \VMDP^{\piG^*}(s_0)=&0,\\
   \VMDP^{\piG^*}(s_1)=&\frac{1-\betar}{1-(1-\betap)\gamma}.
\end{aligned}\]
Moreover, it is obvious that the optimal policy that can maximize global long-term returns is
\[\pi^*(a|s)=\begin{cases}
    0,&a=a_0,s\in\mathcal{S};\\
    1,&a=a_1,s\in\mathcal{S},
\end{cases}\]
and the corresponding value function is 
\[\begin{aligned}
    \VMDP^{*}(s_0)=&\frac{\betar}{1-\gamma(1-\betap)}+\frac{\betap}{(1-\gamma(1-\betap))(1-\gamma)},\\
    \VMDP^{*}(s_1)=&\frac{1}{1-\gamma}.
\end{aligned}\]
Then for any initial state $\bs\in\bcal{S}$, we have  
\begin{equation}
\label{eq:epsapprox_prime}
\begin{aligned}
\VMDP^{*}(\bs)-\VMDPG^{*}(\bs)
=&\underbrace{\frac{\betar}{1-(1-\betap)\gamma}
+\frac{\gamma\betap}{(1-(1-\betap)\gamma)(1-\gamma)}}_{\epsapprox'}.
\end{aligned}
\end{equation}

As shown in \cref{eq:epsp} of \cref{thm:1}, the approximation error resulting from treating all actions within the same group as identical is bounded by 
\begin{equation}
\label{eq:epsapprox}
    \epsapprox=\frac{2\betar}{1-\gamma}+\frac{2\gamma\betap}{(1-\gamma)^2},
\end{equation}
where we replace $\betapopt$ and $\betaropt$ by $\betap$ and $\betar$, since we have $\betapopt=\betap$ and $\betaropt=\betar$ in this example MDP.

Comparing \cref{eq:epsapprox_prime} and \cref{eq:epsapprox}, we can see that the actual error caused by grouping $\epsapprox'$ is close to the upper bound $\frac{\epsapprox}{2}$ when $\betap$ is small. 

Specifically, we can show that for any $\varepsilon>0$, if $\betap\leq sqrt{2}(1-\gamma)^2\varepsilon$ and $\betar\leq \frac{1-\gamma}{\sqrt{2}\gamma}$, then 
\[\left|\frac{\epsapprox}{2}-\epsapprox'\right|\leq\varepsilon.\]
To show this, we rewrite $\left|\frac{\epsapprox}{2}-\epsapprox'\right|$ as
\[\left|\frac{\epsapprox}{2}-\epsapprox'\right|=\underbrace{\frac{\betap\betar\gamma}{(1-\gamma)(1-(1-\betap)\gamma)}}_{\mathtt{a}}+\underbrace{\frac{(\betap\gamma)^2}{(1-\gamma)^2(1-(1-\betap)\gamma)}}_{\mathtt{b}}\leq\varepsilon.\]
By letting $\mathtt{a}\leq(1-\gamma)\varepsilon$ and $\mathtt{b}\leq\gamma\varepsilon$, we can get the conditions of $\betap$ and $\betar$ as
\[\betap\leq\sqrt{2}(1-\gamma)^2\varepsilon,\quad\betar\leq\frac{1-\gamma}{\sqrt{2}\gamma}.\]

We also verify the closeness between $\epsapprox'$ and $\frac{\epsapprox}{2}$ by simulation. Let $\betap$ and $\betar$ range from $0$ to $0.1$ and $\gamma=0.5$.
As shown in \cref{fig:approx_err}, we can approximate $\left\|\VMDP^{*}-\VMDPG^{*}\right\|_{\infty}$ by $\frac{\epsapprox}{2}$ when $\betap$ and $\betar$ are small enough.

Therefore we can achieve the upper bound shown in \cref{lem:approx_err} with a constant difference.


\begin{figure}
    \centering
    \includegraphics[width=0.3\textwidth]{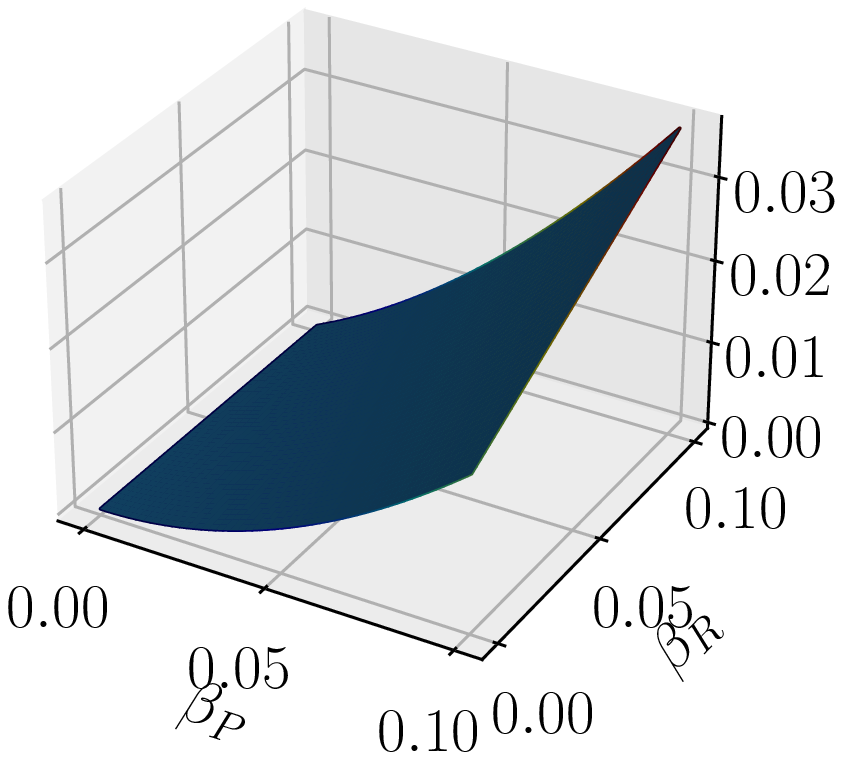}
    \caption{$\left|\frac{\epsapprox}{2}-\epsapprox'\right|$}
    \label{fig:approx_err}
\end{figure}

\subsection{Proof of Theorem \ref{thm:1}}
The presented \cref{thm:1} establishes the upper bound for $\left\|\VMDP^*-\VMDP^{\piGT{T}}\right\|_{\infty}$, the performance loss between the optimal policy and the policy obtained by our algorithm. 
The performance loss can be decomposed into two parts: the approximation error $\left\|\VMDP^*-\VMDP^{\piG^*}\right\|_{\infty}$ and the estimation error $\left\|\VMDP^{\piG^*}-\VMDP^{\piGT{T}}\right\|_{\infty}$, where $\piG^*$ is the optimal policy belonging to $\PiG$. 
We provide two lemmas to bound these errors: the approximation error lemma, whose derivations can be found in \cref{sec:approx_err}, and the estimation error lemma, whose derivations can be found in \cref{sec:sampl_err}.

\begin{lemma}
(Approximation Error Lemma)
\label{lem:approx_err}
Under \cref{assumption:bounded reward}, we have
\[\left\|\VMDP^*-\VMDP^{\piG^*}\right\|_{\infty}\leq 2\left(\frac{\betaropt}{1-\gamma}+\frac{\gamma\betapopt}{(1-\gamma)^2}\right),\]
where $\betapopt$ and $\betaropt$ are given in \cref{eq:betapopt_betaropt}.
\end{lemma}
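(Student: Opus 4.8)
The plan is to bound the approximation error $\|\VMDP^*-\VMDP^{\piG^*}\|_\infty$ by comparing value functions across three MDPs: the true MDP $\MDP$, the auxiliary MDP $\MDPone=(\bcal S,\bcal A,\mbP_1',\BR_1',\gamma)$ (with grouped kernel and reward), and the grouped MDP $\MDPG$. Since $\piG^*$ is the optimal policy in $\PiG$ and the grouped-MDP optimal policy (composed with $\inpi$) lies in $\PiG$, we have $\VMDP^{\piG^*}\ge \VMDP^{\piMDPG^*\circ\inpi}$, so it suffices to bound $\|\VMDP^*-\VMDP^{\piMDPG^*\circ\inpi}\|_\infty$. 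Actually the cleaner route: choose the linear-decomposition solutions $(\betapopt,\mbP_1,\mbP_2)$ and $(\betaropt,\BR_1,\BR_2)$ achieving the minima in \cref{eq:betapopt,eq:betaropt}, and work with $\MDPone$ defined from these. The optimal policy $\piMDPone^*$ of $\MDPone$ is constant on each group (since $\mbP_1',\BR_1'$ depend on $\ba$ only through $\gG(\ba)$), hence lies in $\PiG$ after identifying it with a higher-level policy; thus $\VMDP^{\piG^*}\ge\VMDP^{\piMDPone^*}$, and it remains to show $\|\VMDP^*-\VMDP^{\piMDPone^*}\|_\infty$ is at most the claimed bound.

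The key step is a simulation-lemma-style estimate: for \emph{any} fixed policy $\pi$, bound $\|\VMDP^\pi-\VMDPone^\pi\|_\infty$ in terms of $\betapopt$ and $\betaropt$. First I would write, for a fixed $\pi$, the Bellman-equation difference $\VMDP^\pi-\VMDPone^\pi = (\BR-\BR_1') + \gamma(\mbP^\pi\VMDP^\pi-\mbP_1^{'\pi}\VMDPone^\pi)$ pointwise in $\bz=(\bs,\ba)$, then add and subtract $\gamma\mbP^\pi\VMDPone^\pi$ to get $\|\VMDP^\pi-\VMDPone^\pi\|_\infty\le \|\BR-\BR_1'\|_\infty + \gamma\|(\mbP^\pi-\mbP_1^{'\pi})\VMDPone^\pi\|_\infty + \gamma\|\VMDP^\pi-\VMDPone^\pi\|_\infty$. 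From \cref{eq:P_decomp,eq:R_decomp}, $|\BR(\bs,\ba)-\BR_1'(\bs,\ba)| = \bbetar(\bs,\ba)|\BR_2-\BR_1|\le\betaropt$, and $|(\mbP(\cdot|\bs,\ba)-\mbP_1'(\cdot|\bs,\ba))\cdot v| = \bbetap(\bs,\ba)|(\mbP_2-\mbP_1)\cdot v|\le \betapopt\cdot\|v\|_\infty\cdot 1$ using $\|v\|_\infty\le 1/(1-\gamma)$ for $v=\VMDPone^\pi$ under bounded rewards. Rearranging gives $\|\VMDP^\pi-\VMDPone^\pi\|_\infty\le \frac{1}{1-\gamma}\left(\betaropt + \frac{\gamma\betapopt}{1-\gamma}\right) = \frac{\betaropt}{1-\gamma}+\frac{\gamma\betapopt}{(1-\gamma)^2}$.

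Finally I would combine the per-policy bound with a standard optimality-sandwich argument. Write $\VMDP^* - \VMDP^{\piMDPone^*} = \big(\VMDP^{\piMDP^*} - \VMDPone^{\piMDP^*}\big) + \big(\VMDPone^{\piMDP^*} - \VMDPone^{\piMDPone^*}\big) + \big(\VMDPone^{\piMDPone^*} - \VMDP^{\piMDPone^*}\big)$. The middle term is $\le 0$ since $\piMDPone^*$ is optimal for $\MDPone$; each of the two outer terms is bounded by $\frac{\betaropt}{1-\gamma}+\frac{\gamma\betapopt}{(1-\gamma)^2}$ by the per-policy estimate applied to $\pi=\piMDP^*$ and $\pi=\piMDPone^*$ respectively. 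Hence $\|\VMDP^*-\VMDP^{\piMDPone^*}\|_\infty\le 2\left(\frac{\betaropt}{1-\gamma}+\frac{\gamma\betapopt}{(1-\gamma)^2}\right)$, and since $\VMDP^{\piG^*}\ge\VMDP^{\piMDPone^*}$ (and $\VMDP^{\piG^*}\le\VMDP^*$ trivially), the same bound holds for $\|\VMDP^*-\VMDP^{\piG^*}\|_\infty$.

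The main obstacle I anticipate is the bookkeeping needed to justify $\piMDPone^*\in\PiG$ — i.e., that the optimal policy of $\MDPone$ can be realized as $\outpi\circ\inpi$ for an admissible lower-level policy $\inpi$, or alternatively to rephrase the argument entirely in terms of the grouped MDP $\MDPG$ and its relation to $\MDPone$ (noting $\mbPG,\BRG$ are $\inpi$-averages of $\mbP,\BR$, so the same simulation estimate with $\betapopt,\betaropt$ applies after averaging). The inequality $\betapopt = 1-\min_{\bs,h}\sum_{\bs'}\min_{\ba\in\Ah}\mbP(\bs'|\bs,\ba)$ needs to be identified with the minimal achievable $\max_{\bs,\ba}\bbetap(\bs,\ba)$ over decompositions — that is a small optimization argument (the best common kernel $\mbP_1$ puts mass $\min_{\ba\in\Ah}\mbP(\bs'|\bs,\ba)$ on each $\bs'$, renormalized), which I would state as a preliminary fact or fold into \cref{sec:approx_err}. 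Everything else is routine contraction-style manipulation.
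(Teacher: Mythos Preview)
Your proposal is correct and follows the same high-level strategy as the paper: introduce the auxiliary MDP $\MDPone$, establish the per-policy bound $\|\QMDP^\pi-\QMDPone^\pi\|_\infty\le\frac{\betaropt}{1-\gamma}+\frac{\gamma\betapopt}{(1-\gamma)^2}$, apply the sandwich $\VMDP^*-\VMDP^{\piMDPone^*}\le(\VMDP^*-\VMDPone^{\piMDP^*})+(\VMDPone^*-\VMDP^{\piMDPone^*})$, and conclude via $\piMDPone^*\in\PiG$. The only substantive difference is in how the per-policy bound is obtained: the paper unrolls along trajectories of $\MDPone$ using an auxiliary hybrid $G_T^\pi(\bz_0)$ (follow $\MDPone$ for $T$ steps, then plug in $\QMDP^\pi$) and telescopes over $T$, whereas you subtract the two Bellman equations and invoke $\gamma$-contraction directly --- the standard simulation-lemma argument. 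Your route is shorter and more transparent; the paper's trajectory telescoping makes the per-step contributions of $\betap$ and $\betar$ explicit but yields the identical bound. Your anticipated obstacle about $\piMDPone^*\in\PiG$ is handled in the paper exactly as you suggest (since $\mbP_1',\BR_1'$ depend on $\ba$ only through $\gG(\ba)$, an optimal policy of $\MDPone$ can be realized as $\outpi\circ\inpi$ for any fixed $\inpi$), and the identification of $\betapopt$ with the minimax decomposition is deferred to a separate optimization argument in \cref{sec:dev_fac_opt}, just as you propose.
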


\begin{lemma}
(Estimation Error Lemma)
\label{lem:sample_err}
Assume $\left\|\VMDPGh^*-\VMDPGh^{\outpiT{T}}\right\|_{\infty}\leq\epsopt$. When $K'\geq\frac{648\log{\left(\frac{8S\gnum }{\delta(1-\gamma)}\right)}}{(1-\gamma)^2}$, with probability larger than $1-\delta$, one has
  \[\left\|\VMDP^{\piG^*}-\VMDP^{\piGT{T}}\right\|_{\infty}\leq 20\gamma \sqrt{\frac{S\gnum \log{\left(\frac{8S\gnum }{\delta(1-\gamma)}\right)}}{(1-\gamma)^3K}}+\frac{4\epsopt}{1-\gamma}.\]
\end{lemma}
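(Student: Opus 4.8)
The plan is to reduce the claim to the standard ``model-based planning with a generative model'' problem on the \emph{grouped} MDP $\MDPG$, and then re-derive, for grouped MDPs, the minimax-optimal estimation bound of \citet{azar2012sample,agarwal2020model}. The reduction rests on the identity $\VMDP^{\outpi\circ\inpi}(\bs)=\VMDPG^{\outpi}(\bs)$ for every higher-level policy $\outpi$: under the fixed lower-level rule $\inpi$, selecting group $h$ in state $\bs$ induces exactly the transition $\mbPG(\cdot|\bs,h)$ and reward $\BRG(\bs,h)$, so the state-value process is unchanged. Hence $\VMDP^{\piG^*}=\VMDPG^*$ and $\VMDP^{\piGT{T}}=\VMDPG^{\outpiT{T}}$, where $\outpiT{T}$ is read as a policy of $\MDPG$ (equivalently of the empirical grouped MDP $\MDPGh$). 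Since the reward is deterministic, $\BRGh=\BRG$ and all the error comes from $\mbPGh-\mbPG$; it therefore suffices to bound $\|\VMDPG^*-\VMDPG^{\outpiT{T}}\|_\infty$.

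Next I would split, via the triangle inequality,
\[
\|\VMDPG^*-\VMDPG^{\outpiT{T}}\|_\infty\le \|\VMDPG^*-\VMDPGh^*\|_\infty+\|\VMDPGh^*-\VMDPGh^{\outpiT{T}}\|_\infty+\|\VMDPGh^{\outpiT{T}}-\VMDPG^{\outpiT{T}}\|_\infty .
\]
The middle term is at most $\epsopt$ by hypothesis; propagating this algorithmic slack through the value-difference identities used for the other two terms (where it reappears multiplied by the horizon $1/(1-\gamma)$) is what produces the $\tfrac{4\epsopt}{1-\gamma}$ in the statement. For $\|\VMDPG^*-\VMDPGh^*\|_\infty$ I would use the standard value-difference identities relating $\QMDPG^*-\QMDPGh^*$ to $(\mbPG-\mbPGh)\VMDPG^*$ through a factor $(I-\gamma\hat P^{\hat\pi^*})^{-1}$ (and their analogue using $\pi^*$) to reduce to controlling $\bigl|(\mbPG(\cdot|\bs,h)-\mbPGh(\cdot|\bs,h))\cdot v\bigr|$ for the relevant value vectors $v$, then apply Bernstein's inequality with a union bound over the $S\gnum$ state--group pairs, and finally invoke the ``law of total variance'' bound (which controls $\sum_t\gamma^t\Var_{\mbPG}(v)$ along a trajectory by $O((1-\gamma)^{-2})$ rather than $O((1-\gamma)^{-3})$) to upgrade the crude $(1-\gamma)^{-2}$ rate to the sharp $(1-\gamma)^{-3/2}$ rate in the bound. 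The hypothesis $K'\ge \tfrac{648\log(\cdot)}{(1-\gamma)^2}$ is used precisely here: it makes the subexponential Bernstein remainder lower order and justifies the ``bootstrapping'' step in which $\VMDPG^*$ inside the variance is replaced by $\VMDPGh^*$ at the cost of a negligible crude correction.

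The third term, $\|\VMDPGh^{\outpiT{T}}-\VMDPG^{\outpiT{T}}\|_\infty$, is the main obstacle: $\outpiT{T}$ is (close to) the greedy policy of $\QMDPGh^*$ and is therefore statistically entangled with the samples defining $\mbPGh$, so a fixed-policy concentration bound does not apply, and a union bound over all $\gnum^S$ deterministic higher-level policies would be far too lossy. I would handle this with the leave-one-out / absorbing-MDP device of \citet{agarwal2020model}, adapted to grouped MDPs: for each state $\bs$, form an auxiliary empirical MDP that agrees with $\MDPGh$ everywhere except that $\bs$ is made absorbing with a continuation value $u$ ranging over a fine $\epsilon$-net of $[0,\tfrac{1}{1-\gamma}]$ (the net size contributes the extra $\tfrac{1}{1-\gamma}$ inside the logarithm). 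The optimal value of this auxiliary MDP is a function of the samples that does not depend on the draws at $\bs$, so Bernstein applies to $(\mbPG(\cdot|\bs,h)-\mbPGh(\cdot|\bs,h))\cdot(\text{that value})$ with a union bound over $\bs$, $h$, and the net; one then shows the auxiliary optimal value is uniformly close to $\VMDPGh^*$ for the right $u$, transferring the bound back to $\VMDPGh^*$ and hence, up to the $\epsopt$ slack, to $\VMDPGh^{\outpiT{T}}$. Combining the three pieces, taking one union bound so the total failure probability is $\delta$, and carefully tracking constants yields the stated inequality. The remaining work is bookkeeping: verifying that the grouped-MDP versions of the total-variance lemma and the leave-one-out lemma go through verbatim with the action count replaced by $\gnum$, and keeping the numerical constants small enough to land at $20\gamma$ and $\tfrac{4\epsopt}{1-\gamma}$.
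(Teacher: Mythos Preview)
Your proposal is correct and follows essentially the same approach as the paper: reduce to the grouped MDP via $\VMDP^{\outpi\circ\inpi}=\VMDPG^{\outpi}$, decompose through $\MDPGh$, apply the simulation lemma to obtain $(\mbPGh-\mbPG)\VMDPGh^{*}$-type terms, control these via Bernstein plus the absorbing-state leave-one-out device of \citet{agarwal2020model}, and close the resulting self-referential inequality using the total-variance lemma together with the lower bound on $K'$. The only cosmetic difference is that the paper's three-term split routes the first piece through $\VMDPGh^{\outpiopt}$ (so $\VMDPG^{*}-\VMDPGh^{\outpiopt}$) rather than $\VMDPGh^{*}$, which lets both outer terms be reduced to the single quantity $\bigl|(\mbPGh-\mbPG)_{(\bs,h)}\VMDPGh^{*}\bigr|$ and hence handled by the \emph{same} leave-one-out argument---so in the paper leave-one-out is invoked for both outer terms, not just the third.
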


Using the above two lemmas, we can easily get the performance loss bound.
When $K'\geq\frac{648\log{\left(\frac{8S\gnum }{\delta(1-\gamma)}\right)}}{(1-\gamma)^2}$, with probability exceeding $1-\delta$, one has
\[
\begin{aligned}
\left\|\VMDP^*-\VMDP^{\piGT{T}}\right\|_{\infty}
\leq&\left\|\VMDP^*-\VMDP^{\piG^*}\right\|_{\infty}+\left\|\VMDP^{\piG^*}-\VMDP^{\piGT{T}}\right\|_{\infty}
\quad\text{(triangle inequality)}\\
\leq&
2\left(\frac{\betaropt}{1-\gamma}+\frac{\gamma\betapopt}{(1-\gamma)^2}\right)
+20\gamma\sqrt{\frac{S\gnum \log{\left(\frac{8S\gnum }{\delta(1-\gamma)}\right)}}{(1-\gamma)^3K}}+\frac{4\epsopt}{1-\gamma}\\
&\text{(by \cref{lem:approx_err} and \cref{lem:sample_err})}.
\end{aligned}
\]

\subsection{Approximation Error: Proof of Lemma \ref{lem:approx_err}}
\label{sec:approx_err}

\begin{proof}

We begin by demonstrating that the difference between $\VMDPone^*$ and $\VMDP^*$ is determined by $\betapopt$ and $\betaropt$. As both $\piMDPone^*$ and $\piG^*$ are policies within $\PiG$, with $\piG^*$ being the optimal policy in $\PiG$, we can subsequently establish an upper bound on the difference between $\VMDP^{\piG^*}$ and $\VMDP^*$. We now proceed to formalize this derivation.

Define 
\begin{equation}
\label{eq:G_T_def}
    G_T^{\pi}(\bz_0):=\mbE_{\tauMDPone^{\pi}}\left[\sum_{t=0}^{T}\gamma^t\Br_t+\gamma^{T+1}\mbE_{z_{T+1}\sim \mbPpi(\cdot|\bz_{T})}\left[\QMDP^{\pi}(\bz_{T+1})\right]\right].
\end{equation}
Substituting \cref{eq:P_decomp} into $G_T^{\pi}(\bz_0)$, we have
\begin{equation}
\label{eq:G_T}
\begin{aligned}
G_T^{\pi}(\bz_0)=&\mbE_{\tauMDPone^{\pi}}\left[\sum_{t=0}^{T}\gamma^t\Br_t+\gamma^{T+1}(1-\betap(\bz_T))\mbE_{\bz_{T+1}\sim\mbPpiprime_1(\cdot|\bz_T)}\left[\QMDP^{\pi}(\bz_{T+1})\right]\right. 
 \\
&\qquad\qquad
\left.+\gamma^{T+1}\betap(\bz_T)\mbE_{\bz'_{T+1}\sim\mbPpi_2(\cdot|\bz_{T})}\left[\QMDP^{\pi}(\bz'_{T+1})\right]\right]\\
\leq&\underbrace{\mbE_{\tauMDPone^{\pi}}\left[\sum_{t=0}^{T}\gamma^t\Br_t+\gamma^{T+1}\mbE_{\bz_{T+1}\sim\mbPpiprime_1(\cdot|\bz_T)}\left[\QMDP^{\pi}(\bz_{T+1})\right]\right]}_{A_{1}}\\
&+\betap\gamma^{T+1}\mbE_{\tauMDPone^{\pi}}\left[\mbE_{\bz'_{T+1}\sim\mbPpi_2(\cdot|\bz_{T})}\left[\QMDP^{\pi}(\bz'_{T+1})\right]\right]\quad\text{(using $0\leq \betap(\bz_T)\leq \betap$)}\\
\leq&A_{1}+\frac{\betap\gamma^{T+1}}{1-\gamma}\quad\text{(by $\QMDP^{\pi}(\bz)\leq\frac{1}{1-\gamma}$)}.
\end{aligned}
\end{equation}
We then derive the upper bound of $A_{1}$.
Since
\[\QMDP(\bz_t)=\mbE_{\Br_{t}\sim\BR(\bz_t)}\left[\Br_{t}\right]+\gamma\mbE_{\bz_{t+1}\sim\mbPpi(\cdot|\bz_t)}\left[\QMDP^{\pi}(\bz_{t+1})\right],\] we have
\begin{equation}
\label{eq:A1}
\begin{aligned}
        A_{1}
        =&\mbE_{\tauMDPone^{\pi}}\left[\sum_{t=0}^{T}\gamma^t\Br_t+\gamma^{T+1}\mbE_{\bz_{T+1}\sim\mbPpiprime_1(\cdot|\bz_T)}\left[\mbE_{\Br_{T+1}\sim\BR(\bz_{T+1})}\left[\Br_{T+1}\right]\right.\right.\\
    &
    \left.\left.
    +\gamma\mbE_{\bz_{T+2}\sim\mbPpi(\cdot|\bz_{T+1})}\left[\QMDP^{\pi}(\bz_{T+2})\right]\right]\right]\\
    =&\mbE_{\tauMDPone^{\pi}}\left[\sum_{t=0}^{T}\gamma^t\Br_t+\gamma^{T+1}\mbE_{\bz_{T+1}\sim\mbPpiprime_1(\cdot|\bz_T)}\left[(1-\betar(\bz_{T+1}))\mbE_{\Br_{T+1}\sim\BR'_1(\bz_{T+1})}\left[\Br_{T+1}\right]\right.\right.\\
    &
    \left.\left.
    +\betar(\bz_{T+1})\mbE_{\Br_{T+1}\sim\BR_2(\bz_{T+1})}\left[\Br_{T+1}\right]
    +\gamma\mbE_{\bz_{T+2}\sim\mbPpi(\cdot|\bz_{T+1})}\left[\QMDP^{\pi}(\bz_{T+2})\right]\right]\right]\text{(by \cref{eq:R_decomp})}\\
    =&\mbE_{\tauMDPone^{\pi}}\left[\sum_{t=0}^{T}\gamma^t\Br_t+\gamma^{T+1}\mbE_{\bz_{T+1}\sim\mbPpiprime_1(\cdot|\bz_T)}\left[\mbE_{\Br_{T+1}\sim\BR'_1(\bz_{T+1})}\left[\Br_{T+1}\right]\right.\right.\\
    &
    \left.\left.
    +\betar\mbE_{\Br_{T+1}\sim\BR_2(\bz_{T+1})}\left[\Br_{T+1}\right]
    +\gamma\mbE_{\bz_{T+2}\sim\mbPpi(\cdot|\bz_{T+1})}\left[\QMDP^{\pi}(\bz_{T+2})\right]\right]\right]\text{(by $0\leq \betar(\bz_T)\leq \betap$)}\\
    =&\mbE_{\tauMDPone^{\pi}}\left[\sum_{t=0}^{T+1}\gamma^t\Br_t+\gamma\mbE_{\bz_{T+2}\sim\mbPpi(\cdot|\bz_{T+1})}\left[\QMDP^{\pi}(\bz_{T+2})\right]\right]+\betar\gamma^{T+1}\quad\text{(by $r_{T+1}\leq 1$)}\\
    =&G_{T+1}^{\pi}(\bz_0)+\betar\gamma^{T+1}.
\end{aligned}
\end{equation}
Plugging \cref{eq:G_T} into \cref{eq:A1}, we have
\begin{equation}
\label{eq:G_T_update}
    G_T^{\pi}(\bz_0)\leq G_{T+1}^{\pi}(\bz_0)+\betar\gamma^{T+1}+\frac{\betap\gamma^{T+1}}{1-\gamma}.
\end{equation}

We can expand $\QMDP^{\pi}(\bz_0)$ as
\begin{equation}
\label{eq:QMDPpi}
\begin{aligned}
\QMDP^{\pi}(\bz_0)
=&\mbE_{r_0\sim \BR(\bz_0)}\left[r_0\right]+\mbE_{\bz_1\sim\mbPpi(\cdot|\bz_0) }\left[\QMDP^{\pi}(\bz_1)\right]\\
\leq& \mbE_{\Br_0\sim \BR'_1(\bz_0)}\left[\Br_0\right]+\betar\mbE_{\Br_0\sim \BR_2(\bz_0)}\left[\Br_0\right]+\mbE_{\bz_1\sim\mbPpi(\cdot|\bz_0) }\left[\QMDP^{\pi}(\bz_1)\right]\quad\text{(by \cref{eq:R_decomp})}\\
=&\mbE_{\tauMDPone^{\pi}}\left[\Br_0+\gamma\mbE_{z_1\sim \mbPpi(\cdot|\bz_{0})}\left[\QMDP^{\pi}(\bz_1)\right]\right]+\betar\\
=&G_{0}^{\pi}(\bz_0)+\betar \quad\text{(by definition of $G_{0}^{\pi}(\bz_0)$ in \cref{eq:G_T_def})}\\
\leq&\lim_{T\rightarrow\infty}\left[G_{T}^{\pi}(\bz_0)+
\sum_{t=0}^{T+1}\betar\gamma^{t}+\sum_{t=1}^{T+1}\frac{\betap\gamma^{t}}{1-\gamma}\right]\\
&\text{(by applying \cref{eq:G_T_update} for $T$ times and letting $T$ approach infinity)}\\
=&\lim_{T\rightarrow\infty}G_{T}^{\pi}(\bz_0)+\frac{\betar}{1-\gamma}+\frac{\betap\gamma}{(1-\gamma)^2}.
\end{aligned}
\end{equation}
Specifically,
\begin{equation}
\label{eq:infinite_G_T}
\begin{aligned}
\lim_{T\rightarrow\infty}G_{T}^{\pi}(\bz_0)
=&\lim_{T\rightarrow\infty}\mbE_{\tauMDPone^{\pi}}\left[\sum_{t=0}^{T}\gamma^t\Br_t+\gamma^{T+1}\mbE_{z_{T+1}\sim \mbPpi(\cdot|\bz_{T})}\left[\QMDP^{\pi}(\bz_{T+1})\right]\right]\\
=&\lim_{T\rightarrow\infty}\mbE_{\tauMDPone^{\pi}}\left[\sum_{t=0}^{T}\gamma^t\Br_t\right]=\QMDPone^{\pi}(\bz_0).
\end{aligned}
\end{equation}
Plugging \cref{eq:infinite_G_T} into \cref{eq:QMDPpi}, we finally get
\begin{equation}
    \label{eq:QMDP{pi}-QMDPone{pi}}
\begin{aligned}
   \left\|\QMDP^{\pi}-\QMDPone^{\pi}\right\|_{\infty}
=&\max_{\bz}\left|\QMDP^{\pi}(\bz)-\QMDPone^{\pi}(\bz)\right|\\
\leq&\frac{\betar}{1-\gamma}+\frac{\betap\gamma}{(1-\gamma)^2}.
\end{aligned}
\end{equation}

To get a tighter bound, we want to get the minimum $\betapopt$ and $\betaropt$ such that there exist $(\betapopt,\mbP_1^*,\mbP_2^*)$ and $(\betaropt,\BR_1^*,\BR_2^*)$ satisfying \cref{eq:P_decomp} and \cref{eq:R_decomp}. 
The $\betap$ minimization problem can be written as
\begin{equation}
\label{eq:deviation_factor_opt}
\begin{array}{l@{\quad}r@{\quad}l@{\quad}l@{\quad}l}
&\min& \betap&\\
&\text{s.t.}&\text{\cref{eq:P_decomp}},&(\bs',\bs,\ba)\in\bcal{S}\times \bcal{S}
\times\bcal{A},&\\
    &&\sum_{\bs'\in\mathcal{S}}\mbP_1(\bs'|\bs,h)=1,  &(\bs,h)\in\bcal{S}\times\mGG,\\
    &&\sum_{\bs'\in\mathcal{S}}\mbP_2(\bs'|\bs,\ba)=1,  &(\bs,\ba)\in\bcal{S}\times\bcal{A},\\
    &&\mbP_1(\bs'|\bs,h)\geq 0, &(\bs',\bs,h)\in\bcal{S}\times\bcal{S}\times\mGG,\\
    &&\mbP_2(\bs'|\bs,\ba)\geq 0, &(\bs',\bs,\ba)\in\bcal{S}\times\bcal{S}\times\bcal{A},\\
    &&0\leq \bbetap(\bs,\ba)\leq \betap, &(\bs,\ba)\in\bcal{S}\times\bcal{A}, 
\end{array} 
\end{equation}
where \cref{eq:P_decomp} is the linear decomposition of $\mbP$, the last constraint is the the definition of $\betap$, and all other constraints ensure $\mbP_1$ and $\mbP_2$ are transition probability distributions.

The $\betar$ minimization problem can be built in a similar way as 
\begin{equation}
\label{eq:reward_deviation_factor_opt}
\begin{array}{l@{\quad}r@{\quad}l@{\quad}l@{\quad}l}
&\min& \betar&\\
&\text{s.t.}&\text{\cref{eq:R_decomp}},&(\bs,\ba)\in \bcal{S}
\times\bcal{A},&\\
&&0\leq \BR_{1}(\bs,h)\leq 1, &(\bs,h)\in\bcal{S}\times\mGG,\\
 &&0\leq \BR_{2}(\bs,\ba)\leq 1, &(\bs,\ba)\in\bcal{S}\times\bcal{A},\\
&&0\leq \bbetar(\bs,\ba)\leq \betar, &(\bs,\ba)\in\bcal{S}\times\bcal{A}.
\end{array} 
\end{equation}

The derivations presented in \cref{sec:dev_fac_opt} reveal the following minimum values for $\betap$ and $\betar$.
\begin{equation}
\label{eq:betapopt_betaropt}
\begin{aligned}
    &\betapopt=\max_{\bs\in\bcal{S},h\in\mGG}(1-\sum_{\bs'\in\bcal{S}}\min_{\ba\in \Ah}\mbP(\bs'|\bs,\ba)),\\
&\betaropt = \max_{\bs\in\bcal{S},h\in\mathcal{G}}\left(\max_{\ba\in \Ah}\BR(\bs,\ba)-\min_{\ba\in \Ah}\BR(\bs,\ba)\right).
\end{aligned}  
\end{equation}
Plugging $\betapopt$ and $\betaropt$ into \cref{eq:QMDP{pi}-QMDPone{pi}}, we have
\begin{equation}
\label{eq:QMDP{pi}-QMDPone{pi}1}
   \left\|\QMDP^{\pi}-\QMDPone^{\pi}\right\|_{\infty}
\leq \frac{\betaropt}{1-\gamma}+\frac{\betapopt\gamma}{(1-\gamma)^2}.
\end{equation}

We can write $\VMDP^*(\bs)-\VMDP^{\piMDPone^*}(\bs)$ as
\[
\begin{aligned}
\VMDP^{*}(\bs)-\VMDP^{\piMDPone^*}(\bs)
=&\VMDP^{*}(\bs)
+(-\VMDPone^{\piMDP^*}(\bs)+\VMDPone^{\piMDP^*}(\bs))
+(-\VMDPone^{*}(\bs)+\VMDPone^{*}(\bs))
-\VMDP^{\piMDPone^*}(\bs)\\ 
=&(\VMDP^{*}(\bs)
-\VMDPone^{\piMDP^*}(\bs))+(\VMDPone^{\piMDP^*}(\bs)
-\VMDPone^{*}(\bs))+(\VMDPone^{*}(\bs)
-\VMDP^{\piMDPone^*}(\bs))\\
\leq&(\VMDP^{*}(\bs)
-\VMDPone^{\piMDP^*}(\bs))+(\VMDPone^{*}(\bs)
-\VMDP^{\piMDPone^*}(\bs)) \text{  (by $\VMDPone^{\piMDP^*}(\bs)-\VMDPone^{*}(\bs)\leq 0$) }.
\end{aligned}
\]
Since $\VMDP^{*}(\bs)-\VMDP^{\piMDPone^*}(\bs)\geq 0$, we have
\begin{equation}
\label{eq:VMopt-VMpiM1opt}
\begin{aligned}
\left\|\VMDP^{*}-\VMDP^{\piMDPone^*}\right\|_{\infty}
\leq&\left\|\VMDP^{*}
-\VMDPone^{\piMDP^*}+\VMDPone^{*}
-\VMDP^{\piMDPone^*}\right\|_{\infty}&\\
\leq&\left\|\VMDP^{*}
-\VMDPone^{\piMDP^*}\right\|_{\infty}+\left\|\VMDPone^{*}
-\VMDP^{\piMDPone^*}\right\|_{\infty} \qquad\quad\text{(by triangle inequality)}\\
=&\max_{\bs}\left|\mbE_{\ba\sim\piMDP^*(\cdot|\bs)}\left[\QMDP^{*}(\bs,\ba)
-\QMDPone^{\piMDP^*}(\bs,\ba)\right]\right|\\
&+\max_{\bs}\left|\mbE_{\ba\sim\piMDPone^*(\cdot|\bs)}\left[\QMDPone^{*}(\bs,\ba)
-\QMDP^{\piMDPone^*}(\bs,\ba)\right]\right| \\
\leq& \max_{\bs,\ba}\left|\QMDP^{*}(\bs,\ba)
-\QMDPone^{\piMDP^*}(\bs,\ba)\right|
+\max_{\bs,\ba}\left|\QMDPone^{*}(\bs,\ba)
-\QMDP^{\piMDPone^*}(\bs,\ba)\right| \\
=&\left\|\QMDP^{*}
-\QMDPone^{\piMDP^*}\right\|_{\infty}+\left\|\QMDPone^{*}
-\QMDP^{\piMDPone^*}\right\|_{\infty}\\
\leq& 2\left(\frac{\betaropt}{1-\gamma}+\frac{\gamma\betapopt}{(1-\gamma)^2}\right).
\end{aligned}
\end{equation}

We can write $\VMDP^{*}(\bs)-\VMDP^{\piG^*}(\bs)$ as
\[
\begin{aligned}
\VMDP^{*}(\bs)-\VMDP^{\piG^*}(\bs)
=&\VMDP^{\pi^*}(\bs)+(-\VMDP^{\piMDPone^*}(\bs)+\VMDP^{\piMDPone^*}(\bs))-\VMDP^{\piG^*}(\bs)\\
\leq&\VMDP^{\pi^*}(\bs)-\VMDP^{\piMDPone^*}(\bs)\\
&\text{(by $\piMDPone^{*}\in\PiG$ and $\piG^*=\arg\max_{\pi\in\PiG}\VMDP^{\pi}$)}.
\end{aligned}
\]

Since $\VMDP^{*}(\bs)-\VMDPG^{*}(\bs)\geq 0$ for all $\bs\in\bcal{S}$, we can apply infinity norm to both sides of the above equation and get
\[
\begin{aligned}
\left\|\VMDP^{*}-\VMDPG^{*}\right\|_{\infty}
\leq&\left\|\VMDP^{\pi^*}-\VMDP^{\piMDPone^*}\right\|_{\infty}\\
\leq&2\left(\frac{\betaropt}{1-\gamma}+\frac{\gamma\betapopt}{(1-\gamma)^2}\right)
\quad\text{(by \cref{eq:VMopt-VMpiM1opt})}.
\end{aligned}
\]

This concludes the proof of \cref{lem:approx_err}.
\end{proof}

\subsection{Estimation Error: Proof of Lemma \ref{lem:sample_err}}
\label{sec:sampl_err}

\begin{proof}
To quantify the estimation error, we convert the upper bound to the disparity in policy performance between two variants: $\MDPG$ and $\MDPGh$. To accomplish this, we utilize the leave-one-out analysis~\citep{agarwal2020model}. In this analysis, we create an auxiliary MDP denoted as $\MDPGab{s}{u}$, where one state $s$ is treated as an absorbing state while leaving the rest unchanged. This approach enables us to disentangle the relationship between the estimation of the probability kernel $\mbPGh$ and the optimal policy for the entire group $\piG^*$.

We can write $\VMDPG^{*}(\bs)-\VMDPG^{\outpiT{T}}(\bs)$ as
\begin{equation}
\label{eq:VMDPG{*}-VMDPG{outpiT{T}}}
\begin{aligned}
\VMDPG^{*}(\bs)-\VMDPG^{\outpiT{T}}(\bs)
=&\VMDPG^{*}(\bs)+(-\VMDPGh^{\outpiopt}(\bs)+\VMDPGh^{\outpiopt}(\bs))
+(-\VMDPGh^{*}(\bs)+\VMDPGh^{*}(\bs))\\
&+(-\VMDPGh^{\outpiT{T}}(\bs)+\VMDPGh^{\outpiT{T}}(\bs))-\VMDPG^{\outpiT{T}}(\bs)\\
\leq&(\VMDPG^{*}(\bs)-\VMDPGh^{\outpiopt}(\bs))
+(\VMDPGh^{*}(\bs)-\VMDPGh^{\outpiT{T}}(\bs))
+(\VMDPGh^{\outpiT{T}}(\bs)-\VMDPG^{\outpiT{T}}(\bs))\\
&\text{(by $\VMDPGh^{\outpiopt}(\bs)\leq\VMDPGh^{*}(\bs)$ for any $\bs$)}.\\  
\end{aligned}
\end{equation}


Applying infinite norm on both sides, we have
\begin{equation}
\label{eq:sample_err_V_difference}
\begin{aligned}
    \left\|\VMDPG^{*}-\VMDPG^{\outpiT{T}}\right\|_{\infty}
\leq&\left\|\VMDPG^{*}-\VMDPGh^{\outpiopt}\right\|_{\infty}+\left\|\VMDPGh^{*}-\VMDPGh^{\outpiT{T}}\right\|_{\infty}+\left\|\VMDPGh^{\outpiT{T}}-\VMDPG^{\outpiT{T}}\right\|_{\infty}\\
\leq&\left\|\QMDPG^{*}-\QMDPGh^{\outpiopt}\right\|_{\infty}+\epsopt+\left\|\QMDPGh^{\outpiT{T}}-\QMDPG^{\outpiT{T}}\right\|_{\infty}\\
&\text{(by $\left\|\VMDPGh^*-\VMDPGh^{\outpiT{T}}\right\|_{\infty}\leq\epsopt$)}.
\end{aligned}
\end{equation}

We first bound $\left\|\QMDPGh^{\outpiT{T}}-\QMDPG^{\outpiT{T}}\right\|_{\infty}$, and $\left\|\QMDPG^{*}-\QMDPGh^{\outpiopt}\right\|_{\infty}$ can then be bounded in the same manner.

By \cref{lem:sample_err_Q_diff}, we have
\begin{equation}
\label{eq:QMDPGh-QMDPG}
\left\|\QMDPGh^{\outpiT{T}}-\QMDPG^{\outpiT{T}}\right\|_{\infty}
\leq\left\|\gamma(\mathbf{I}-\gamma\mbPG^{\outpiT{T}})^{-1}(\mbPGh-\mbPG)\VMDPGh^{*}\right\|_{\infty}+\frac{\gamma\epsopt}{1-\gamma}.
\end{equation}
Since $\VMDPGh^{*}$ is not independent with $\mbPGh$, therefore we cannot directly use the concentration of the sum of independent variables to bound the sampling error $\left|(\mbPGh-\mbPG)\VMDPGh^{*}\right|$.
We construct the $\bs$-absorbing MDP $\MDPGab{s}{u}=\{\bcal{S},\mathcal{G},\mbPGab{s}{u},\BRGab{s}{u},\gamma\}$, where $\bs\in\bcal{S}$, $u\in\mathcal{U}_{\bs}$, and $\mathcal{U}_{\bs}$ is the set of the feasible $u$ in state $\bs$. For any $h_0\in\mathcal{G}$ and 
$\bs_0\in\bcal{S}$, $\mbPGab{\bs}{u}(\cdot|\bs_0,h_0)$ and $\BRGab{s}{u}(\bs_0,h_0)$ are defined as
\[
\begin{aligned}
\mbPGab{\bs}{u}(\bs_1|\bs_0,h_0):=&\left\{
\begin{aligned}
    &\mbPGh(\bs_1|\bs_0,h_0), &\bs_0\neq\bs,\bs_1\in\bcal{S},\\
    &1, &\bs_0=\bs,\bs_1=\bs,\\
    &0, &\bs_0=\bs,\bs_1\neq\bs,
\end{aligned}
\right.
\\
\BRGab{\bs}{u}(\bs_0,h_0):=&\left\{
\begin{aligned}
    &\BRG(\bs_0,h_0), &\bs_0\neq\bs,\\
    &u, &\bs_0=\bs.
\end{aligned}
\right.
\end{aligned}
\]

Denote $(\mbPGh)_{(\bs,h)}=\mbPGh(\cdot|\bs,h)$. 
We can rewrite $\left|(\mbPGh-\mbPG)\VMDPGh^{*}\right|$ as
\begin{equation}
\label{eq:(mbPGh-mbPG)VMDPGhopt}
 \begin{aligned}
\left|(\mbPGh-\mbPG)_{(\bs,h)}\VMDPGh^{*}\right|
\leq& \left|(\mbPGh-\mbPG)_{(\bs,h)}\VMDPGab{s}{u}^{*}\right|+\left|(\mbPGh-\mbPG)_{(\bs,h)}\left(\VMDPGab{s}{u}^{*}-\VMDPGh^{*}\right)\right|\\
\leq& \left|(\mbPGh-\mbPG)_{(\bs,h)}\VMDPGab{s}{u}^{*}\right|+\left\|\VMDPGab{s}{u}^{*}-\VMDPGh^{*}\right\|_{\infty}\\
&\text{(since $\left\|(\mbPGh-\mbPG)_{(\bs,h)}\right\|_{1}\leq 1$)}.
\end{aligned}   
\end{equation}
Note that $\VMDPGab{s}{u}^{*}$ is independent with $(\mbPGh)_{(\bs,h)}$ and $(\mbPG)_{(\bs,h)}$.  We can use the variant of Bernstein's inequality \cref{lem:Bernstein_ineq_var} and union bound to bound the first term. With probability exceeding $1-\delta/2$, for all $\bs,h,u$, one has
\begin{equation}
\label{eq:(mbPGh-mbPG)VMDPGabopt}
    \begin{aligned}
    \left|(\mbPGh-\mbPG)_{(\bs,h)}\VMDPGab{s}{u}^{*}\right|
    \leq&
    \frac{4\log{4|\mathcal{U}_{\bs}|S\gnum /\delta}}{3K'(1-\gamma)}+
    \sqrt{\frac{4\log{4|\mathcal{U}_{\bs}|S\gnum /\delta}}{K'}\Var_{(\mbPG)_{(\bs,h)}}\left[\VMDPGab{s}{u}^{*}\right]}\\
    \overset{(a)}{\leq}&
    \frac{4\log{4|\mathcal{U}_{\bs}|S\gnum /\delta}}{3K'(1-\gamma)}+\sqrt{\frac{4\log{4|\mathcal{U}_{\bs}|S\gnum /\delta}}{K'}}\left\|\VMDPGab{s}{u}^{*}-\VMDPGh^*\right\|_{\infty}\\
    &+\sqrt{\frac{4\log{4|\mathcal{U}_{\bs}|S\gnum /\delta}}{K'}}\sqrt{\Var_{(\mbPG)_{(\bs,h)}}\left[\VMDPGh^*\right]},
\end{aligned}
\end{equation}
where $(a)$ is because
\[
\begin{aligned}
    \sqrt{\Var_{(\mbPG)_{(\bs,h)}}\left[\VMDPGab{s}{u}^{*}\right]}
    \leq& \sqrt{\Var_{(\mbPG)_{(\bs,h)}}\left[\VMDPGab{s}{u}^{*}-\VMDPGh^*\right]}+\sqrt{\Var_{(\mbPG)_{(\bs,h)}}\left[\VMDPGh^*\right]} \\
    &\text{(by $\sqrt{\Var_{\mbP}\left[X+Y\right]}\leq \sqrt{\Var_{\mbP}\left[X\right]}+ \sqrt{\Var_{\mbP}\left[Y\right]})$}\\
    \leq&  \left\|\VMDPGab{s}{u}^{*}-\VMDPGh^*\right\|_{\infty}+\sqrt{\Var_{(\mbPG)_{(\bs,h)}}\left[\VMDPGh^*\right]}.
\end{aligned}
\]
Substituting \cref{eq:(mbPGh-mbPG)VMDPGabopt} into \cref{eq:(mbPGh-mbPG)VMDPGhopt}, we have
\begin{equation}
\label{eq:(mbPGh-mbPG)VMDPGhopt2_temp}
\begin{aligned}
\left|(\mbPGh-\mbPG)_{(\bs,h)}\VMDPGh^{*}\right|
\leq&
\left(\sqrt{\frac{4\log{4|\mathcal{U}_{\bs}|S\gnum /\delta}}{K'}}+1\right) \left\|\VMDPGab{s}{u}^{*}-\VMDPGh^*\right\|_{\infty}\\
&+\sqrt{\frac{4\log{4|\mathcal{U}_{\bs}|S\gnum /\delta}}{K'}}\sqrt{\Var_{(\mbPG)_{(\bs,h)}}\left[\VMDPGh^*\right]} +\frac{4\log{4|\mathcal{U}_{\bs}|S\gnum /\delta}}{3K'(1-\gamma)}.
\end{aligned}
\end{equation}
By \cref{lem:agarwal_lem_8,lem:agarwal_lem_9}, we have
\begin{equation}
\label{eq:VMDPab_sub}
\left\|\VMDPGab{s}{u}^{*}-\VMDPGh^*\right\|_{\infty}=
\left\|\VMDPGab{s}{u}^{*}-\VMDPGab{s}{\VMDPGh^*(\bs)}^{*}\right\|_{\infty}
\leq \left\|u-\VMDPGh^*(\bs)\right\|_{\infty},
\end{equation}
where the first equality is because \cref{lem:agarwal_lem_8} and the second inequality is because \cref{lem:agarwal_lem_9}.
Substituting \cref{eq:VMDPab_sub} into \cref{eq:(mbPGh-mbPG)VMDPGhopt2_temp}, we have
\begin{equation}
\label{eq:(mbPGh-mbPG)VMDPGhopt2}
\begin{aligned}
\left|(\mbPGh-\mbPG)_{(\bs,h)}\VMDPGh^{*}\right|
\leq&\left(\sqrt{\frac{4\log{4|\mathcal{U}_{\bs}|S\gnum /\delta}}{K'}}+1\right)\left\|\VMDPGh^{*}-u\right\|_{\infty}\\
&+
\sqrt{\frac{4\log{4|\mathcal{U}_{\bs}|S\gnum /\delta}}{K'}}\sqrt{\Var_{(\mbPG)_{(\bs,h)}}\left[\VMDPGh^*\right]}
+\frac{4\log{4|\mathcal{U}_{\bs}|S\gnum /\delta}}{3K'(1-\gamma)}.  
\end{aligned}
\end{equation}
We set $\mathcal{U}_{\bs}$ as the uniformly spaced elements in $[\VMDPG^*(\bs)-\frac{\gamma}{(1-\gamma)^2}\sqrt{\frac{2\log\left(\frac{4S\gnum }{\delta}\right)}{K'}},\VMDPG^*(\bs)+\frac{\gamma}{(1-\gamma)^2}\sqrt{\frac{2\log\left(\frac{4S\gnum }{\delta}\right)}{K'}}]$, and $|\mathcal{U}_{\bs}|=\frac{1}{(1-\gamma)^2}$. We replace $\MDPh$ and $\MDP$ in \cref{lem:azar_lem_4} by $\MDPGh$ and $\MDPG$, respectively, and then we have
\[
\begin{aligned}
    \min_{u\in\mathcal{U}}\left\|\VMDPGh^{*}-u\right\|_{\infty}
    \leq& (1-\gamma)^2\frac{2\gamma}{(1-\gamma)^2}\sqrt{\frac{2\log\left(\frac{4S\gnum }{\delta}\right)}{K'}}\\
    \leq&  2\gamma\sqrt{\frac{2\log\left(\frac{4S\gnum }{\delta}\right)}{K'}}.
\end{aligned}\]
Plugging the above equation into \cref{eq:(mbPGh-mbPG)VMDPGhopt2}, we have
\begin{equation}
\label{eq:eq:(mbPGh-mbPG)VMDPGhopt3}
    \begin{aligned}
\left|(\mbPGh-\mbPG)_{(\bs,h)}\VMDPGh^{*}\right|
\leq& \frac{8\log\left({\frac{8S\gnum }{\delta(1-\gamma)}}\right)}{3K'(1-\gamma)}+2\gamma\left(\sqrt{\frac{8\log{\frac{8S\gnum }{\delta(1-\gamma)}}}{K'}}+1\right)\sqrt{\frac{2\log\left(\frac{8S\gnum }{\delta}\right)}{K'}}\\
    &+
    \sqrt{\frac{8\log{\frac{8S\gnum }{\delta(1-\gamma)}}}{K'}}\sqrt{\Var_{(\mbPG)_{(\bs,h)}}\left[\VMDPGh^*\right]}\\
\leq&\left(\frac{8}{3(1-\gamma)}+8\gamma\right)\frac{\Delta_{\delta}}{K'}+2\sqrt{2}\left(\gamma+\sqrt{\Var_{(\mbPG)_{(\bs,h)}}\left[\VMDPGh^*\right]}\right)\sqrt{\frac{\Delta_{\delta}}{K'}},
\end{aligned}
\end{equation}
where $\Delta_{\delta}:=\log{\left(\frac{8S\gnum }{\delta(1-\gamma)}\right)}$.
Specifically, $\sqrt{\Var_{(\mbPG)_{(\bs,h)}}\left[\VMDPGh^*\right]}$ can be rewritten as
\begin{equation}
\label{eq:Var{mbPG}VMDPGh{outpiT}}
    \begin{aligned}
    \sqrt{\Var_{(\mbPG)_{(\bs,h)}}\left[\VMDPGh^{*}\right]}
    \leq& \sqrt{\Var_{(\mbPG)_{(\bs,h)}}\left[\VMDPGh^*-\VMDPGh^{\outpiT{T}}\right]}+\sqrt{\Var_{(\mbPG)_{(\bs,h)}}\left[\VMDPGh^{\outpiT{T}}-\VMDPG^{\outpiT{T}}\right]}\\
    &+\sqrt{\Var_{(\mbPG)_{(\bs,h)}}\left[\VMDPG^{\outpiT{T}}\right]}\text{(by $\sqrt{\Var_{\mbP}\left[X+Y\right]}\leq\sqrt{\Var_{\mbP}\left[X\right]}+\sqrt{\Var_{\mbP}\left[Y\right]}$)}\\
    \leq& \left\|\VMDPGh^*-\VMDPGh^{\outpiT{T}}\right\|_{\infty}
    +\left\|\VMDPGh^{\outpiT{T}}-\VMDPG^{\outpiT{T}}\right\|_{\infty}
    +\sqrt{\Var_{(\mbPG)_{(\bs,h)}}\left[\VMDPG^{\outpiT{T}}\right]}\\
    \leq&\epsopt+\left\|\QMDPGh^{\outpiT{T}}-\QMDPG^{\outpiT{T}}\right\|_{\infty}+\sqrt{\Var_{(\mbPG)_{(\bs,h)}}\left[\VMDPG^{\outpiT{T}} \right]}\\
    &\text{(by $\left\|\VMDPGh^*-\VMDPGh^{\outpiT{T}}\right\|_{\infty}\leq\epsopt$)}.
\end{aligned}
\end{equation}
We come back to the upper bound of $\left\|\QMDPGh^{\outpiT{T}}-\QMDPG^{\outpiT{T}}\right\|_{\infty}$. 
\[
\begin{aligned}
\left\|\QMDPGh^{\outpiT{T}}-\QMDPG^{\outpiT{T}}\right\|_{\infty}
\leq&\left\|\gamma(\mathbf{I}-\gamma\mbPG^{\outpiT{T}})^{-1}(\mbPGh-\mbPG)\VMDPGh^{*}\right\|_{\infty}+\frac{\gamma\epsopt}{1-\gamma}
\quad\text{(by \cref{eq:QMDPGh-QMDPG})}\\
\leq&\frac{\gamma}{1-\gamma}\left(\left(\frac{8}{3(1-\gamma)}+8\gamma\right)\frac{\Delta_{\delta}}{K'}+2\sqrt{2}\gamma\sqrt{\frac{\Delta_{\delta}}{K'}}\right)+\frac{\gamma\epsopt}{1-\gamma}\\
&+2\sqrt{2}\gamma\left\|(\mathbf{I}-\gamma\mbPG^{\outpiT{T}})^{-1}\sqrt{\Var_{\mbPG}\left[\VMDPGh^*\right]}\sqrt{\frac{\Delta_{\delta}}{K'}}\right\|_{\infty}
\quad\text{(by \cref{eq:eq:(mbPGh-mbPG)VMDPGhopt3})}\\
\leq&\frac{\gamma}{1-\gamma}\left(\left(\frac{8}{3(1-\gamma)}+8\gamma\right)\frac{\Delta_{\delta}}{K'}+2\sqrt{2}\gamma\sqrt{\frac{\Delta_{\delta}}{K'}}\right)+\frac{\gamma\epsopt}{1-\gamma}\left(1+2\sqrt{\frac{2\Delta_{\delta}}{K'}}\right)\\
&+2\gamma\sqrt{\frac{2\Delta_{\delta}}{K'}}\left\|(\mathbf{I}-\gamma\mbPG^{\outpiT{T}})^{-1}\sqrt{\Var_{\mbPG}\left[\VMDPG^{\outpiT{T}}\right]}\right\|_{\infty}\\
&+\frac{2\gamma}{1-\gamma}\sqrt{\frac{2\Delta_{\delta}}{K'}}\left\|\QMDPGh^{\outpiT{T}}-\QMDPG^{\outpiT{T}}\right\|_{\infty}\quad\text{(by \cref{eq:Var{mbPG}VMDPGh{outpiT}})}.
\end{aligned}
\]

We replace $\mbP$ and $\pi$ in \cref{lem:agarwal_lem_5} by $\mbPG$ and $\outpiT{T}$, respectively, and then we have 
\[\left\|(\mathbf{I}-\gamma\mbPG^{\outpiT{T}})^{-1}\sqrt{\Var_{\mbPG}\left[\VMDPG^{\outpiT{T}}\right]}\right\|_{\infty}\leq\sqrt{\frac{2}{(1-\gamma)^3}}.\]
Plugging the above equation into $\left\|\QMDPGh^{\outpiT{T}}-\QMDPG^{\outpiT{T}}\right\|_{\infty}$, we then get
\[
\begin{aligned}
\left\|\QMDPGh^{\outpiT{T}}-\QMDPG^{\outpiT{T}}\right\|_{\infty}
\leq&\frac{\gamma}{1-\gamma}\left(\left(\frac{8}{3(1-\gamma)}+8\gamma\right)\frac{\Delta_{\delta}}{K'}+2\sqrt{2}\gamma\sqrt{\frac{\Delta_{\delta}}{K'}}\right)+\frac{\gamma\epsopt}{1-\gamma}\left(1+2\sqrt{\frac{2\Delta_{\delta}}{K'}}\right)\\
&+4\gamma\sqrt{\frac{\Delta_{\delta}}{K'}}\sqrt{\frac{1}{(1-\gamma)^3}}
+\frac{2\gamma}{1-\gamma}\sqrt{\frac{2\Delta_{\delta}}{K'}}\left\|\QMDPGh^{\outpiT{T}}-\QMDPG^{\outpiT{T}}\right\|_{\infty}.
\end{aligned}
\]
Rearranging the above equation, we have
\[
\begin{aligned}
\left\|\QMDPGh^{\outpiT{T}}-\QMDPG^{\outpiT{T}}\right\|_{\infty}
\leq&\underbrace{\frac{1}{1-\frac{2\gamma}{1-\gamma}\sqrt{\frac{2\Delta_{\delta}}{K'}}}}_{\mathtt{a}}
\left(
\underbrace{\frac{\gamma}{1-\gamma}\left(\frac{8}{3(1-\gamma)}+8\gamma\right)\frac{\Delta_{\delta}}{K'}}_{\mathtt{b}}\right.\\
&\left.\quad
+\underbrace{\left(\frac{2\sqrt{2}\gamma^2}{1-\gamma}+4\gamma\sqrt{\frac{1}{(1-\gamma)^3}}\right)\sqrt{\frac{\Delta_{\delta}}{K'}}}_{\mathtt{c}}
+\frac{\gamma\epsopt}{1-\gamma}\left(1+2\underbrace{\sqrt{\frac{2\Delta_{\delta}}{K'}}}_{\mathtt{d}}\right)
\right). 
\end{aligned}
\]
When $K'\geq\frac{648\Delta_{\delta}}{(1-\gamma)^2}$, we have
\[
\mathtt{a}\leq\frac{9}{8},\;
\mathtt{b}\leq\frac{8}{9}\gamma\sqrt{\frac{\Delta_{\delta}}{(1-\gamma)^3K'}},\;
\mathtt{c}\leq 8\gamma\sqrt{\frac{\Delta_{\delta}}{(1-\gamma)^3K'}},\;
\mathtt{d}\leq\frac{5}{6}.
\]
Therefore, we get the upper bound of $\left\|\QMDPGh^{\outpiT{T}}-\QMDPG^{\outpiT{T}}\right\|_{\infty}$ as
\[
\left\|\QMDPGh^{\outpiT{T}}-\QMDPG^{\outpiT{T}}\right\|_{\infty}
\leq 10\gamma\sqrt{\frac{\Delta_{\delta}}{K'(1-\gamma)^3}}+\frac{3\gamma\epsopt}{1-\gamma}.
\]
With a similar derivation, we have
\[\left\|\QMDPG^{*}-\QMDPGh^{\outpiopt}\right\|_{\infty}
\leq 10\gamma\sqrt{\frac{\Delta_{\delta}}{K'(1-\gamma)^3}}.\]
Plugging the above equations into \cref{eq:sample_err_V_difference}, we finally get
\[
\begin{aligned}
\left\|\VMDPG^{*}-\VMDPG^{\outpiT{T}}\right\|_{\infty}
\leq& 20\gamma\sqrt{\frac{\Delta_{\delta}}{K'(1-\gamma)^3}}+\frac{3\gamma\epsopt}{1-\gamma}+\epsopt\\
\leq& 20\gamma\sqrt{\frac{S\gnum \Delta_{\delta}}{K(1-\gamma)^3}}+\frac{4\epsopt}{1-\gamma}.
\end{aligned}
\]

We finally show $\VMDPG^{*}=\VMDP^{\piG^*}$. Since $\piG^*\in\PiG$, there exists $\outpiopt$ such that $\piG^*=\outpiopt\circ\inpi$.
For any $\outpi:\bcal{S}\to\Omega(\mG)$ is a higher-level policy, we can establish the equivalence between $\VMDP^{\outpi\circ\inpi}$ and $\VMDPG^{\outpi}$ using the following reasoning.
\begin{equation}
\label{eq:Q_equal}
\begin{aligned}
\VMDP^{\outpi\circ\inpi}(\bs_0)
=&\sum_{h\in \mathcal{G}}\outpi(h|\bs_0)\mbE_{\ba_0\sim\inpi(\cdot|h)}\left[\BR(\bs_0,\ba_0)+\gamma\sum_{\bs_1}\mbP(\bs_1|\bs_0,\ba_0)\VMDP^{\outpi\circ\inpi}(\bs_1)\right] \\
&\qquad\qquad\qquad\qquad\qquad\qquad\qquad\qquad\qquad\qquad\qquad\quad
\text{  (by Bellman equation)}\\
=&\sum_{h\in \mathcal{G}}\outpi(h|\bs_0)\left(\mbE_{\ba_0\sim\inpi(\cdot|h)}\left[\BR(\bs_0,\ba_0)\right]\right.\\
&\qquad\qquad\qquad\qquad\qquad
\left.+\gamma\sum_{\bs_1}\mbE_{\ba_0\sim\inpi(\cdot|h)}\left[\mbP(\bs_1|\bs_0,\ba_0)\right]\VMDP^{\outpi\circ\inpi}(\bs_1)\right)\\
=&\sum_{h\in\mathcal{G}}\outpi(h|\bs_0)\left(\BR_G(\bs_0,h)+\gamma\sum_{\bs_1}\mbP_G(\bs_1|\bs_0,h)\VMDP^{\outpi\circ\inpi}(\bs_1)\right)\\
=&\mbE_{\tauMDPG^{\pi^o}}\left[\sum_{t=0}^{\infty}\gamma^t\Br_t|\bs_0\right]\quad \text{(by repeating above steps for infinite times)}\\
=&\VMDPG^{\outpi}(\bs_0)\quad \text{(by definition of $\VMDPG$)}.
\end{aligned}
\end{equation}

Therefore,
\[
\begin{aligned}
\left\|\VMDP^{\piG^*}-\VMDP^{\piGT{T}}\right\|_{\infty}
=&\left\|\VMDPG^{*}-\VMDPG^{\outpiT{T}}\right\|_{\infty}
\leq 20\gamma\sqrt{\frac{S\gnum\Delta_{\delta}}{K(1-\gamma)^3}}+\frac{4\gamma\epsopt}{1-\gamma}.
\end{aligned}
\]
This concludes the proof of \cref{lem:sample_err}.
\end{proof}

\subsection{Supplemental Lemmas of Theorem \ref{thm:1}}

\subsubsection{Minimization of  \texorpdfstring{$\betap$}{} and \texorpdfstring{$\betar$}{}}

\label{sec:dev_fac_opt}
We solve the $\betap$ and $\betar$ minimization problem \eqref{eq:deviation_factor_opt} and \eqref{eq:reward_deviation_factor_opt} in this section.

We rewrite \eqref{eq:deviation_factor_opt} as follows.
\[
\begin{array}{l@{\quad}r@{\quad}l@{\quad}l@{\quad}l}
&\min& \betap&\\
&\text{s.t.}&\text{\cref{eq:P_decomp}},&(\bs',\bs,\ba)\in\bcal{S}\times \bcal{S}
\times\bcal{A},&\\
    &&\sum_{\bs'\in\mathcal{S}}\mbP_1(\bs'|\bs,h)=1,  &(\bs,h)\in\bcal{S}\times\mGG,\\
    &&\sum_{\bs'\in\mathcal{S}}\mbP_2(\bs'|\bs,\ba)=1,  &(\bs,\ba)\in\bcal{S}\times\bcal{A},\\
    &&\mbP_1(\bs'|\bs,h)\geq 0, &(\bs',\bs,h)\in\bcal{S}\times\bcal{S}\times\mGG,\\
    &&\mbP_2(\bs'|\bs,\ba)\geq 0, &(\bs',\bs,\ba)\in\bcal{S}\times\bcal{S}\times\bcal{A},\\
    &&0\leq \bbetap(\bs,\ba)\leq \betap, &(\bs,\ba)\in\bcal{S}\times\bcal{A}, 
\end{array} 
\]
As the optimizations over $\betap(\bs,h), \bs\in\bcal{S}, h\in\mGG$ are independent, we can decompose the above optimization problem into $S\gnum$ sub-problems. Each sub-problem seeks to find the minimum value of $\betapopt(\bs,h)$ that satisfies the given constraints related to state $\bs$ and group $h$. Consequently, $\betapopt$ is determined as the maximum of $\betap^*(\bs,h)$ over all $\bs\in\bcal{S}$ and $h\in\mGG$. We formally express this relationship as follows.
\begin{equation}
\label{eq:prob_dev_fac_0}
    \betap^*=\max_{\bs\in\bcal{S},h\in\mathcal{G}}\betap^*(\bs,h),
\end{equation}
where $\betap^*(\bs,h)$ is the solution to the following problem.
    \begin{equation}
    \label{eq:deviation_factor_opt2}
\begin{array}{l@{\quad}r@{\quad}l@{\quad}l@{\quad}l}
&\min& \betap&\\
&\text{s.t.}&\text{\cref{eq:P_decomp}}, &(\bs',\ba)\in\bcal{S}\times \Ah,\\
&&\sum_{\bs'\in\mathcal{S}}\mbP_1(\bs'|\bs,h)=1,  & \\
&&\sum_{\bs'\in\mathcal{S}}\mbP_2(\bs'|\bs,\ba)=1,  &\ba\in \Ah,  \\
 &&\mbP_1(\bs'|\bs,h)\geq 0, &\bs'\in\bcal{S},\\
 &&\mbP_2(\bs'|\bs,\ba)\geq 0, &(\bs',\ba)\in\bcal{S}\times \Ah.\\
 &&0\leq \bbetap(\bs,\ba)\leq \betap, &\ba\in \Ah.
\end{array} 
\end{equation}

Without loss of generality, we assume $\betap(\bs,\ba_1)=\betap^*(\bs,h)$, where $\ba_1\in \Ah$. This implies that for any other actions $\ba$ such that $g(\ba_1)=g(\ba)$, we have $\betap(\bs,\ba)\leq\betap(\bs,\ba_1)$.

We firstly show the optimal solution $\betap^*(\bs,h)$ to Problem \eqref{eq:prob_dev_fac_0} is also the optimal solution to the following problem which relaxes the constraint $0\leq \bbetap(\bs,\ba)\leq \betap$.
\begin{equation}
\label{eq:deviation_factor_opt3}
\begin{array}{l@{\quad}r@{\quad}l@{\quad}l@{\quad}l}
&\min& \betap&\\
&\text{s.t.}&\mbP(\bs'|\bs,\ba)=(1-\betap) \mbP_1(\bs'|\bs,h)+\betap\mbP_2(\bs'|\bs,\ba), &(\bs',\ba)\in\bcal{S}\times \Ah,\\
&&\sum_{\bs'\in\mathcal{S}}\mbP_1(\bs'|\bs,h)=1,  & \\
&&\sum_{\bs'\in\mathcal{S}}\mbP_2(\bs'|\bs,\ba)=1,  &\ba\in \Ah,  \\
 &&\mbP_1(\bs'|\bs,h)\geq 0, &\bs'\in\bcal{S},\\
 &&\mbP_2(\bs'|\bs,\ba)\geq 0, &(\bs',\ba)\in\bcal{S}\times \Ah.\\
\end{array} 
\end{equation}
We verify the equivalence between Problem \eqref{eq:deviation_factor_opt2} and Problem \eqref{eq:deviation_factor_opt3} through two steps. We first show that $\betap^*(\bs,h)$ is a feasible solution to Problem \eqref{eq:deviation_factor_opt3}, then we show $\betap^*(\bs,h)$ is a optimal to Problem \eqref{eq:deviation_factor_opt3}.

(Step 1) Suppose $\mbP_1^*(\cdot|\bs,h)$ and $\mbP_2^*(\cdot|\bs,\ba)$ are the common and individual transition probability distribution when Problem \eqref{eq:deviation_factor_opt2} attains optimal. 
We construct $\mbP_1(\cdot|\bs,h)=\mbP_1^*(\cdot|\bs,h)$, and $\mbP_2(\cdot|\bs,\ba)=\frac{\betap^*(\bs,h)-\betap^*(\bs,\ba)}{\betap^*(\bs,h)}\mbP_1^*(\cdot|\bs,h)+\frac{\betap^*(\bs,\ba)}{\betap^*(\bs,h)}\mbP_2^*(\cdot|\bs,\ba)$. Then we can easily verify $\mbP_1(\cdot|\bs,h)$ and $\mbP_2(\cdot|\bs,\ba)$ satisfy constraints of problem \eqref{eq:deviation_factor_opt3}, respectively. 
    \begin{enumerate}[(1)]
    \item For the first constraint, we have
    \[
        \begin{aligned}
    \mbP(\cdot|\bs,\ba)=&(1-\betap^*(\bs,\ba)) \mbP_1^*(\cdot|\bs,h)+\betap^*(\bs,\ba) \mbP_2^*(\cdot|\bs,\ba)\\
    &
    \text{($\betap^*(\bs,h)$, $\mbP_1^*(\cdot|\bs,h)$, and $\mbP_2^*(\cdot|\bs,\ba)$ satisfy the first constraint of Problem \eqref{eq:deviation_factor_opt2})}\\
    =&(1-\betap^*(\bs,h)) \mbP_1(\cdot|\bs,h)+\betap^*(\bs,h) \mbP_2(\cdot|\bs,\ba) \\
    &
    \text{(substituting $\mbP_1(\cdot|\bs,h)$ and $\mbP_2(\cdot|\bs,\ba)$)}.
    \end{aligned}\]
    
    \item Since $\sum_{\bs'\in\mathcal{S}}\mbP_1(\bs'|\bs,h)=\sum_{\bs'\in\mathcal{S}}\mbP_1^*(\bs'|\bs,h)=1$, the second constraint is satisfied.

    \item $\sum_{\bs'\in\mathcal{S}}\mbP_2(\bs'|\bs,h)=\frac{\betap^*(\bs,h)-\betap^*(\bs,\ba)}{\betap^*(\bs,h)}+\frac{\betap^*(\bs,\ba)}{\betap^*(\bs,h)}=1$, then the third constraint is satisfied.
    
    \item By the definition of $\mbP_1^*$ and $\mbP_2^*$, we have $\mbP_1(\bs'|\bs,h)=\mbP_1^*(\bs'|\bs,h)\geq 0$ and $\mbP_2^*(\bs'|\bs,\ba)\geq 0$ for all $\bs'\in\bcal{S}$. 
    
    \item Since $0\leq\betap^*(\bs,\ba)\leq\betap^*(\bs,h)\leq 1$, then $\frac{\betap^*(\bs,h)-\betap^*(\bs,\ba)}{\betap^*(\bs,h)}\geq 0$, and $\frac{\betap^*(\bs,\ba)}{\betap^*(\bs,h)}\geq  0$. We have $\mbP_2(\bs'|\bs,\ba)=\frac{\betap^*(\bs,h)-\betap^*(\bs,\ba)}{\betap^*(\bs,h)}\mbP_1^*(\bs'|\bs,h)+\frac{\betap^*(\bs,\ba)}{\betap^*(\bs,h)}\mbP_2^*(\bs'|\bs,\ba)\geq 0$ for all $\bs'\in\bcal{S}$.
    \end{enumerate}

(Step 2) We then show that $\betap^*(\bs,h)$ is the optimal solution to Problem \eqref{eq:deviation_factor_opt3}. If there exists $(\betap(\bs,h),\mbP_1(\cdot|\bs,h),\mbP_2(\cdot|\bs,\ba))$ that satisfy all constraints of Problem \eqref{eq:deviation_factor_opt3} and $\betap(\bs,h)<\betap^*(\bs,h)$, we arrive at a contradiction. In this case, $\betap(\bs,h)$ is also a feasible solution to sub-Problem \cref{eq:deviation_factor_opt2}, and it contradicts the assumption that $\betap^*(\bs,h)$ is the optimal solution to Problem \eqref{eq:deviation_factor_opt2}.

We will focus on solving Problem \eqref{eq:deviation_factor_opt3}.
By the first constraint of problem \eqref{eq:deviation_factor_opt3}, for any $\ba,\ba'\in \Ah$, we have 
    \begin{equation}
        \label{eq:prob_dev_fac_1}
    \begin{aligned}
    \mbP(\bs'|\bs,\ba)-\mbP(\bs'|\bs,\ba_{\bs,h,\bs'}^*)&=\betap(\bs,h)\left(\mbP_2(\bs'|\bs,\ba_{\bs,h,\bs'}^*)-\mbP_2(\bs'|\bs,\ba)\right).
    \end{aligned}
    \end{equation}
Summing both sides of \cref{eq:prob_dev_fac_1} over $\bs'$ and using $\sum_{\bs'\in\bcal{S}}\mbP_2(\bs'|\bs,\ba)=1$, we have
\[
1-\sum_{\bs'\in\bcal{S}}\mbP(\bs'|\bs,\ba_{\bs,h,\bs'}^*)=\betap(\bs,h)\left(1-\sum_{\bs'\in\bcal{S}}\mbP_2(\bs'|\bs,\ba_{\bs,h,\bs'}^*)\right).
\]
Therefore $\betap(\bs,h)$ can be rewritten as
\[
\begin{aligned}
    \betap(\bs,h)&=\frac{1-\sum_{\bs'\in\bcal{S}}\mbP(\bs'|\bs,\ba_{\bs,h,\bs'}^*)}{1-\sum_{\bs'\in\bcal{S}}\mbP_2(\bs'|\bs,\ba_{\bs,h,\bs'}^*)}\\
    &\geq 1-\sum_{\bs'\in\bcal{S}}\mbP(\bs'|\bs,\ba_{\bs,h,\bs'}^*) \quad\text{(by $\mbP_2(\bs'|\bs,\ba_{\bs,h,\bs'}^*)\geq 0$)}\\
    &=1-\sum_{\bs'\in\bcal{S}}\min_{\ba\in \Ah}\mbP(\bs'|\bs,\ba)
    \quad\text{(by $\ba_{\bs,h,\bs'}^*=\arg\min_{\ba\in \Ah}\mbP(\bs'|\bs,\ba)$)},
\end{aligned}\]
where the inequality is tight when $\mbP_2(\bs'|\bs,\ba_{\bs,h,\bs'}^*)= 0$ for all $\bs',\bs\in\bcal{S}$. Thus we have
\[\betap^*(\bs,h)=1-\sum_{\bs'\in\bcal{S}}\min_{\ba\in \Ah}\mbP(\bs'|\bs,\ba).\]
Plugging $\betap^*(\bs,h)$ into \cref{eq:prob_dev_fac_0}, we have
$\betap^*=\max_{\bs\in\bcal{S},h\in\mathcal{G}}(1-\sum_{\bs'\in\bcal{S}}\min_{\ba\in \Ah}\mbP(\bs'|\bs,\ba))$.

$\betar$ minimization problem can be built in a similar way as the $\betap$ minimization problem.
\begin{equation}
\begin{array}{l@{\quad}r@{\quad}l@{\quad}l@{\quad}l}
&\min& \betar&\\
&\text{s.t.}&\text{\cref{eq:R_decomp}},&\\
&&0\leq \BR_{1}(\bs,h)\leq 1, &(\bs,h)\in\bcal{S}\times\mGG,\\
 &&0\leq \BR_{2}(\bs,\ba)\leq 1, &(\bs,\ba)\in\bcal{S}\times\bcal{A},\\
&&0\leq \betar(\bs,\ba)\leq \betar, &(\bs,\ba)\in\bcal{S}\times\bcal{A}.
\end{array} 
\end{equation}
Same to the discussion in the $\betap$ minimization, the optimization solution to the above problem is 
\begin{equation}
    \label{eq:reward_dev_fac_0}
    \betar^* = \max_{\bs\in\bcal{S},h\in\mathcal{G}}\betar^*(\bs,h),
\end{equation}
where  $\betar^*(\bs,h)$ is the optimal solution to the followin problem.
    \begin{equation}
    \label{eq:reward_deviation_factor_opt3}
  \begin{array}{l@{\quad}r@{\quad}l@{\quad}l@{\quad}l}
    &\min& \betar&\\
    &\text{s.t.}&\BR(\bs,\ba)=(1-\betar) \BR_{1}(\bs,h)+\betar \BR_{2}(\bs,\ba), &\ba\in \Ah,\\
    &&0\leq \BR_{1}(\bs,h)\leq 1,\\
    &&0\leq \BR_{2}(\bs,\ba)\leq 1, &\ba\in \Ah.
    \end{array} 
    \end{equation}

    By the first constraint in the above problem, for any $\bs\in\bcal{S}$, $\ba,\ba'\in \Ah$ such that $\BR(\bs,\ba)\neq \BR(\bs,\ba')$, we have
    \[
    \begin{aligned}
        \betar(\bs,h)&=\frac{\BR(\bs,\ba)-\BR(\bs,\ba')}{\BR_{2}(\bs,\ba)-\BR_{2}(\bs,\ba')}\\
        &\geq \BR(\bs,\ba)-\BR(\bs,\ba')\text{  (using $0\leq \BR_{2}(\bs,\ba)\leq 1$)}.
    \end{aligned}\]
    Therefore, $\betar(\bs,h)$ should satisfy
    \[
    \begin{aligned}
        \betar(\bs,h)\geq \max_{\ba,\ba'\in\Ah}(\BR(\bs,\ba)-\BR(\bs,\ba'))\\
        =\max_{\ba\in\Ah}\BR(\bs,\ba)-\min_{\ba\in\Ah}\BR(\bs,\ba),
    \end{aligned}\]
    where inequality is tight when $\BR_{2}(\bs,\ba)=1$ and $\BR_{2}(\bs,\ba')=0$ with $\ba=\arg\max_{\ba\in \Ah} \BR(\bs,\ba)$ and $\ba'=\arg\min_{\ba\in \Ah} \BR(\bs,\ba)$. Thus $\betar^*(\bs,h)=\max_{\ba\in \Ah}\BR(\bs,\ba)-\min_{\ba\in \Ah}\BR(\bs,\ba)$. Plugging $\betar^*(\bs,h)$ into \cref{eq:reward_dev_fac_0}, we have 
    \[\betar^*(\gG) = \max_{\bs\in\bcal{S},h\in\mathcal{G}}\left(\max_{\ba\in \Ah}\BR(\bs,\ba)-\min_{\ba\in \Ah}\BR(\bs,\ba)\right).\]
    concludes the proof.


\subsubsection{Supplemental Lemmas for Estimation Error}
\begin{lemma}
\label{lem:Q}
For any MDP $\MDP=(\bcal{S},\bcal{A},\mbP,\bR,\gamma)$, the action value function with any policy $\pi:\bcal{S}\to \Omega(\bcal{A})$ can be written as
    \[\QMDP^{\pi}=(\mathbf{I}-\gamma\mbPpi)^{-1}\bR.\]
\end{lemma}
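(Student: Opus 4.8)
The plan is to start from the definition of $\QMDP^{\pi}$ in \cref{eq:Q} and rewrite the discounted sum over trajectories as a geometric operator series in the one-step transition operator $\mbPpi$. First I would observe that, conditioned on $\bz_0=\bz$, the law of $\bz_t$ is governed by the $t$-step transition kernel $(\mbPpi)^t$: indeed $\bz_{t+1}$ depends on $\bz_t$ only through $\bs_{t+1}\sim\mbP(\cdot|\bz_t)$ and $\ba_{t+1}\sim\pi(\cdot|\bs_{t+1})$, which is precisely the definition of $\mbPpi(\bz_{t+1}|\bz_t)$ recorded in the additional notation. Hence by linearity of expectation and $\Br_t=\BR(\bz_t)$,
\[
\QMDP^{\pi}(\bz)=\sum_{t=0}^{\infty}\gamma^t\,\mbE\!\left[\BR(\bz_t)\mid \bz_0=\bz\right]=\sum_{t=0}^{\infty}\gamma^t\big((\mbPpi)^t\BR\big)(\bz),
\]
so that in vector form $\QMDP^{\pi}=\sum_{t=0}^{\infty}(\gamma\mbPpi)^t\BR$. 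The interchange of the infinite sum with the expectation is justified by \cref{assumption:bounded reward}, since $0\le\BR\le 1$ bounds the $t$-th scalar term by $\gamma^t$, giving absolute convergence.

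Next I would argue that this Neumann series converges in operator norm and equals $(\mathbf{I}-\gamma\mbPpi)^{-1}\BR$. Because $\mbPpi$ is a row-stochastic matrix, $\|\mbPpi\|_{\infty}=1$, hence $\|\gamma\mbPpi\|_{\infty}=\gamma<1$; therefore the partial sums $\sum_{t=0}^{N}(\gamma\mbPpi)^t$ are Cauchy, $\mathbf{I}-\gamma\mbPpi$ is invertible, and $\sum_{t=0}^{\infty}(\gamma\mbPpi)^t=(\mathbf{I}-\gamma\mbPpi)^{-1}$. Combining this with the previous display yields $\QMDP^{\pi}=(\mathbf{I}-\gamma\mbPpi)^{-1}\BR$.

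An equivalent and slightly shorter route is via the Bellman equation: from \cref{eq:Q} one has $\QMDP^{\pi}(\bz)=\BR(\bz)+\gamma\,\mbE_{\bz'\sim\mbPpi(\cdot|\bz)}[\QMDP^{\pi}(\bz')]$, i.e. $\QMDP^{\pi}=\BR+\gamma\mbPpi\QMDP^{\pi}$, so $(\mathbf{I}-\gamma\mbPpi)\QMDP^{\pi}=\BR$, and left-multiplying by $(\mathbf{I}-\gamma\mbPpi)^{-1}$ gives the claim. I do not expect a genuine obstacle here; the only step requiring a line of care is the invertibility/convergence argument, which is routine given $\gamma<1$ and the stochasticity of $\mbPpi$.
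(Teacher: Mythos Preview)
Your proposal is correct and follows essentially the same approach as the paper: expand $\QMDP^{\pi}$ from its definition as $\sum_{t\ge 0}(\gamma\mbPpi)^t\BR$ and sum the Neumann series to $(\mathbf{I}-\gamma\mbPpi)^{-1}\BR$. The paper's version is terser and omits the convergence/interchange justifications you supply, but the argument is the same.
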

\begin{proof}
By definition of $Q_{\mathcal{M}}^\pi(\boldsymbol{s}, \boldsymbol{a})$, we have 
\[
\begin{aligned}
\QMDP^{\pi}(\bs,\ba)
=&\mathbb{E}_{\tau_{\mathcal{M}}^\pi}\left[\sum_{t=0}^{\infty} \gamma^t \Br_t \mid \boldsymbol{s}_0=\boldsymbol{s}, \boldsymbol{a}_0=\boldsymbol{a}\right]\\
=&\sum_{t=0}^{\infty}\gamma^t\mbPpi(\bs_t=\bs',\ba_t=\ba'|\bs_0=\bs,\ba_0=\ba)\bR(\bs',\ba')\\
=&\sum_{t=0}^{\infty}(\gamma\mbPpi)^t\bR
=(\mathbf{I}-\gamma\mbPpi)^{-1}\bR.
\end{aligned}
\]
\end{proof}

\begin{lemma}
\label{lem:sample_err_Q_diff}
Let $\left|\VMDPGh^*-\VMDPGh^{\outpiT{T}}\right|_{\infty}\leq\epsopt$. Then
    \[
    \begin{aligned}
        \left\|\QMDPGh^{\outpiopt}-\QMDPG^{*}\right\|_{\infty}=&\gamma\left\|(\mathbf{I}-\gamma\mbPG^{\outpiopt})^{-1}(\mbPGh-\mbPG)\VMDPGh^{\outpiopt}\right\|_{\infty},\\
        \left\|\QMDPGh^{\outpiT{T}}-\QMDPG^{\outpiT{T}}\right\|_{\infty}\leq&
        \gamma\left\|(\mathbf{I}-\gamma\mbPG^{\outpiT{T}})^{-1}(\mbPGh-\mbPG)\VMDPGh^{*}\right\|_{\infty}+\frac{\gamma\epsopt}{1-\gamma}.
    \end{aligned}
    \]
\end{lemma}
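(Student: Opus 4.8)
The plan is to reduce both statements to a single ``simulation-lemma'' identity, using the fact that a deterministic reward makes $\BRGh=\BRG$, so that $\MDPGh$ and $\MDPG$ share the same reward and differ only in the transition kernel. Concretely, I will first show that for every higher-level policy $\outpi:\bcal{S}\to\Omega(\mG)$,
\[
\QMDPGh^{\outpi}-\QMDPG^{\outpi}=\gamma\bigl(\mathbf{I}-\gamma\mbPG^{\outpi}\bigr)^{-1}(\mbPGh-\mbPG)\VMDPGh^{\outpi}.
\]
To prove it, subtract the Bellman consistency equations $\QMDPGh^{\outpi}=\BRG+\gamma\mbPGh^{\outpi}\QMDPGh^{\outpi}$ and $\QMDPG^{\outpi}=\BRG+\gamma\mbPG^{\outpi}\QMDPG^{\outpi}$, add and subtract $\gamma\mbPG^{\outpi}\QMDPGh^{\outpi}$, and use the factorization $\mbPGh^{\outpi}((\bs',h')|(\bs,h))=\mbPGh(\bs'|\bs,h)\,\outpi(h'|\bs')$ (and likewise for $\mbPG$) to turn $(\mbPGh^{\outpi}-\mbPG^{\outpi})$ applied to $\QMDPGh^{\outpi}$ into $(\mbPGh-\mbPG)$ applied to $\VMDPGh^{\outpi}$; rearranging gives $(\mathbf{I}-\gamma\mbPG^{\outpi})(\QMDPGh^{\outpi}-\QMDPG^{\outpi})=\gamma(\mbPGh-\mbPG)\VMDPGh^{\outpi}$. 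Equivalently one can use \cref{lem:Q} together with the resolvent identity $A^{-1}-B^{-1}=B^{-1}(B-A)A^{-1}$ with $A=\mathbf{I}-\gamma\mbPGh^{\outpi}$ and $B=\mathbf{I}-\gamma\mbPG^{\outpi}$.

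The first statement then follows by taking $\|\cdot\|_{\infty}$ in the identity with $\outpi=\outpiopt$: since $\outpiopt$ is the optimal higher-level policy of $\MDPG$ (by the equivalence $\VMDP^{\outpi\circ\inpi}=\VMDPG^{\outpi}$ of \cref{eq:Q_equal}, optimizing over $\outpi$ on $\MDPG$ is the same as optimizing over $\PiG$ on $\MDP$), we have $\QMDPG^{\outpiopt}=\QMDPG^{*}$, so the identity is the claimed exact equality. For the second statement, apply the identity with $\outpi=\outpiT{T}$ and replace $\VMDPGh^{\outpiT{T}}$ by $\VMDPGh^{*}$: writing $\delta:=\VMDPGh^{*}-\VMDPGh^{\outpiT{T}}$, optimality of $\VMDPGh^{*}$ on $\MDPGh$ gives $\delta\geq 0$ entrywise and the hypothesis gives $\delta\leq\epsopt\mathbf{1}$, hence
\[
\QMDPGh^{\outpiT{T}}-\QMDPG^{\outpiT{T}}=\gamma(\mathbf{I}-\gamma\mbPG^{\outpiT{T}})^{-1}(\mbPGh-\mbPG)\VMDPGh^{*}-\gamma(\mathbf{I}-\gamma\mbPG^{\outpiT{T}})^{-1}(\mbPGh-\mbPG)\delta .
\]
The last term has $\|\cdot\|_{\infty}\leq\frac{\gamma\epsopt}{1-\gamma}$, because $(\mathbf{I}-\gamma\mbPG^{\outpiT{T}})^{-1}=\sum_{t\geq 0}\gamma^{t}(\mbPG^{\outpiT{T}})^{t}$ is nonnegative with all row sums $\frac{1}{1-\gamma}$, while $\|(\mbPGh-\mbPG)\delta\|_{\infty}\leq\epsopt$ since $\delta\geq 0$ forces both $\mbPGh\delta$ and $\mbPG\delta$ to lie entrywise in $[0,\epsopt]$. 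The triangle inequality then yields the claimed bound.

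The argument is largely mechanical, and I expect the only delicate point to be this last estimate: the two-sided bound $\|\VMDPGh^{\outpiT{T}}-\VMDPGh^{*}\|_{\infty}\leq\epsopt$ alone would only give $\frac{2\gamma\epsopt}{1-\gamma}$, and it is the one-sided domination $\VMDPGh^{\outpiT{T}}\leq\VMDPGh^{*}$ (nonnegativity of $\delta$ under stochastic kernels) that recovers the stated constant. A minor bookkeeping point is orienting the resolvent identity so that $(\mathbf{I}-\gamma\mbPG^{\cdot})^{-1}$ and $\VMDPGh^{\cdot}$, rather than their empirical/true counterparts, appear on the right-hand side; everything else is routine linear algebra on discounted-kernel geometric series.
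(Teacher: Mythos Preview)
Your proposal is correct and follows essentially the same approach as the paper: both derive the ``simulation-lemma'' identity via \cref{lem:Q} and the resolvent identity $A^{-1}-B^{-1}=B^{-1}(B-A)A^{-1}$, then for the second bound split $\VMDPGh^{\outpiT{T}}=\VMDPGh^{*}-(\VMDPGh^{*}-\VMDPGh^{\outpiT{T}})$ and control the residual through $\|(\mathbf{I}-\gamma\mbPG^{\outpiT{T}})^{-1}\|_{\infty}=\frac{1}{1-\gamma}$. Your write-up is in fact slightly more careful than the paper's in one place: the paper simply cites $\|\VMDPGh^{\outpiT{T}}-\VMDPGh^{*}\|_{\infty}\leq\epsopt$ to obtain $\frac{\gamma\epsopt}{1-\gamma}$, whereas you correctly observe that the two-sided bound alone would yield $\frac{2\gamma\epsopt}{1-\gamma}$ and that the one-sided domination $\VMDPGh^{\outpiT{T}}\leq\VMDPGh^{*}$ (so that $\mbPGh\delta,\mbPG\delta\in[0,\epsopt]$) is what recovers the stated constant.
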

\begin{proof}
We have
\[
\begin{aligned}
 \left\|\QMDPGh^{\outpiopt}-\QMDPG^{*}\right\|_{\infty}
=&\left\|(\mathbf{I}-\gamma\mbPGh^{\outpiopt})^{-1}\bR
-(\mathbf{I}-\gamma\mbPG^{\outpiopt})^{-1}\bR\right\|_{\infty}
 \text{  (by \cref{lem:Q})}\\
 =& \left\|(\mathbf{I}-\gamma\mbPG^{\outpiopt})^{-1}((\mathbf{I}-\gamma\mbPG^{\outpiopt})-(\mathbf{I}-\gamma\mbPGh^{\outpiopt}))\QMDPGh^{\outpiopt}\right\|_{\infty}\\
 =&\gamma\left\|(\mathbf{I}-\gamma\mbPG^{\outpiopt})^{-1}
 (\mbPGh^{\outpiopt}-\mbPG^{\outpiopt})\QMDPGh^{\outpiopt}\right\|_{\infty}\\
 =&\gamma\left\|(\mathbf{I}-\gamma\mbPG^{\outpiopt})^{-1}
 (\mbPGh-\mbPG)\VMDPGh^{\outpiopt}\right\|_{\infty}.
\end{aligned}
\]
Similar to the above derivation, we have
\[
\begin{aligned}
\left\|\QMDPGh^{\outpiT{T}}-\QMDPG^{\outpiT{T}}\right\|_{\infty}
=&\gamma\left\|(\mathbf{I}-\gamma\mbPG^{\outpiT{T}})^{-1}(\mbPGh-\mbPG)\VMDPGh^{\outpiT{T}}\right\|_{\infty}\\
=&\gamma\left\|(\mathbf{I}-\gamma\mbPG^{\outpiT{T}})^{-1}(\mbPGh-\mbPG)(\VMDPGh^{\outpiT{T}}-\VMDPGh^{*}+\VMDPGh^{*})\right\|_{\infty}\\
\leq&\gamma\left\|(\mathbf{I}-\gamma\mbPG^{\outpiT{T}})^{-1}(\mbPGh-\mbPG)(\VMDPGh^{\outpiT{T}}-\VMDPGh^{*})\right\|_{\infty}\\
&+\gamma\left\|(\mathbf{I}-\gamma\mbPG^{\outpiT{T}})^{-1}(\mbPGh-\mbPG)\VMDPGh^{*})\right\|_{\infty}\\
\leq&\frac{\gamma\epsopt}{1-\gamma}+\gamma\left\|(\mathbf{I}-\gamma\mbPG^{\outpiT{T}})^{-1}(\mbPGh-\mbPG)\VMDPGh^{*})\right\|_{\infty}\\
&\text{(by $\left\|\VMDPGh^{\outpiT{T}}-\VMDPGh^{*}\right\|_{\infty}\leq \epsopt$)}.
\end{aligned}
\]

\end{proof}

    

\begin{lemma}[Hoeffding's inequality for general bounded random variables (\cite{vershynin2018high}, Theorem 2.2.6)]
\label{lem:Hoeffding_ineq}
Let $X_1,\cdots, X_N$ be independent random variables. Assume that $X_i\in\left[m_i,M_i\right]$ for every $i$. Then, for any $t>0$, we have
\[\mathbb{P}\left\{\sum_{i=1}^N\left(X_i-\mathbb{E} X_i\right) \geq t\right\} \leq \exp \left(-\frac{2 t^2}{\sum_{i=1}^N\left(M_i-m_i\right)^2}\right).\]
\end{lemma}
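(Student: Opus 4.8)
The plan is to prove this by the standard Chernoff (exponential-moment) method, with Hoeffding's lemma on the moment generating function of a bounded centered random variable serving as the main tool. Write $S_N := \sum_{i=1}^N (X_i - \mathbb{E} X_i)$ and fix $t>0$. First I would apply the exponential Markov inequality: for every $\lambda>0$,
\[
\mathbb{P}\{S_N \geq t\} \leq e^{-\lambda t}\,\mathbb{E}\left[e^{\lambda S_N}\right].
\]
Since $X_1,\dots,X_N$ are independent, the centered variables $X_i - \mathbb{E} X_i$ are independent as well, so $\mathbb{E}[e^{\lambda S_N}] = \prod_{i=1}^N \mathbb{E}[e^{\lambda(X_i - \mathbb{E} X_i)}]$, which reduces the problem to bounding each single-variable moment generating function.

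The key step is Hoeffding's lemma: if $\mathbb{E} Y = 0$ and $Y \in [a,b]$ almost surely, then $\mathbb{E}[e^{\lambda Y}] \leq e^{\lambda^2 (b-a)^2/8}$ for all $\lambda$. I would establish it by using convexity of $x\mapsto e^{\lambda x}$ to obtain the pointwise bound $e^{\lambda y} \leq \frac{b-y}{b-a}e^{\lambda a} + \frac{y-a}{b-a}e^{\lambda b}$ on $[a,b]$, taking expectations and invoking $\mathbb{E} Y=0$, and then rewriting the resulting upper bound as $e^{\varphi(u)}$ where $\varphi(u) := -pu + \log(1-p+pe^u)$ with $p := -a/(b-a) \in [0,1]$ and $u := \lambda(b-a)$. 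A direct computation gives $\varphi(0)=\varphi'(0)=0$ and $\varphi''(u) = q(1-q) \leq \frac14$ where $q := pe^u/(1-p+pe^u) \in [0,1]$; Taylor's theorem with a second-order remainder then yields $\varphi(u)\leq u^2/8$, which is precisely the claimed MGF bound.

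Applying Hoeffding's lemma to $Y_i := X_i - \mathbb{E} X_i$, which takes values in an interval of length $M_i - m_i$, gives $\mathbb{E}[e^{\lambda S_N}] \leq \exp\left(\frac{\lambda^2}{8}\sum_{i=1}^N (M_i-m_i)^2\right)$, so with $\sigma^2 := \sum_{i=1}^N (M_i-m_i)^2$ we get $\mathbb{P}\{S_N\geq t\} \leq \exp\left(-\lambda t + \frac{\lambda^2\sigma^2}{8}\right)$ for every $\lambda>0$. The exponent is minimized at $\lambda = 4t/\sigma^2 > 0$, and substituting this value produces the bound $\exp(-2t^2/\sigma^2)$, which is the assertion. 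The only genuinely delicate part is the proof of Hoeffding's lemma itself — in particular verifying $\varphi''\leq \frac14$ and controlling the Taylor remainder — while everything else is a routine application of independence and the Chernoff bound.
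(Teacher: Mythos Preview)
Your proposal is correct and follows the standard Chernoff--Hoeffding route (exponential Markov, factorization via independence, Hoeffding's lemma on each factor, then optimization over $\lambda$). The paper does not supply its own proof of this lemma but simply cites it from \cite{vershynin2018high}, Theorem~2.2.6, whose argument is essentially the one you outline.
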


\begin{lemma}[Bernstein's inequality for bounded distributions~(\citet{vershynin2018high}, Theorem 2.8.4)]
\label{lem:Bernstein_ineq}
Let $X_1,\cdots,X_N$ be independent, mean zero random variables, such that $\left|X_i\right|\leq B$ for all $i$. Then for every $t\geq 0$, we have
\begin{equation}
\label{eq:Bernstein_ineq}
    \mathbb{P}\left\{\left|\sum_{i=1}^N X_i\right| \geq t\right\} \leq 2 \exp \left(-\frac{t^2 / 2}{\sigma^2+B t / 3}\right),
    \end{equation}
where $\sigma^2=\sum_{i=1}^N\Var\left[X_i\right]$.
\end{lemma}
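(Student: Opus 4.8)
The plan is to prove this via the exponential-moment (Chernoff) method. First I would control the upper tail $\mathbb{P}\{\sum_i X_i \geq t\}$ and then obtain the two-sided bound by applying the same argument to the variables $-X_i$ together with a union bound, which is precisely what produces the leading factor of $2$. For the upper tail, Markov's inequality applied to $e^{\lambda \sum_i X_i}$ gives, for every $\lambda > 0$,
\[
\mathbb{P}\left\{\sum_{i=1}^N X_i \geq t\right\} \leq e^{-\lambda t}\,\mathbb{E}\left[e^{\lambda \sum_i X_i}\right] = e^{-\lambda t}\prod_{i=1}^N \mathbb{E}\left[e^{\lambda X_i}\right],
\]
where the factorization across the product uses independence of the $X_i$.

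The crux of the argument is bounding each moment generating function $\mathbb{E}[e^{\lambda X_i}]$. I would expand it as a power series and use the mean-zero hypothesis to eliminate the linear term, writing $\mathbb{E}[e^{\lambda X_i}] = 1 + \sum_{k=2}^{\infty} \frac{\lambda^k \mathbb{E}[X_i^k]}{k!}$. Since $|X_i|\leq B$, for $k\geq 2$ one has $|\mathbb{E}[X_i^k]| \leq B^{k-2}\,\mathbb{E}[X_i^2] = B^{k-2}\Var[X_i]$. Combining this with the elementary inequality $k! \geq 2\cdot 3^{k-2}$ (valid for $k\geq 2$) turns the tail of the series into a geometric series, yielding, for $0 < \lambda < 3/B$,
\[
\mathbb{E}[e^{\lambda X_i}] \leq 1 + \frac{\Var[X_i]\,\lambda^2/2}{1 - \lambda B/3} \leq \exp\!\left(\frac{\Var[X_i]\,\lambda^2/2}{1-\lambda B/3}\right),
\]
where the last step uses $1+x\leq e^x$. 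Taking the product over $i$ and writing $\sigma^2 = \sum_i \Var[X_i]$ then gives $\mathbb{P}\{\sum_i X_i \geq t\} \leq \exp\!\big(-\lambda t + \tfrac{\sigma^2 \lambda^2/2}{1-\lambda B/3}\big)$.

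Finally I would optimize over the free parameter $\lambda \in (0, 3/B)$. Choosing $\lambda = t/(\sigma^2 + Bt/3)$ — which one verifies satisfies $\lambda B < 3$, since that is equivalent to $0 < 3\sigma^2$ — makes $1-\lambda B/3 = \sigma^2/(\sigma^2 + Bt/3)$, so that the exponent collapses exactly to $-\tfrac{t^2/2}{\sigma^2 + Bt/3}$, producing the claimed one-sided bound; the two-sided statement with the factor $2$ then follows by the symmetry argument above. I expect the main technical obstacle to be the moment generating function estimate: one must carefully justify the term-by-term bound on $\mathbb{E}[X_i^k]$, control the resulting series through $k!\geq 2\cdot 3^{k-2}$ to land on the factor $\tfrac{1}{1-\lambda B/3}$, and then confirm that the explicit choice of $\lambda$ both remains admissible and makes the exponent equal to the stated form rather than merely a bound of the same order.
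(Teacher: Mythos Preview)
The paper does not actually prove this lemma: it is stated as a citation to \citet{vershynin2018high}, Theorem 2.8.4, with no accompanying proof. Your proposal is the standard Chernoff--Cram\'er argument (exactly the one Vershynin gives) and is correct as written, including the MGF bound via $k!\geq 2\cdot 3^{k-2}$, the geometric summation, and the explicit choice $\lambda = t/(\sigma^2+Bt/3)$; the only trivial edge case you might mention is $\sigma^2=0$, where boundedness and zero mean force $X_i\equiv 0$ and the inequality is vacuous.
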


\begin{lemma}[Variant of Bernstein's inequality]
\label{lem:Bernstein_ineq_var}
    Let $X_1,\cdots,X_N$ be independent, identically distributed, mean zero random variables, such that $\left|X_i\right|\leq B$ and $\Var\left[X_i\right]=\Var\left[X\right]$ for all $i$. Then with probability exceeding $1-\delta/2$, we have
    \begin{equation}
        \frac{1}{N}\left|\sum_{i=1}^N X_i\right| \leq \frac{4B }{3N} \log\frac{4}{\delta}+\sqrt{\frac{4\Var\left[X\right]}{N}\log\frac{4}{\delta}}.
    \end{equation}
\end{lemma}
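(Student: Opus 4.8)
The plan is to derive this from the two-sided Bernstein bound of \cref{lem:Bernstein_ineq} by converting its tail estimate into a deviation bound that holds with probability at least $1-\delta/2$. Since $X_1,\dots,X_N$ are i.i.d., the variance proxy $\sigma^2=\sum_i\Var[X_i]$ appearing in \cref{eq:Bernstein_ineq} equals $N\Var[X]$, and $B$ is the almost-sure envelope, so \cref{lem:Bernstein_ineq} reads
\[\mathbb{P}\left\{\left|\sum_{i=1}^N X_i\right|\geq t\right\}\leq 2\exp\left(-\frac{t^2/2}{N\Var[X]+Bt/3}\right)\]
for every $t\geq 0$.

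Next I would decouple the two pieces of the denominator via $N\Var[X]+Bt/3\leq 2\max\{N\Var[X],Bt/3\}$, which makes the exponent at least $\min\{\,t^2/(4N\Var[X]),\,3t/(4B)\,\}$. Taking
\[t=2\sqrt{N\Var[X]\log\tfrac{4}{\delta}}+\tfrac{4B}{3}\log\tfrac{4}{\delta},\]
I would check that in both regimes of the maximum the exponent is at least $\log\frac{4}{\delta}$: when $N\Var[X]\geq Bt/3$ the inequality $t\geq 2\sqrt{N\Var[X]\log\frac4\delta}$ gives $t^2/(4N\Var[X])\geq\log\frac4\delta$, and otherwise $t\geq\frac{4B}{3}\log\frac4\delta$ gives $3t/(4B)\geq\log\frac4\delta$. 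Hence the right-hand side is at most $2e^{-\log(4/\delta)}=\delta/2$, so with probability exceeding $1-\delta/2$ we have $|\sum_i X_i|\leq t$; dividing by $N$ and rewriting $2\sqrt{N\Var[X]\log(4/\delta)}/N=\sqrt{4\Var[X]\log(4/\delta)/N}$ produces exactly the stated inequality.

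I do not anticipate a genuine obstacle here; the only thing to be careful about is that the single choice of $t$ above simultaneously dominates both regime-specific thresholds, which holds because it is their sum. As a cross-check I would also note the equivalent route of setting the exponent equal to $\log\frac4\delta$, solving the resulting quadratic $t^2-\frac{2B}{3}(\log\frac4\delta)t-2N\Var[X]\log\frac4\delta=0$ exactly, and using $\sqrt{a+b}\leq\sqrt a+\sqrt b$ on the discriminant; this yields the same bound with the same constants.
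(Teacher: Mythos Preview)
Your proposal is correct and follows essentially the same approach as the paper: both start from \cref{lem:Bernstein_ineq}, decouple the two terms in the denominator by the inequality $a+b\le 2\max\{a,b\}$ (equivalently, require $t^2$ to dominate twice each term separately), and then take $t$ to be the sum of the two resulting thresholds. The only cosmetic difference is that the paper normalizes by $N$ at the outset whereas you carry the unnormalized sum and divide by $N$ at the end; the constants and the logic match exactly.
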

\begin{proof}
   When $X_1,\cdots,X_N$ are identically distributed and $\Var\left[X_i\right]=\Var\left[X\right]$ for all $i$, \cref{eq:Bernstein_ineq} can be rewritten as
\[\mathbb{P}\left\{\frac{1}{N}\left|\sum_{i=1}^N X_i\right| \geq t\right\} \leq 2 \exp \left(-\frac{Nt^2 / 2}{\Var\left[X\right]+B t / 3}\right).\]

Let 
\begin{equation}
\label{eq:Bernstein_ineq_1}
    2 \exp \left(-\frac{Nt^2 / 2}{\Var\left[X\right]+B t / 3}\right)\leq \frac{\delta}{2}.
\end{equation}
we can rewrite the above equation as
\[t^2\geq \frac{2\Var\left[X\right]}{N}\log\frac{4}{\delta}+\frac{2B t}{3N} \log\frac{4}{\delta}.\]
A sufficient condition of $t$ satisfying the above inequality is
\[t^2\geq \frac{4\Var\left[X\right]}{N}\log\frac{4}{\delta}
\quad\text{and}\quad
t^2\geq \frac{4B t}{3N} \log\frac{4}{\delta}.\]
Therefore, when $t\geq \frac{4B }{3N} \log\frac{4}{\delta}+\sqrt{\frac{4\Var\left[X\right]}{N}\log\frac{4}{\delta}}$, 
\[\mathbb{P}\left\{\frac{1}{N}\left|\sum_{i=1}^N X_i\right| \geq t\right\} \leq \frac{\delta}{2}.\]
On the other words, with probability exceeding $1-\delta/2$, we have 
\[\frac{1}{N}\left|\sum_{i=1}^N X_i\right| \leq \frac{4B }{3N} \log\frac{4}{\delta}+\sqrt{\frac{4\Var\left[X\right]}{N}\log\frac{4}{\delta}}.\] 
\end{proof}

\begin{lemma} [\cite{azar2013minimax}, Lemma 4]
\label{lem:azar_lem_4}
Consider MDP $\MDP=\left\{\bcal{S},\bcal{A},\mbP,\BR,\gamma \right\}$ which satisfies \cref{assumption:bounded reward}. $\MDPh$ is a estimation of $\MDP$ based on the generative model with $n$ samples for each state-action pair.
With a probability exceeding $\delta$, one has
\[
\begin{aligned}
\|\VMDP^*-\VMDPh^{\piMDP^*}\|_{\infty}\leq& \frac{\gamma}{(1-\gamma)^2}\sqrt{\frac{2\log\left(\frac{2SA}{\delta}\right)}{n}},\\
\|\VMDP^*-\VMDPh^*\|_{\infty}\leq& \frac{\gamma}{(1-\gamma)^2}\sqrt{\frac{2\log\left(\frac{2SA}{\delta}\right)}{n}}.
\end{aligned}
\]
    
\end{lemma}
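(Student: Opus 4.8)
The plan is to reduce both inequalities to a single concentration estimate, namely control of $\|(\hat{\mbP}-\mbP)\VMDP^*\|_{\infty}$, where $\VMDP^*$ is the optimal value function of the \emph{true} model $\MDP$ and $\hat{\mbP}$ is the empirical transition kernel underlying $\MDPh$. The key structural fact that makes this possible is that $\VMDP^*$ is a fixed, deterministic vector and is therefore statistically independent of the sampling noise defining $\hat{\mbP}$, whereas $\VMDPh^*$ and $\VMDPh^{\piMDP^*}$ are not; the two algebraic identities used below are precisely the devices that replace those entangled quantities by $\VMDP^*$, at the cost of one extra factor $1/(1-\gamma)$. (I take the rewards to be observed exactly, consistent with \cref{thm:1}; if they were also estimated from the same $n$ samples, an analogous Hoeffding bound on $\|\hat{\BR}-\BR\|_{\infty}$ would simply add a lower-order term that I would carry along unchanged.)

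\textbf{Step 1 (concentration).} First I would fix a state--action pair $(\bs,\ba)$ and note that $(\hat{\mbP}(\cdot\mid\bs,\ba)-\mbP(\cdot\mid\bs,\ba))\VMDP^*$ is the empirical mean of $n$ i.i.d., mean-zero random variables, each of the form $\VMDP^*(\bs')-\mbE_{\bs'\sim\mbP(\cdot\mid\bs,\ba)}[\VMDP^*(\bs')]$ for an independent draw $\bs'$, hence bounded in absolute value by $\|\VMDP^*\|_{\infty}\le 1/(1-\gamma)$ thanks to \cref{assumption:bounded reward}. Applying Hoeffding's inequality (\cref{lem:Hoeffding_ineq}) for each of the $SA$ pairs together with a union bound then gives, with probability at least $1-\delta$, an estimate of the form $\|(\hat{\mbP}-\mbP)\VMDP^*\|_{\infty}\le \frac{1}{1-\gamma}\sqrt{2\log(2SA/\delta)/n}$, the precise numerical constant being inessential and absorbing the union-bound overhead.

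\textbf{Step 2 (error propagation).} For the bound on $\|\VMDP^*-\VMDPh^*\|_{\infty}$ I would use that $\VMDP^*$ and $\VMDPh^*$ are the fixed points of the Bellman optimality operators $\mTMDP$ and $\mT_{\MDPh}$ and write the telescoping split
\[
\VMDP^*-\VMDPh^*=(\mTMDP\VMDP^*-\mT_{\MDPh}\VMDP^*)+(\mT_{\MDPh}\VMDP^*-\mT_{\MDPh}\VMDPh^*).
\]
The second bracket has $\infty$-norm at most $\gamma\|\VMDP^*-\VMDPh^*\|_{\infty}$ by the $\gamma$-contraction of $\mT_{\MDPh}$, and the first bracket, coordinatewise, is $\gamma$ times a difference of maxima over $\ba$ of $(\mbP(\cdot\mid\bs,\ba)-\hat{\mbP}(\cdot\mid\bs,\ba))\VMDP^*$, hence at most $\gamma\|(\hat{\mbP}-\mbP)\VMDP^*\|_{\infty}$ because $|\max_\ba f(\ba)-\max_\ba g(\ba)|\le\max_\ba|f(\ba)-g(\ba)|$; rearranging yields $\|\VMDP^*-\VMDPh^*\|_{\infty}\le\frac{\gamma}{1-\gamma}\|(\hat{\mbP}-\mbP)\VMDP^*\|_{\infty}$, and Step~1 finishes it. For $\|\VMDP^*-\VMDPh^{\piMDP^*}\|_{\infty}$ I would instead use that $\piMDP^*$ is a \emph{fixed} policy, so $\QMDP^{\piMDP^*}$ and $Q_{\MDPh}^{\piMDP^*}$ solve policy-evaluation equations with the same reward; subtracting them and solving (via \cref{lem:Q}) gives the perturbation identity
\[
\QMDP^{\piMDP^*}-Q_{\MDPh}^{\piMDP^*}=\gamma(\mathbf{I}-\gamma\hat{\mbP}^{\piMDP^*})^{-1}(\mbP-\hat{\mbP})\VMDP^{\piMDP^*},
\]
and $\VMDP^{\piMDP^*}=\VMDP^*$ since $\piMDP^*$ is optimal for $\MDP$. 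Bounding the resolvent by $\|(\mathbf{I}-\gamma\hat{\mbP}^{\piMDP^*})^{-1}\|_{\infty\to\infty}=1/(1-\gamma)$ and using $\|\VMDP^*-\VMDPh^{\piMDP^*}\|_{\infty}\le\|\QMDP^{\piMDP^*}-Q_{\MDPh}^{\piMDP^*}\|_{\infty}$ again reduces everything to Step~1.

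\textbf{Main obstacle.} I do not expect a genuine difficulty; the one place that needs care is conceptual --- one must resist applying concentration to $(\hat{\mbP}-\mbP)$ paired with $\VMDPh^*$ or $\VMDPh^{\piMDP^*}$, since those vectors depend on $\hat{\mbP}$ and Hoeffding would be invalid. The contraction split (for $\VMDPh^*$) and the resolvent identity (for $\VMDPh^{\piMDP^*}$) are exactly what decouple the two, and the extra $1/(1-\gamma)$ they cost is why this is a crude (rather than variance-optimal) bound; sharpening it would require the leave-one-out / absorbing-MDP construction used in \cref{lem:sample_err}, which is unnecessary for the statement here.
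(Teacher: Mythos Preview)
Your proposal is correct. The paper does not supply its own proof of this lemma---it is simply quoted from \cite{azar2013minimax}---and what you have written is essentially the standard argument from that reference: Hoeffding on $(\hat{\mbP}-\mbP)\VMDP^*$ (valid because $\VMDP^*$ is independent of $\hat{\mbP}$), followed by the contraction/resolvent identities to absorb the extra $1/(1-\gamma)$.
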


\begin{lemma}[\cite{agarwal2020model} Lemma 5]
\label{lem:agarwal_lem_5}
    For any policy $\pi$ and MDP $\MDP=\left\{\bcal{S},\bcal{A},\mbP,\bR,\gamma\right\}$,
    \[\left\|\left(\mathbf{I}-\gamma \mbP^\pi\right)^{-1} \sqrt{\Var_{\mbP}\left[\VMDP^\pi\right]}\right\|_{\infty} \leq \sqrt{\frac{2}{(1-\gamma)^3}}.\]
\end{lemma}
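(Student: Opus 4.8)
The plan is to follow the standard \emph{total-variance} argument in two steps. Write $\sigma^{2}:=\Var_{\mbP}[\VMDP^{\pi}]$ for the one-step variance vector (entries $\sigma^{2}(\bz)=\Var_{\bz'\sim\mbPpi(\cdot\mid\bz)}[\VMDP^{\pi}(\bz')]$), and expand $(\mathbf{I}-\gamma\mbPpi)^{-1}=\sum_{t\geq 0}\gamma^{t}(\mbPpi)^{t}$, which converges because $\mbPpi$ is row-stochastic and $\gamma<1$; note also that this operator maps any entrywise-nonnegative vector bounded by $c\mathbf{1}$ to one bounded by $\tfrac{c}{1-\gamma}\mathbf{1}$. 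Fixing $\bz$, the $\bz$-coordinate of $(\mathbf{I}-\gamma\mbPpi)^{-1}\sqrt{\sigma^{2}}$ equals $\sum_{t\geq 0}\gamma^{t}\,\mbE[\sqrt{\sigma^{2}(\bz_{t})}\mid\bz_{0}=\bz]$ along the Markov chain induced by $\pi$. Splitting $\gamma^{t}=\gamma^{t/2}\cdot\gamma^{t/2}$ and applying Cauchy--Schwarz, then Jensen's inequality in the form $(\mbE\sqrt{X})^{2}\leq\mbE X$, bounds this by $\big(\sum_{t}\gamma^{t}\big)^{1/2}\big(\sum_{t}\gamma^{t}\,\mbE[\sigma^{2}(\bz_{t})\mid\bz_{0}=\bz]\big)^{1/2}=(1-\gamma)^{-1/2}\big([(\mathbf{I}-\gamma\mbPpi)^{-1}\sigma^{2}](\bz)\big)^{1/2}$. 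Hence it suffices to prove the scalar claim $\|(\mathbf{I}-\gamma\mbPpi)^{-1}\sigma^{2}\|_{\infty}\leq \tfrac{2}{(1-\gamma)^{2}}$, which then yields exactly $\sqrt{2/(1-\gamma)^{3}}$.

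For the scalar claim, let $\mathbb{V}^{\pi}(\bz):=\Var\big[\sum_{t\geq 0}\gamma^{t}\Br_{t}\mid\bz_{0}=\bz\big]$ be the variance of the discounted return (the reward may be taken deterministic; a stochastic reward with $r\in[0,1]$ merely adds a nonnegative reward-variance term below and does not affect the argument). Conditioning on the first transition and invoking the law of total variance gives the variance Bellman equation $\mathbb{V}^{\pi}=\gamma^{2}\sigma^{2}+\gamma^{2}\mbPpi\mathbb{V}^{\pi}$, equivalently $\sigma^{2}=\gamma^{-2}\mathbb{V}^{\pi}-\mbPpi\mathbb{V}^{\pi}$. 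Substituting this and using the identity $(\mathbf{I}-\gamma\mbPpi)^{-1}\mbPpi=\gamma^{-1}\big[(\mathbf{I}-\gamma\mbPpi)^{-1}-\mathbf{I}\big]$ rearranges into a sum of two entrywise-nonnegative terms,
\[(\mathbf{I}-\gamma\mbPpi)^{-1}\sigma^{2}=\gamma^{-2}(1-\gamma)\,(\mathbf{I}-\gamma\mbPpi)^{-1}\mathbb{V}^{\pi}+\gamma^{-1}\mathbb{V}^{\pi}.\]
Since the return lies in $[0,1/(1-\gamma)]$, Popoviciu's inequality gives $\|\mathbb{V}^{\pi}\|_{\infty}\leq\tfrac{1}{4(1-\gamma)^{2}}$; bounding each occurrence of $(\mathbf{I}-\gamma\mbPpi)^{-1}$ by the factor $\tfrac{1}{1-\gamma}$ in $\infty$-norm then gives $\|(\mathbf{I}-\gamma\mbPpi)^{-1}\sigma^{2}\|_{\infty}\leq(\gamma^{-2}+\gamma^{-1})\|\mathbb{V}^{\pi}\|_{\infty}\leq\tfrac{\gamma^{-2}+\gamma^{-1}}{4(1-\gamma)^{2}}$, which is $\leq\tfrac{2}{(1-\gamma)^{2}}$ as soon as $\gamma\geq\tfrac12$. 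For $\gamma<\tfrac12$ the trivial bound $\sigma^{2}\leq\tfrac{1}{4(1-\gamma)^{2}}$ already gives $\|(\mathbf{I}-\gamma\mbPpi)^{-1}\sigma^{2}\|_{\infty}\leq\tfrac{1}{4(1-\gamma)^{3}}<2\leq\tfrac{2}{(1-\gamma)^{2}}$, so the scalar claim holds for all $\gamma\in[0,1)$.

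The only real obstacle is the scalar claim: the naive estimate $\sigma^{2}\leq\|\VMDP^{\pi}\|_{\infty}^{2}\leq(1-\gamma)^{-2}$ together with the operator bound $(1-\gamma)^{-1}$ gives only $(1-\gamma)^{-3}$, which after the Cauchy--Schwarz reduction is off by a full factor $(1-\gamma)^{-1/2}$ from the target; the law of total variance is exactly what recovers that factor. The remaining technical nuisance is that the variance recursion naturally runs at discount $\gamma^{2}$ while the operator we must control runs at discount $\gamma$, which is what forces the $\gamma\geq\tfrac12$ versus $\gamma<\tfrac12$ case split (harmless, since for small $\gamma$ every bound is trivial); one must also keep all vector inequalities entrywise so that the monotonicity of $(\mathbf{I}-\gamma\mbPpi)^{-1}$ (its Neumann series has nonnegative terms) can be applied at each step.
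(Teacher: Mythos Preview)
Your proof is correct and follows essentially the same total-variance argument as the original proof in \citet{agarwal2020model}; the present paper does not reproduce a proof but simply cites that reference. One minor remark: for a randomized policy the one-step identity $\mathbb{V}^{\pi}=\gamma^{2}\sigma^{2}+\gamma^{2}\mbPpi\mathbb{V}^{\pi}$ should be the inequality $\mathbb{V}^{\pi}\geq\gamma^{2}\sigma^{2}+\gamma^{2}\mbPpi\mathbb{V}^{\pi}$ (the missing term is $\gamma^{2}\,\mbE_{s'}\Var_{a'\sim\pi(\cdot\mid s')}[\QMDP^{\pi}(s',a')]\geq 0$), but since you only need an upper bound on $\sigma^{2}$ and the resolvent is monotone, your subsequent estimates go through unchanged.
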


\begin{lemma}[\cite{agarwal2020model} Lemma 8]
\label{lem:agarwal_lem_8}
Let $u^*=\VMDP^*(\bs)$ and $u^{\pi}=\VMDP^{\pi}(\bs)$. We have
\[\VMDP^{*}=V_{\MDP,s, u^{*}}^{*}, \quad \text { and for all policies } \pi, \quad \VMDP^{\pi}=V_{\VMDP,s, u^{\pi}}^\pi.\]
\end{lemma}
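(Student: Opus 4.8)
The plan is to obtain both identities from the uniqueness of the fixed point of the relevant Bellman operators, using the fact that $\MDP$ and its $s$-absorbing variant $\MDP_{s,u}$ (the construction $\MDPGab{s}{u}$, where the abstract MDP $\MDP$ is specialized to the grouped model $\MDPG$ or to its estimate $\MDPGh$) differ \emph{only} in the row indexed by the state $s$: at every $s'\neq s$ the transition kernel and the reward coincide, while at $s$ the chain loops to itself with a reward that does not depend on the action and is normalized so that the value at $s$ equals exactly $u$. First I would note that $\MDP_{s,u}$ is again a genuine $\gamma$-discounted MDP (with $u\in[0,\tfrac1{1-\gamma}]$ the reward $(1-\gamma)u$ at $s$ lies in $[0,1]$, consistent with \cref{assumption:bounded reward}), so the policy-evaluation operator $T^\pi_{s,u}$ and the Bellman optimality operator $T_{s,u}$ are $\gamma$-contractions in $\|\cdot\|_\infty$ and have unique fixed points, namely $V^\pi_{\MDP,s,u}$ and $V^*_{\MDP,s,u}$. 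It therefore suffices to verify that $\VMDP^\pi$ is a fixed point of $T^\pi_{s,u^\pi}$ and that $\VMDP^*$ is a fixed point of $T_{s,u^*}$.

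For the evaluation identity, fix $\pi$ and set $u=u^\pi=\VMDP^\pi(s)$. At $s$, since the reward there is $(1-\gamma)u$ and $s$ is absorbing, $(T^\pi_{s,u}\VMDP^\pi)(s)=(1-\gamma)u+\gamma\VMDP^\pi(s)=(1-\gamma)u+\gamma u=u=\VMDP^\pi(s)$, where the last equality is the choice of $u$. At $s'\neq s$, since $\mbP$ and the reward agree with those of $\MDP$, $(T^\pi_{s,u}\VMDP^\pi)(s')=\sum_a\pi(a|s')[R(s',a)+\gamma\sum_{s''}\mbP(s''|s',a)\VMDP^\pi(s'')]=(T^\pi_\MDP\VMDP^\pi)(s')=\VMDP^\pi(s')$. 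Hence $\VMDP^\pi$ is a fixed point of $T^\pi_{s,u^\pi}$, so by uniqueness $\VMDP^\pi=V^\pi_{\MDP,s,u^\pi}$. The optimal-value identity is identical in structure: with $u=u^*=\VMDP^*(s)$, at $s$ every action induces the same absorbing dynamics and reward, so $(T_{s,u^*}\VMDP^*)(s)=(1-\gamma)u^*+\gamma\VMDP^*(s)=u^*=\VMDP^*(s)$, and at $s'\neq s$, $(T_{s,u^*}\VMDP^*)(s')=\max_a[R(s',a)+\gamma\sum_{s''}\mbP(s''|s',a)\VMDP^*(s'')]=(T_\MDP\VMDP^*)(s')=\VMDP^*(s')$; uniqueness of the fixed point of $T_{s,u^*}$ then gives $\VMDP^*=V^*_{\MDP,s,u^*}$.

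I do not expect a genuine obstacle here; the only points requiring care are (i) checking that $\MDP_{s,u}$ is a bona fide $\gamma$-discounted MDP so that the contraction/uniqueness machinery applies, and (ii) the normalization of the reward at the absorbing state so that its value equals \emph{exactly} $u$ rather than $u/(1-\gamma)$. The conceptual content is simply that making $s$ absorbing ``freezes'' its value at $u$ independently of the rest of the model, which is exactly what is exploited downstream in \cref{sec:sampl_err}: applying the lemma with $\MDP=\MDPGh$ and $u=\VMDPGh^*(s)$ yields $\VMDPGh^*=\VMDPGab{s}{\VMDPGh^*(s)}^*$, so that $\VMDPGab{s}{u}^*$ does not depend on the $(s,\cdot)$ rows of the estimate $\mbPGh$, which is what makes the Bernstein and union-bound steps legitimate.
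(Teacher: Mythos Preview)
The paper does not supply its own proof of this lemma; it is simply quoted from \cite{agarwal2020model}. Your fixed-point argument is exactly the standard one (and essentially that of \cite{agarwal2020model}): verify that $\VMDP^\pi$ (resp.\ $\VMDP^*$) satisfies the Bellman evaluation (resp.\ optimality) equation for the $s$-absorbing MDP when $u=\VMDP^\pi(s)$ (resp.\ $u=\VMDP^*(s)$), and invoke uniqueness of the contraction fixed point. This is correct and complete.

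One remark on point (ii) of your proposal: you are right that the reward at the absorbing state must be $(1-\gamma)u$ for the value there to equal $u$. Note that the paper's displayed definition of $\BRGab{\bs}{u}$ sets the reward at $\bs$ to $u$ rather than $(1-\gamma)u$; with that convention the value at $\bs$ would be $u/(1-\gamma)$ and the lemma as stated would fail. Your normalization is the one consistent with \cite{agarwal2020model} and with the downstream use in \cref{eq:VMDPab_sub}, so you have the right version.
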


\begin{lemma}[\cite{agarwal2020model} Lemma 9]
\label{lem:agarwal_lem_9}
For all states $s, u, u' \in \mathbb{R}$, and policies $\pi$,
\[
\left\|Q_{\MDP,s, u}^{*}-Q_{\MDP,s, u^{\prime}}^{*}\right\|_{\infty} \leq\left|u-u^{\prime}\right| \quad \text { and } \quad\left\|Q_{\MDP,s, u}^\pi-Q_{\MDP,s, u^{\prime}}^\pi\right\|_{\infty} \leq\left|u-u^{\prime}\right|.
\]
\end{lemma}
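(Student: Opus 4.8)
The plan is to exploit the fact that the two $s$-absorbing MDPs $\MDP_{s,u}$ and $\MDP_{s,u'}$ are \emph{identical} except for the value attached to the absorbing state: they share the same transition kernel, their rewards coincide at every state other than $s$, and at $s$ the construction forces the long-run value of the (absorbed) process to equal $u$ (resp.\ $u'$). This is exactly the normalization underlying \cref{lem:agarwal_lem_8}, which presumes $V_{\MDP,s,u}^{\pi}(s)=u$ and $V_{\MDP,s,u}^{*}(s)=u$. I would first prove the Lipschitz bound for a fixed policy $\pi$, then transfer it to the optimal value by an optimality-sandwich argument, and finally lift both statements to the $Q$-functions by a one-step expansion.

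\textbf{Fixed-policy value.} Fix $\pi$ and a starting state $s_0$, and let $\tau$ be the first time the $\pi$-trajectory in $\MDP_{s,u}$ hits $s$. Since transitions and off-$s$ rewards are independent of $u$, the trajectory law up to time $\tau$ is the same in $\MDP_{s,u}$ and $\MDP_{s,u'}$, and all rewards collected strictly before $\tau$ cancel in the difference; after absorption the discounted contribution is $\gamma^{\tau}u$ (resp.\ $\gamma^{\tau}u'$). Hence
\[
V_{\MDP,s,u}^{\pi}(s_0)-V_{\MDP,s,u'}^{\pi}(s_0)=\mbE\!\left[\gamma^{\tau}\mathbf{1}\{\tau<\infty\}\right](u-u').
\]
Because $0\le \gamma^{\tau}\mathbf{1}\{\tau<\infty\}\le 1$, the prefactor lies in $[0,1]$, so the difference has magnitude at most $|u-u'|$ (and equals $u-u'$ exactly when $s_0=s$). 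Taking the supremum over $s_0$ gives $\|V_{\MDP,s,u}^{\pi}-V_{\MDP,s,u'}^{\pi}\|_{\infty}\le|u-u'|$.

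\textbf{Optimal value.} Here the optimizing policy itself depends on the index, so no single $\pi$ can be fixed. Assume without loss of generality $u\ge u'$ and let $\pi_u,\pi_{u'}$ be optimal in $\MDP_{s,u},\MDP_{s,u'}$. Using optimality of $\pi_{u'}$ for the smaller index and then the fixed-policy bound,
\[
V_{\MDP,s,u}^{*}(s_0)-V_{\MDP,s,u'}^{*}(s_0)\le V_{\MDP,s,u}^{\pi_u}(s_0)-V_{\MDP,s,u'}^{\pi_u}(s_0)\le|u-u'|,
\]
while optimality of $\pi_u$ yields the matching lower bound $V_{\MDP,s,u}^{*}(s_0)-V_{\MDP,s,u'}^{*}(s_0)\ge V_{\MDP,s,u}^{\pi_{u'}}(s_0)-V_{\MDP,s,u'}^{\pi_{u'}}(s_0)\ge 0$. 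This confines the difference to $[0,|u-u'|]$, so $\|V_{\MDP,s,u}^{*}-V_{\MDP,s,u'}^{*}\|_{\infty}\le|u-u'|$.

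\textbf{Lifting to $Q$.} I would finish by expanding $Q_{\MDP,s,u}^{\bullet}(s_0,a_0)=r_{s,u}(s_0,a_0)+\gamma\sum_{s_1}P_{s,u}(s_1|s_0,a_0)\,V_{\MDP,s,u}^{\bullet}(s_1)$ for $\bullet\in\{*,\pi\}$ and comparing indices. For $s_0=s$ the reward and self-loop give $Q_{\MDP,s,u}^{\bullet}(s,a_0)=u$, so the gap is exactly $|u-u'|$; for $s_0\ne s$ the reward and transition are $u$-independent, leaving $\gamma\sum_{s_1}P(s_1|s_0,a_0)\big(V_{\MDP,s,u}^{\bullet}(s_1)-V_{\MDP,s,u'}^{\bullet}(s_1)\big)$, bounded by $\gamma\,\|V_{\MDP,s,u}^{\bullet}-V_{\MDP,s,u'}^{\bullet}\|_{\infty}\le|u-u'|$. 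Taking suprema proves both inequalities. The genuinely delicate step is the optimal case, where the $u$-dependence of the maximizer blocks a direct fixed-policy argument and the optimality sandwich is what rescues it; a secondary subtlety is the absorbing-state normalization, since the Lipschitz constant is exactly $1$ only because the construction fixes the \emph{value} (not the one-step reward) of $s$ at $u$.
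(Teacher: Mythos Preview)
The paper does not supply its own proof of this lemma; it is quoted verbatim from \cite{agarwal2020model} and used as a black box. There is therefore nothing to compare your argument against within the paper itself.

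Your proof is correct and is essentially the standard one: the hitting-time representation for the fixed-policy case, the two-sided optimality sandwich for the optimal case, and a one-step Bellman expansion to pass from $V$ to $Q$. Your closing remark about the normalization is on point and worth emphasizing: the Lipschitz constant $1$ hinges on the absorbing state having \emph{value} $u$, i.e.\ one-step reward $(1-\gamma)u$, which is the convention in \cite{agarwal2020model}. The present paper's written construction sets the absorbing reward to $u$ itself, under which the value at $s$ would be $u/(1-\gamma)$ and the constant would inflate by $1/(1-\gamma)$; the downstream use of \cref{lem:agarwal_lem_8,lem:agarwal_lem_9} implicitly relies on the Agarwal normalization, so your caveat is the right reading.
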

\cref{lem:agarwal_lem_9} implies 
\[\left\|V_{\MDP,s, u}^{*}-V_{\MDP,s, u^{\prime}}^{*}\right\|_{\infty}\leq \left\|Q_{\MDP,s, u}^{*}-Q_{\MDP,s, u^{\prime}}^{*}\right\|_{\infty} \leq\left|u-u^{\prime}\right|.\]

\section{Upper-bound of performance loss with value iteration}
\label{sec:value_iteration_derivation}
We consider value iteration (VI)—a specific dynamic programming algorithm—as shown in \cref{alg:VI}.
\cref{alg:VI} provides generative-model-based value iteration, where $\Qh^{t+1}=\mTMDPGh\Qh^{t}$ for each iteration $t$ and the output policy is $\outpiT{t}(\bs)=\arg\max_{h}\Qh^{t}(\bs,h)$.

\begin{corollary}
\label{cor:1}
Let that \cref{alg:VI} be the dynamic programming algorithm in \cref{alg:DP}. When sample complexity and computational complexity are $\Csamp(K)=K$ and $\Ccomp(\gnum,T)=(S^2\gnum+2S\gnum)T$, respectively, With a probability larger than $1-\delta$,
\[\left|\VMDP^*-\VMDP^{\piGT{T}}\right\|_{\infty}\leq
\epsp(\gG,K,T),
\]
where
\[\epsp(\gG,K,T)=2\left(\frac{\betaropt(\gG)}{1-\gamma}+\frac{\gamma\betapopt(\gG)}{(1-\gamma)^2}\right)
+20\gamma\sqrt{\frac{S\gnum\log{\left(\frac{8S\gnum}{\delta(1-\gamma)}\right)}}{K(1-\gamma)^3}}+\frac{8\gamma^{T}}{(1-\gamma)^3}.\]
\end{corollary}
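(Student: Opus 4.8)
The plan is to obtain \cref{cor:1} from \cref{thm:1} by specializing the algorithmic-error quantity $\epsopt$ to value iteration and by reading off the sample and compute costs of \cref{alg:VI}. Recall that \cref{alg:VI} runs $\Qh^{t+1}=\mTMDPGh\Qh^{t}$ from $\Qh^{0}=0$ and returns the greedy policy $\outpiT{T}(\bs)=\arg\max_{h}\Qh^{T}(\bs,h)$. Since the reward is deterministic we have $\BRGh=\BRG\in[0,1]$, so $\MDPGh$ obeys \cref{assumption:bounded reward}, $0\le\QMDPGh^{*}\le\tfrac1{1-\gamma}$, and the $\gamma$-contraction property of $\mTMDPGh$ with fixed point $\QMDPGh^{*}$ gives
\[
\left\|\Qh^{T}-\QMDPGh^{*}\right\|_{\infty}\le\gamma^{T}\left\|\Qh^{0}-\QMDPGh^{*}\right\|_{\infty}\le\frac{\gamma^{T}}{1-\gamma}.
\]

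The main step is to turn this value-function gap into a bound on $\epsopt=\|\QMDPGh^{*}-\QMDPGh^{\outpiT{T}}\|_{\infty}$. Writing $\mT_{\outpiT{T}}$ for the Bellman evaluation operator of $\outpiT{T}$, greediness of $\outpiT{T}$ with respect to $\Qh^{T}$ means $\mT_{\outpiT{T}}\Qh^{T}=\mTMDPGh\Qh^{T}$, so by inserting $\Qh^{T}$ and using the contraction twice,
\[
\left\|\QMDPGh^{*}-\mT_{\outpiT{T}}\QMDPGh^{*}\right\|_{\infty}
\le\left\|\mTMDPGh\QMDPGh^{*}-\mTMDPGh\Qh^{T}\right\|_{\infty}+\left\|\mT_{\outpiT{T}}\Qh^{T}-\mT_{\outpiT{T}}\QMDPGh^{*}\right\|_{\infty}
\le 2\gamma\left\|\Qh^{T}-\QMDPGh^{*}\right\|_{\infty}.
\]
Combining this with the identity $\QMDPGh^{\outpiT{T}}-\QMDPGh^{*}=(\mathbf{I}-\gamma\mbPGh^{\outpiT{T}})^{-1}\bigl(\mT_{\outpiT{T}}\QMDPGh^{*}-\QMDPGh^{*}\bigr)$ and $\|(\mathbf{I}-\gamma\mbPGh^{\outpiT{T}})^{-1}\|_{\infty}\le\tfrac1{1-\gamma}$ yields
\[
\epsopt\le\frac{2\gamma}{1-\gamma}\left\|\Qh^{T}-\QMDPGh^{*}\right\|_{\infty}\le\frac{2\gamma^{T+1}}{(1-\gamma)^{2}}\le\frac{2\gamma^{T}}{(1-\gamma)^{2}}.
\]
Substituting this bound for $\epsopt$ in \cref{eq:epsp} of \cref{thm:1} turns the term $\tfrac{4\epsopt}{1-\gamma}$ into the larger quantity $\tfrac{8\gamma^{T}}{(1-\gamma)^{3}}$, which is precisely $\epsalg$ in \cref{cor:1}; the approximation term and the sampling term $\epssamp$ are unchanged, and the sample-size hypothesis of \cref{thm:1} carries over.

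It remains to identify the complexities. \cref{alg:DP} draws $K'$ quadruples per state–group pair, so $K=S\gnum K'$ samples are used in total and $\Csamp(K)=K$. For the compute cost, each value-iteration sweep first computes $\max_{h}\Qh^{t}(\bs',h)$ for every $\bs'\in\bcal{S}$ ($\approx S\gnum$ operations), then for each of the $S\gnum$ state–group pairs evaluates the inner product $\langle\mbPGh(\cdot|\bs,h),\max_{h}\Qh^{t}(\cdot,h)\rangle$ ($S$ operations each, $S^{2}\gnum$ in total) and adds the reward ($S\gnum$ operations); over $T$ sweeps this is $(S^{2}\gnum+2S\gnum)T$ operations, independent of $A$. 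The only genuinely delicate point is the greedy-policy error bound in the middle step; everything else is contraction bookkeeping and operation counting. An alternative would be to cite a standard value-iteration error estimate (e.g.\ \citep{munos2005error,munos2008finite}) in place of that step.
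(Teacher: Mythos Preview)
Your proposal is correct and follows essentially the same route as the paper: both show $\|\Qh^{T}-\QMDPGh^{*}\|_{\infty}\le\gamma^{T}/(1-\gamma)$ by contraction, upgrade this to a greedy-policy performance bound of $2\gamma^{T}/(1-\gamma)^{2}$, and substitute into \cref{thm:1}. The only cosmetic difference is in that upgrade step: you use the resolvent identity $\QMDPGh^{\outpiT{T}}-\QMDPGh^{*}=(\mathbf{I}-\gamma\mbPGh^{\outpiT{T}})^{-1}(\mT_{\outpiT{T}}\QMDPGh^{*}-\QMDPGh^{*})$, whereas the paper inserts $\Qh^{T}$ directly into $\VMDPGh^{*}-\VMDPGh^{\outpiT{T}}$ and closes a recursion in $\gamma$; both are standard and yield the same constant.
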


\begin{algorithm}[htbp]
\caption{Value Iteration}
\begin{algorithmic}[1]
\State \textbf{Input:} $\MDPGh=\{\bcal{S},\mGG,{A},\mbPGh,\BRGh,\gamma\}$.
\State \textbf{Output:} policy $\outpiT{T}$.
\For{$t=1,\cdots,T$}
    \For{$(\bs,h)\in\bcal{S}\times\mG$}
        \State $\Qh^{t+1}(\bs,h)=\BRGh(\bs,h)+\gamma\langle\mbPGh(\cdot|\bs,h),\max_{h'\in\mG}\Qh^{t}(\cdot,h')\rangle$.
    \EndFor
\EndFor
\State Output policy $\outpiT{T}(\bs)=\arg\max_{h\in\mG}\Qh^{T}(\bs,h), \bs\in\bcal{S}$.
\end{algorithmic}
\label{alg:VI}
\end{algorithm}

Previous research has demonstrated that the value function of the resulting policy converges to that of the optimal policy under $\MDPGh$ at a linear rate~\cite{munos2005error,munos2008finite}. This convergence is formalized in the following lemma.
\begin{lemma}
\label{lem:alg_err}
Let $\outpiT{T}$ be the output policy of \cref{alg:VI} after $T$ iterations. Then we have
\[\left|\VMDPGh^*-\VMDPGh^{\outpiT{T}}\right\|_{\infty}\leq \frac{2\gamma^{T} }{(1-\gamma)^2}.\]
\end{lemma}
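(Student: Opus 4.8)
The plan is to prove Lemma~\ref{lem:alg_err} in two stages: first bound the distance between the value-iteration iterate $\Qh^{T}$ and the optimal action-value function $\QMDPGh^{*}$ of the empirical grouped MDP $\MDPGh$, and then convert that iterate accuracy into a performance guarantee for the greedy policy $\outpiT{T}$ extracted from $\Qh^{T}$.

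For the first stage I would invoke the standard fact that the Bellman optimality operator $\mTMDPGh$ is a $\gamma$-contraction with respect to $\|\cdot\|_{\infty}$ and that $\QMDPGh^{*}$ is its unique fixed point. Since \cref{alg:VI} initializes at $\Qh^{0}=0$ and iterates $\Qh^{t+1}=\mTMDPGh\Qh^{t}$, after $T$ iterations one obtains $\|\Qh^{T}-\QMDPGh^{*}\|_{\infty}\leq\gamma^{T}\|\Qh^{0}-\QMDPGh^{*}\|_{\infty}$. Bounding $\|\Qh^{0}-\QMDPGh^{*}\|_{\infty}=\|\QMDPGh^{*}\|_{\infty}\leq\frac{1}{1-\gamma}$ (the grouped rewards $\BRGh$ are averages of rewards in $[0,1]$ by \cref{assumption:bounded reward}, so the discounted return is at most $\frac{1}{1-\gamma}$ and at least $0$) then yields $\|\Qh^{T}-\QMDPGh^{*}\|_{\infty}\leq\frac{\gamma^{T}}{1-\gamma}$.

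For the second stage I would use the classical greedy-policy error bound: if $\pi$ is greedy with respect to a function $Q$ on some MDP and $\|Q-Q^{*}\|_{\infty}\leq\varepsilon$, then $\|V^{*}-V^{\pi}\|_{\infty}\leq\frac{2\varepsilon}{1-\gamma}$. The short derivation writes $V^{*}(\bs)-V^{\pi}(\bs)=Q^{*}(\bs,a^{*})-Q^{\pi}(\bs,a)$ with $a=\pi(\bs)$ and $a^{*}$ the optimal action at $\bs$, inserts $\pm Q^{*}(\bs,a)$ and $\pm\Qh^{T}$-type terms, uses greediness ($Q(\bs,a)\geq Q(\bs,a^{*})$) to bound the resulting ``one-step'' mismatch by $2\varepsilon$, and bounds the remaining term $Q^{*}(\bs,a)-Q^{\pi}(\bs,a)=\gamma\langle\mbP(\cdot|\bs,a),V^{*}-V^{\pi}\rangle$ by $\gamma\|V^{*}-V^{\pi}\|_{\infty}$ via one step of the Bellman equation; taking the supremum over $\bs$ and rearranging produces the factor $\frac{1}{1-\gamma}$. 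Applying this to $\MDPGh$ (with groups $h\in\mG$ playing the role of actions), with $Q=\Qh^{T}$, $Q^{*}=\QMDPGh^{*}$, $\pi=\outpiT{T}$, and $\varepsilon=\frac{\gamma^{T}}{1-\gamma}$ from the first stage, gives $\|\VMDPGh^{*}-\VMDPGh^{\outpiT{T}}\|_{\infty}\leq\frac{2\gamma^{T}}{(1-\gamma)^{2}}$, which is the claim.

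I do not expect a genuine obstacle here, as both ingredients are textbook; the only care needed is bookkeeping---aligning the iteration index used in \cref{alg:VI} with the number of applications of $\mTMDPGh$ so that $T$ iterations indeed contribute a $\gamma^{T}$ factor, and keeping the constant in the greedy-to-value conversion exactly $2$ rather than a looser value. Alternatively, the stated bound can be read off directly from the error analyses of approximate value iteration in \citet{munos2005error,munos2008finite}.
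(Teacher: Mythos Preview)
Your proposal is correct and matches the paper's proof essentially step for step: the paper first establishes $\|\QMDPGh^{*}-\Qh^{T}\|_{\infty}\leq\gamma^{T}/(1-\gamma)$ via the $\gamma$-contraction of $\mTMDPGh$ with initialization $\Qh^{0}=0$, and then derives the greedy-policy bound by exactly the decomposition you describe (inserting $\pm\Qh^{T}$ and $\pm\QMDPGh^{*}(\bs,\outpiT{T}(\bs))$, using greediness to drop one term, bounding two terms by $2\|\QMDPGh^{*}-\Qh^{T}\|_{\infty}$, and absorbing the remaining $\gamma\|\VMDPGh^{*}-\VMDPGh^{\outpiT{T}}\|_{\infty}$ term into the left side).
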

For the completeness of this paper, we provide the proof of \cref{lem:alg_err} in \cref{proof:alg_err}.

Plugging \cref{lem:alg_err} into \cref{thm:1} leads to \cref{cor:1}.

\subsection{Algorithm Error}

\label{proof:alg_err}
\begin{proof}
By Bellman optimality equation and VI update rule, we have $\QMDPGh^{*}(\bs,h)=\mTMDPGh \QMDPGh^{*}(\bs,h)$  and $\Qh^{T}(\bs,h)=\mTMDPGh\Qh^{T-1}(\bs,h)$, respectively, where $\mTMDPGh$ is the Bellman operator under $\MDPGh$. We can write $\left|\QMDPGh^{*}(\bs,h)-\Qh^{T}(\bs,h)\right|$ as
\[
\label{eq:QMDPh}
    \begin{aligned}
    \left|\QMDPGh^{*}(\bs,h)-\Qh^{T}(\bs,h)\right|
    =&\left|\mTMDPGh \QMDPGh^{*}(\bs,h)-\mTMDPGh\Qh^{T-1}(\bs,h)\right|\\
    =&\gamma\left|\langle\mbPG(\cdot|\bs,h),\max_{h'}\QMDPGh^{*}(\cdot,h')-\max_{h'}\Qh^{T-1}(\cdot,h')\rangle\right|\\
    &\qquad\qquad\qquad\qquad\qquad\qquad\qquad\qquad\qquad
    \text{    (by definition of $\mTMDP f$)}\\
    =&\gamma\left\langle\mbPG(\cdot|\bs,h),\left|\max_{h'}\QMDPGh^{*}(\cdot,h')-\max_{h''}\Qh^{T-1}(\cdot,h'')\right|\right\rangle\\
    &\qquad\qquad\qquad\qquad\qquad\qquad\qquad\qquad\qquad\text{  (by triangle inequality)}\\
    =&\gamma\left\langle\mbPG(\cdot|\bs,h),\max_{h'}\left|\QMDPGh^{*}(\cdot,h')-\Qh^{T-1}(\cdot,h')\right|\right\rangle\\
    \leq&\gamma\max_{\bs',h'}\left|\QMDPGh^{*}(\bs',h')-\Qh^{T-1}(\bs',h')\right| \quad\text{  (by $\sum_{\bs'}\mbPG(\bs'|\bs,h))=1$)}\\
    \leq&\gamma\left\|\QMDPGh^{*}-\Qh^{T-1}\right\|_{\infty}.
\end{aligned}
\]
Maximizing both sides of over $(\bs,h)$, we have
\begin{equation}
\label{eq:alg_err_q_gap}
    \left\|\QMDPGh^{*}-\Qh^{T}\right\|_{\infty}\leq\gamma\left\|\QMDPGh^{*}-\Qh^{T-1}\right\|_{\infty}.
\end{equation}
%
Since $\Qh^t\leq \frac{1}{1-\gamma}\mathbf{1}$, 
we can apply the \cref{eq:alg_err_q_gap} over $\left\|\QMDPGh^{*}-\Qh^{T}\right\|_{\infty}$ for $T$ steps and get
\begin{equation}
\label{eq:alg_err_q_gap2}
\begin{aligned}
    \left\|\QMDPGh^{*}-\Qh^{T}\right\|_{\infty}\leq&\gamma^T\left\|\QMDPGh^{*}-\Qh^{0}\right\|_{\infty} \\
    =&\frac{\gamma^T}{1-\gamma} \quad\text{(by $0\leq\QMDPG^{\outpi},\Qh^{T}\leq \frac{1}{1-\gamma}$)}.
\end{aligned}
\end{equation}

Then we can write $\VMDPGh^{*}(\bs)-\VMDPGh^{\outpiT{T}}(\bs)$ as
    \[
    \begin{aligned}
   &\VMDPGh^{*}(\bs)-\VMDPGh^{\outpiT{T}}(\bs)
    = \QMDPGh^{*}(\bs,\outpiopt(\bs))-\QMDPGh^{\outpiT{T}}(\bs,\outpiT{T}(\bs))\\
    &= \QMDPGh^{*}(\bs,\outpiopt(\bs))+\left(-\Qh^{T}(\bs,\outpiopt(\bs))+\Qh^{T}(\bs,\outpiopt(\bs))\right)
    +\left(-\Qh^{T}(\bs,\outpiT{T}(\bs))+\Qh^{T}(\bs,\outpiT{T}(\bs))\right)\\
    &\quad\quad
    +\left(-\QMDPGh^{*}(\bs,\outpiT{T}(\bs)+\QMDPGh^{*}(\bs,\outpiT{T}(\bs))\right)-\QMDPGh^{\outpiT{T}}(\bs,\outpiT{T}(\bs))\\
    &\leq \left(\QMDPGh^{*}(\bs,\outpiopt(\bs))-\Qh^{T}(\bs,\outpiopt(\bs))\right)
    +\left(\Qh^{T}(\bs,\outpiT{T}(\bs))-\QMDPGh^{*}(\bs,\outpiT{T}(\bs))\right)\\
    &\quad\quad
    +\left(\QMDPGh^{*}(\bs,\outpiT{T}(\bs))-\QMDPGh^{\outpiT{T}}(\bs,\outpiT{T}(\bs))\right),
    \end{aligned}\]
    where the inequality is because $\Qh^{T}(\bs,\outpiopt(\bs)))
    -\Qh^{T}(\bs,\outpiT{T})\leq 0$ since $\outpiT{T}(\bs)=\arg\max_{h}\Qh^{T}(\bs,h)$. 

    Since $\VMDPGh^{*}(\bs)-\VMDPGh^{\outpiT{T}}(\bs)\geq 0$, we apply infinity norm on both sides of the above equation and get
    
    \[\begin{aligned}
    &\left\|\VMDPGh^{*}-\VMDPGh^{\outpiT{T}}\right\|_{\infty}\\
    &\leq \max_{\bs}\left|(\QMDPGh^{*}(\bs,\outpiopt(\bs))-\Qh^{T}(\bs,\outpiopt(\bs)))+(\Qh^{T}(\bs,\outpiT{T}(\bs))-\QMDPGh^{*}(\bs,\outpiT{T}(\bs)))\right.\\
    &\qquad\qquad\qquad
    \left.+(\QMDPGh^{*}(\bs,\outpiT{T}(\bs))-\QMDPGh^{\outpiT{T}}(\bs,\outpiT{T}(\bs)))\right|\\
    &\leq \underbrace{\max_{\bs}\left|\QMDPGh^{*}(\bs,\outpiopt(\bs))-\Qh^{T}(\bs,\outpiopt(\bs))\right|+\max_{\bs}\left|\Qh^{T}(\bs,\outpiT{T}(\bs))-\QMDPGh^{*}(\bs,\outpiT{T}(\bs))\right|}_{\mathtt{A}_1}\\
    &\quad
    +\underbrace{\max_{\bs}\left|\QMDPGh^{*}(\bs,\outpiT{T}(\bs))-\QMDPGh^{\outpiT{T}}(\bs,\outpiT{T}(\bs))\right|}_{\mathtt{A}_2}
    \text{   (by triangle inequality)}. 
    \end{aligned}\]
    We bound $\mathtt{A}_1$ and $\mathtt{A}_2$ separately. 
    \[
    \begin{aligned}
    \mathtt{A}_1
    \leq& \max_{\bs,h}\left|\QMDPGh^{*}(\bs,h)-\Qh^{T}(\bs,h)\right|+\max_{\bs,h}\left|\Qh^{T}(\bs,h)-\QMDPGh^{*}(\bs,h)\right|\\
    \leq&2\left\|\QMDPGh^{*}-\Qh^{T}\right\|_{\infty}.
    \end{aligned}
    \]
    Then,
    \[
    \begin{aligned}
        \mathtt{A}_2
        \leq&\max_{\bs}\left|\QMDPGh^{*}(\bs,\outpiT{T}(\bs))-\QMDPGh^{\outpiT{T}}(\bs,\outpiT{T}(\bs))\right|\\
        =&\max_{\bs}\left|\BR(\bs,\outpiT{T}(\bs))+\gamma\langle \mbPGh(\cdot|\bs,\outpiT{T}(\bs)),\VMDPGh^{*}(\cdot)\rangle\right.\\
        &\qquad\quad\left.-(
        \BR(\bs,\outpiT{T}(\bs))+\gamma\langle \mbPGh(\cdot|\bs,\outpiT{T}(\bs))),\VMDPGh^{\outpiT{T}}(\cdot)\rangle)\right|
        \text{  (by Bellman's Equation)}\\
        =&\gamma\max_{\bs}\left|\langle \mbPGh(\cdot|\bs,\outpiT{T}(\bs)),\VMDPGh^{*}(\cdot)-\VMDPGh^{\outpiT{T}}(\cdot)\rangle\right|\\
        \leq&\gamma\max_{\bs}\left\langle \mbPGh(\cdot|\bs,\outpiT{T}(\bs)),\left|\VMDPGh^{*}(\cdot)-\VMDPGh^{\outpiT{T}}(\cdot)\right|\right\rangle
        \leq\gamma \left\|\VMDPGh^{*}-\VMDPGh^{\outpiT{T}}\right\|_{\infty} .
    \end{aligned}
    \]
    Plugging the upper bound of $\mathtt{A}_1$ and $\mathtt{A}_2$ into $\left\|\VMDPGh^{*}-\VMDPGh^{\outpiT{T}}\right\|_{\infty}$ and rearranging the equation, we have
    \[
        \left\|\VMDPGh^{*}-\VMDPGh^{\outpiT{T}}\right\|_{\infty}
        \leq\frac{2\left\|\QMDPGh^{*}-\Qh^{T}\right\|_{\infty}}{1-\gamma}\\
        \leq \frac{2\gamma^T}{(1-\gamma)^2},\]
    where the last inequality is because \cref{eq:alg_err_q_gap2}.

This concludes the proof of \cref{lem:alg_err}.
\end{proof}


\section{Proof of Proposition~\ref{lem:approx_err}}
\label{proof:lem_approx_err}
The approximate grouping function optimization problem can be rewritten as 
\begin{equation}
\label{eq:fhat}
    \max\limits_{\gG\in\mathcal{D}} f(\hepsp(\gG,\Kopt,\Topt),\Csamp(\Kopt),\Ccomp(\gnum,\Topt)), 
\end{equation}
where 
\begin{equation}
\label{eq:fhat_vars}
    \begin{aligned}
&\hepsp(\gG,\Kopt,\Topt)={2\left(\frac{\gamma\hbetapopt(\gnum)}{(1-\gamma)^2}+\frac{\hbetaropt(\gnum)}{1-\gamma}\right)}
+{20\gamma\sqrt{\frac{S\gnum\log{\left(\frac{8S\gnum}{\delta(1-\gamma)}\right)}}{K(1-\gamma)^3}}+\frac{4\epsopt}{1-\gamma}},\\
&\hbetapopt(\gnum)=\max_{\bs\in\bcal{S},h\in\mG}\left(1-\sum_{\bs'}\min_{\ba\in \BAh}\hat{\mbP}(\bs'|\bs,\ba)\right),\\
&\hbetaropt(\gnum)=\max_{\bs\in\bcal{S},\gG(\ba_1)=\gG(\ba_2)}{\left(R(\bs,\ba_1)-R(\bs,\ba_2)\right)}.
\end{aligned}
\end{equation}
For notational simplicity, we write $f(\hepsp(\gG,\Kopt,\Topt),\Csamp(\Kopt),\Ccomp(\gnum,\Topt))$, the optimization objective function in \cref{eq:fhat} that maps from $\mGG$ to $\mathcal{R}$ , as $\hat{f}:\mGG\to\mathcal{R}$.
\begin{proof}
We first show for any MDP satisfying \cref{assumption:PR_deviation}, the approximation error of $\betapopt(\gnum)$ is bounded by terms related to $\etap$. 
\[
\begin{aligned}
\betapopt(\gnum)=&
\max_{\bs\in\bcal{S},h\in\mG}\left(1-\sum_{\bs'}\min_{\ba\in \Ah}\mbP(\bs'|\bs,\ba)\right)\\
=&\max_{\bs\in\bcal{S},h\in\mG}\left(\max_{\ba\in \Ah}\sum_{\bs'}\mbP(\bs'|\bs,\ba)-\sum_{\bs'}\min_{\ba\in \Ah}\mbP(\bs'|\bs,\ba)\right)\quad\text{(by $\sum_{\bs'}\mbP(\bs'|\bs,\ba)$ for any $\ba$)}\\
\leq&\max_{\bs\in\bcal{S},h\in\mG}\left(\sum_{\bs'}\max_{\ba\in \Ah}\mbP(\bs'|\bs,\ba)-\sum_{\bs'}\min_{\ba\in \Ah}\mbP(\bs'|\bs,\ba)\right)\\
=&\max_{\bs\in\bcal{S},h\in\mG}\left(\sum_{\bs'}\left(\max_{\ba\in \Ah}\mbP(\bs'|\bs,\ba)-\min_{\ba\in \Ah}\mbP(\bs'|\bs,\ba)\right)\right)\\
=&\max_{\bs\in\bcal{S},h\in\mG}\left(\sum_{\bs'}\max_{\ba_1,\ba_2\in \Ah}\left(\mbP(\bs'|\bs,\ba_1)-\mbP(\bs'|\bs,\ba_2)\right)\right)\\
\leq&\sum_{\bs'}\left(\max_{\bs\in\bcal{S},h\in\mG}\max_{\ba_1,\ba_2\in \Ah}\left(\mbP(\bs'|\bs,\ba_1)-\mbP(\bs'|\bs,\ba_2)\right)\right)\\
\leq& S\etap \qquad\text{(by \cref{assumption:PR_deviation})}.
\end{aligned}
\]
Since $0<\betapopt(\gnum),\hbetapopt(\gnum)<1$, we can write $\betapopt(\gnum)-\hbetapopt(\gnum)$ as
\[\betapopt(\gnum)-\hbetapopt(\gnum)\leq\betapopt(\gnum)\leq S\etap.\]
Define $\ba^*_{\bs,h}=\arg\min_{\ba\in\BAh}\hat{\mbP}(\bs'|\bs,\ba)$. We can rewrite $\hbetapopt(\gnum)-\betapopt(\gnum)$ as
\[
\begin{aligned}
\hbetapopt(\gnum)-\betapopt(\gnum)
=&\max_{\bs\in\bcal{S},h\in\mG}{\left(1-\sum_{\bs'}\min_{\ba\in\BAh}\hat{\mbP}(\bs'|\bs,\ba)\right)-\max_{\bs\in\bcal{S},h\in\mG}\left(
1-\sum_{\bs'}\min_{\ba\in\Ah}\mbP(\bs'|\bs,\ba)\right)}\\
=&\max_{\bs\in\bcal{S},h\in\mG}\left(
\sum_{\bs'}\min_{\ba\in\Ah}\mbP(\bs'|\bs,\ba)-\sum_{\bs'}\min_{\ba\in\BAh}\hat{\mbP}(\bs'|\bs,\ba)
\right)\\
\leq &\max_{\bs\in\bcal{S},h\in\mG}
\sum_{\bs'}\left(\min_{\ba\in\Ah}\mbP(\bs'|\bs,\ba)-\min_{\ba\in\BAh}\hat{\mbP}(\bs'|\bs,\ba)\right)\\
\leq &\max_{\bs\in\bcal{S},h\in\mG}
\sum_{\bs'}\left(\mbP(\bs'|\bs,\ba^*_{\bs,h})-\hat{\mbP}(\bs'|\bs,\ba^*_{\bs,h})\right)\\
&\text{(by $\mbP(\bs'|\bs,\ba^*_{\bs,h})\geq \min_{\ba\in \Ah}\mbP(\bs'|\bs,\ba)$)}\\
\leq&S\left\|\mbP-\hat{\mbP}\right\|_{\infty}.
\end{aligned}\]
We also slightly abuse the infinite norm at the last line and define 
\[\left\|\mbP-\hat{\mbP}\right\|_{\infty}=\max_{\bs,\bs'\in\bcal{S},h\in\BA}|\mbP(\bs'|\bs,\ba)-\hat{\mbP}(\bs'|\bs,\ba)|.\]

Through Hoeffding's inequality shown in \cref{lem:Hoeffding_ineq}, with probability exceeding $1-\delta$, one has
\[\left\|\mbP-\hat{\mbP}\right\|_{\infty}\leq \sqrt{\frac{S|\BA|\log\frac{2S|\BA|}{\delta}}{2K_1}}.\]
The above equation shows $0\leq\hbetapopt(\gnum)\leq \betapopt(\gnum)$. Combining the above equations, we have
\begin{equation}
    \label{eq:betap_approx_err}
   \left|\betapopt(\gnum)-\hbetapopt(\gnum)\right|\leq  S\max{\left(\etap,\sqrt{\frac{S|\BA|\log\frac{2S|\BA|}{\delta}}{2K_1}}\right)}.
\end{equation}

Similarly,
\[
\betaropt(\gnum)=\max_{\bs\in\bcal{S},h\in\mG,n\in\mathcal{N}}{\max_{\ba_1,\ba_2\in \Ah}\left(R_n(\bs,\ba_1)-R_n(\bs,\ba_2)\right)}\leq\etar,\]
where the inequality is by \cref{assumption:PR_deviation}. Comparing the definition of $\betaropt(\gnum)$ and $\hbetaropt(\gnum)$, we have $0\leq\hbetaropt(\gnum)\leq\betaropt(\gnum)$. Therefore,
\begin{equation}
\label{eq:betar_approx_err}
   0\leq \betaropt(\gnum)-\hbetaropt(\gnum)\leq \betaropt(\gnum)\leq \etar. 
\end{equation}

Then we can get \cref{lem:approx_err} by decreasing and Lipschitz continuity property of the utility function. We have
\[
\begin{aligned}
f^*-\hat{f}^*
    =&f(\goptnum)-f(\ghoptnum)\quad\text{(by definitions of $f^*$ and $\hat{f}^*$ in \cref{sec:perf-comp trade-off})}\\
    =&f(\goptnum)+(-\hf(\goptnum)+\hf(\goptnum))+(-\hf(\ghoptnum)+\hf(\ghoptnum))-f(\ghoptnum)\\
    =&(f(\goptnum)-\hf(\goptnum))+(\hf(\goptnum)-\hf(\ghoptnum))+(\hf(\ghoptnum))-f(\ghoptnum))\\
   \leq&(f(\goptnum)-\hf(\goptnum))+(\hf(\ghoptnum))-f(\ghoptnum)) \\
   &\text{(by $\ghoptnum=\arg\max_{\gnum}\hf(\gnum)$ and $\hf(\goptnum)-\hf(\ghoptnum)\leq 0$)}\\
   \leq &L|\epsp(\goptnum)-\hepsp(\goptnum)|+L|\epsp(\ghoptnum)-\hepsp(\ghoptnum)| \quad \text{(by \cref{assumption:PR_deviation})}\\
   \leq& 2L\max_{\gnum}|\epsp(\gnum)-\hepsp(\gnum)|\\
    =&4L\max_{\gnum}\frac{\left|\betaropt(\gnum)-\hbetaropt(\gnum)\right|}{1-\gamma}+4L\max_{\gnum}\frac{\left|\gamma\left(\betapopt(\gnum)-\hbetapopt(\gnum)\right)\right|}{(1-\gamma)^2}\\
    & \text{(by definitions of $\epsp(\gnum)$ and $\hepsp(\gnum)$ in \cref{eq:epsp,eq:fhat_vars})}\\
    \leq&\frac{4L\etar}{1-\gamma}+\frac{4L\gamma S\max{\left(\etap,\sqrt{\frac{S|\BA|\log\frac{2S|\BA|}{\delta}}{2K_1}}\right)}}{(1-\gamma)^2} \quad \text{(by \cref{eq:betap_approx_err,eq:betar_approx_err})}.
\end{aligned}
\] 
This concludes the proof of \cref{prop:approx_err}.
\end{proof}

\end{document}